\newtheorem{lemma}{Lemma}
\newtheorem{theorem}{Theorem}
\newtheorem{definition}{Definition}
\newtheorem{prop}{Proposition}
\DeclarePairedDelimiterX\brk[2]{\langle}{\rangle}{#1\,,\,#2} 
\DeclarePairedDelimiterX\Set[2]{\{}{\}}{#1 \;;\; #2} 
\newcommand{\argmax}{\operatornamewithlimits{arg\,max}}
\newcommand{\argmin}{\operatornamewithlimits{arg\,min}}
\newcommand\independent{\protect\mathpalette{\protect\independenT}{\perp}}
\def\independenT#1#2{\mathrel{\rlap{$#1#2$}\mkern2mu{#1#2}}}
\newcommand\indep\independent
\newcommand{\EE}{\mathbb{E}}
\newcommand{\NN}{\mathbb{N}}
\newcommand{\PP}{\mathbb{P}}
\newcommand{\RR}{\mathbb{R}}
\newcommand{\Bcal}{\mathcal{B}}
\newcommand{\Ecal}{\mathcal{E}}
\newcommand{\Fcal}{\mathcal{F}}
\newcommand{\Gcal}{\mathcal{G}}
\newcommand{\Hcal}{\mathcal{H}}
\newcommand{\Ical}{\mathcal{I}}
\newcommand{\Pcal}{\mathcal{P}}
\newcommand{\Bal}{\begin{align}}
\newcommand{\Eal}{\end{align}}
\newcommand{\Beq}{\begin{equation}}
\newcommand{\Eeq}{\end{equation}}
\newcommand{\Bit}{\begin{itemize}}
\newcommand{\Eit}{\end{itemize}}
\newcommand{\Ben}{\begin{enumerate}}
\newcommand{\Een}{\end{enumerate}}
\newcommand{\Ba}{\begin{array}}
\newcommand{\Ea}{\end{array}}
\newcommand{\Bvec}{\left(\begin{array}{c}}
\newcommand{\Evec}{\end{array}\right)}
\newcommand{\Bmat}{\left(\begin{array}}
\newcommand{\Emat}{\end{array}\right)}
\newcommand{\Bol}{\begin{outline}}
\newcommand{\Eol}{\end{outline}}
\begin{document}
%
\runningtitle{Minimax Optimal Good Arm Identification for Nonparametric Multi-Armed Bandits}

%

\twocolumn[\aistatstitle{Reward Maximization for Pure Exploration: Minimax Optimal Good Arm Identification for Nonparametric Multi-Armed Bandits}
\aistatsauthor{Brian Cho \And Dominik Meier \And  Kyra Gan \And Nathan Kallus}
\aistatsaddress{ Cornell Tech \And  Cornell Tech \And Cornell Tech \And Cornell Tech} ]

\begin{abstract}
    In multi-armed bandits, the tasks of reward maximization and pure exploration are often at odds with each other. The former focuses on exploiting arms with the highest means, while the latter may require constant exploration across all arms. In this work, we focus on \emph{good arm identification} (GAI), a practical bandit inference objective that aims to label arms with means above a threshold as quickly as possible. We show that GAI can be efficiently solved by combining a \emph{reward-maximizing} sampling algorithm with a novel nonparametric anytime-valid sequential test for labeling arm means. We first establish that our sequential test maintains error control under highly nonparametric assumptions and asymptotically achieves the minimax optimal $e$-power, a notion of power for anytime-valid tests. Next, by pairing regret-minimizing sampling schemes with our sequential test, we provide an approach that  achieves minimax optimal stopping times for labeling arms with means above a threshold, under an error probability constraint $\delta$. Our empirical results validate our approach beyond the minimax setting, reducing the expected number of samples for all stopping times by at least 50\% across both synthetic and real-world settings.
\end{abstract}

\section{Introduction}
The multi-armed bandit (MAB) framework is a canonical model for sequential decision-making under uncertainty.
In MAB, the learner selects from
a finite set of actions (\emph{arms}), often characterized by the expected value of their \emph{rewards}.
Common objectives in MAB include  maximizing the expected cumulative rewards \citep{slivkins2024introductionmultiarmedbandits} and 
pure exploration \citep{bubeck2010pure}.
The former aims to specify a sampling policy that minimizes \emph{regret}, 
the difference between expected realized
and optimal
rewards.
The latter often focuses on inferring arm mean properties, e.g.,
identifying the arm with the highest mean in
best arm identification \citep{Audibert2010BestAI}, 
or labeling arm means as above/below a threshold in
threshold identification (THR, \citealt{locatelli2016optimal}).
Unlike reward maximization, 
sampling schemes for pure exploration problems disregard the cumulative rewards and focus solely on rapidly collecting evidence to reach a $\delta$-correct answer as quickly as possible. 

While both objectives have been studied separately, there is growing interest in studying them
together. In settings such as education and health \citep{trading_off_healthcare_education}, users often participate in adaptive studies to 
improve their outcomes, aligning with the reward maximization objective.
In contrast, the experimenter running the study often wants reliable inference on the set of available actions, aligning with pure exploration objectives. 
While these objectives are often at odds, this need not be the case depending on the pure exploration objective at hand. 

In this work, we focus on the pure exploration problem of \emph{good arm identification} (GAI) \citep{kano2019good} in the setting of nonparametric, bounded bandits. We call an arm "good" if its mean is above a pre-specified threshold value that represents some level of satisfactory performance. GAI seeks to identify arms above the threshold as quickly as possible, corresponding to many practical use-cases. For example, in medical settings where acquiring patients is costly \citep{medical_example, Genovese863}, the experimenter may want to find some treatment with satisfactory effect as quickly as possible rather than  classifying all considered treatments as satisfactory/unsatisfactory (i.e., THR). 


We provide an approach for GAI that leverages horizon-free regret-minimizing algorithms \citep{moss_anytime} as a subroutine with a novel anytime-valid (AV) test for labeling arms as good or bad. Under nonparametric assumptions that only require mean stationarity, our approach guarantees nonasymptotic, $\delta$-level error control for any arm label being incorrect. Additionally, we prove that our approach asymptotically achieves the minimax optimal stopping time for (i) labeling any number of good arms, and (ii) labeling all arms as above/below the threshold. In the case of identifying one good arm, our approach solves the pure exploration question optimally while also providing \emph{maximum benefit} (e.g., providing good treatments at an optimal rate to study participants).

\paragraph{Contributions} 

Our contributions are two-fold. First, we provide a novel $e$-process-based sequential AV test based on the Bernoulli likelihood ratio. Our approach provides time-uniform Type I error control for nonparametric arm distributions, even under mild nonstationarity. We show our test asymptotically achieves minimax $e$-power \citep{vovk2024efficiency}, a measure of power in the sequential testing setting. Second, we pair our test with a modified version of regret-optimal reward maximization schemes. We demonstrate our approach achieves the asymptotically optimal stopping time for identifying any number of good arms. When seeking to identify one good arm as quickly as possible, our sampling scheme reduces to a horizon-free regret-optimal sampling strategy, aligning pure exploration with regret minimization.


\paragraph{Outline} In Section \ref{sec_2:related_work}, we review related work, and we provide our bandit setting, in Section \ref{sec_3:prob_formulation}. Section \ref{sec_4:av_stopping_rules} introduces our novel sequential AV test and its guarantees. In Section \ref{sec_5:sampling_scheme}, we provide our approach for GAI, and prove key theoretical properties regarding its stopping time. In Section \ref{sec_7:empirical_results}, we provide extensive simulations on both synthetic and semi-synthetic data.

\section{Related Work}
\label{sec_2:related_work}
We contextualize our work by discussing (i) pure exploration bandit problems, (ii) AV tests for pure-exploration bandit problems, and (iii)  works discussing tradeoffs between inference and reward maximization.  

\paragraph{Pure Exploration Bandit Problems} Pure exploration bandit problems can be divided into two distinct categories: \emph{fixed-confidence} \citep{garivier2016optimal} and \emph{fixed-budget}  \citep{locatelli2016optimal} approaches. The former aims to minimize the number of samples before providing an answer under an error probability constraint. The latter aims to maximize confidence with a fixed budget of samples. We focus the fixed confidence setting, where an algorithm is defined by (i) a policy for sampling arms and (ii) stopping time(s) $\tau$ for providing an answer to the 
identification task. While existing works in this setting have focused on sampling policies \citep{kano2019good, jamieson}, they neglect the necessity of optimizing the stopping rules. These stopping rules are based on AV testing procedures, which we discuss below.

\paragraph{Anytime-Valid Sequential Tests} In the fixed-confidence setting, numerous works \citep{garivier2016optimal, kaufmann2021mixture} provide statistical guarantees by leveraging AV testing methods. These methods maintain Type I error rates over an potentially infinite experimental horizon, ensuring valid error control at data-adaptive stopping times. Existing works, however, have only shown optimality stopping time results
for well-studied parametric distributions, such as exponential families \citep{juneja2019samplecomplexitypartitionidentification}. In such cases,  the optimal sequential test is known to be based on likelihood ratio thresholding \citep{Wald1945SequentialTO}. In nonparametric cases that allow for distributional nonstationarity, optimality results are currently unavailable. For nonparametric reward distributions over bounded support, we leverage the testing-by-betting framework \citep{waudbysmith2022estimating} and $e$-processes \citep{ramdas2023gametheoretic, Vovk_2021} to propose a novel AV test. We show that our test provides Type I error control uniformly across time and obtains \emph{minimax} optimal $e$-power, the notion of statistical power for AV tests \citep{vovk2024efficiency} in the nonparametric setting.

\paragraph{Trading off Rewards and Inference} Existing works mainly focus on sampling schemes that balance reward maximization with inference on either arm means \citep{trading_off_healthcare_education} or their differences \citep{mab_experiment_design, liang2023experimental}. These works take the sample size as fixed, focus on estimation/inference on scalar quantities, and propose alternative sampling schemes that balance their objectives. Our work differs by letting our experiment size be determined by the time until we reach an answer regarding the bandit instance, using well-studied regret-optimal sampling policies with novel testing procedures, and focusing on GAI, a pure exploration objective. The closest related work to our aims is \cite{degenne2019bridginggapregretminimization}, which leverages ill-calibrated UCB algorithms for best arm identification in finite time. In contrast, our work uses tightly calibrated regret-minimizing approaches to obtain \emph{asymptotically optimal} stopping times for GAI.







\section{Problem Formulation}
\label{sec_3:prob_formulation}
We consider a nonparametric, nonstationary generalization of the classical multi-armed bandit problem with $K$ arms. At each time step $t$, the learner chooses to pull an arm  $A_t \in [K]$, where $[K] = \{1,...,K\}$. The choice of arm is defined by the sampling policy $\pi_t: (X_i, A_i)_{i=1}^{t-1} \rightarrow \Delta^K$, where $\Delta^K$ presents the probability simplex over the $K$ arms. The learner then observes feedback $X_{t} \in [0,1]$. 
We use $\Fcal_{t}$ as the canonical filtration at time $t$, i.e., $\Fcal_{t} = \sigma((A_i, X_i)_{i=1}^{t})$, with $\Fcal_0$ as the empty sigma field. 

Beyond boundedness, we assume that when conditioned on 
any possible previous observations $\Fcal_{t-1}$ and any action $A_t$ at time $t$, the conditional arm means $\bm\mu\equiv\{\mu_1, ..., \mu_K\}\in[0,1]^K$ remain constant:
$$
\forall t \in \NN, \ \forall a \in[K],
\ \ \EE[X_t|A_t=a, \Fcal_{t-1}] = \mu_a. 
$$
Let $\Pcal(\mu_a)$ be the set of all \emph{distributional sequences} on $[0,1]^\infty$ such that $\EE[X_t| A_t=a, \Fcal_{t-1}] = \mu_a$.
Then,
each arm $a$ is associated with a \emph{distribution sequence} $P_a\in \Pcal(\mu_a)$. 
This assumption covers common settings where the reward distribution is either stationary, but also includes settings where arm distributions undergo exogenous changes over time, endogenously changes based on the realized trajectory, or both. 
We assume nonparametric reward distributions, allowing for
continuous, discrete, or mixture distributions both across arms $a \in [K]$ and within an arm $a$ across time
$t$. 

Throughout the paper, we denote 
$\xi \in (0,1)$ the threshold value. We use $P \in \Pcal(\bm\mu)$ to denote the bandit instance, where $\Pcal(\bm\mu) \equiv \cap_{a \in k} \Pcal(\mu_a)$. 


\textbf{Threshold and Good Arm Identification}\;
Both problems
share similar setups, where GAI can be seen as a special case of THR.
We label an arm $a$ as good if its mean $\mu_a$ is greater than the threshold $\xi$. For any bandit instance in $\Pcal(\bm\mu)$,
we denote the set of true good arms as $\Gcal_{\bm\mu} = \{a \in [K]: \mu_a  > \xi\}$, and its complement $\Bcal_{\bm\mu} = [K]\setminus \Gcal_{\bm\mu}$ as the set of true bad arms. 

At each time $t$, the learner maintains two candidate sets: $\Gcal_t$ for good arms and $\Bcal_t$ for bad arms.
We denote $\tau_a$ as the first time arm $a$ is labeled as either good or bad, and 
$\tau_{\Gcal, i} = \inf\{t \in \NN: |\Gcal_t| = i\}$ as
the first time 
$i$ arms are labeled as good. 
Finally, let 
$\tau_{\text{stop}} = \inf\{t \in \NN: \Gcal_{t} \cup \Bcal_{t} = [K]\}$ be the first time all arms are labeled good or bad.

We focus on the \emph{fixed confidence} setting. Given a fixed error rate $\delta$,
the goal is to ensure that the probability of mislabeling \emph{any} arm across time
is
at most  $\delta$.
\begin{definition}[$\delta$-level Error Control]
\label{defn:error_control}
    We say that an algorithm provides $\delta$-level error control if the probability of mislabeling any arm at any time
    is at most $\delta$, i.e.,
    \begin{equation}
        \PP(\exists \tau_a \ s.t. \ \{\Gcal_t \not\subseteq \Gcal_{\bm\mu}\} \cup \{\Bcal_t \not\subseteq \Bcal_{\bm\mu}\}  ) \leq \delta.
    \end{equation}  
\end{definition}

THR and GAI differ by minimizing different stopping times under $\delta$-level error control. 
In THR, the goal is to label all arms as good or bad
as quickly as possible. 
This is equivalent to minimizing the expected stopping time of the overall experiment $\tau_{\text{stop}}$. In contrast, GAI presents a more difficult problem by minimizing the stopping time for each good arm identified. Given a bandit instance in $\Pcal(\bm\mu)$, let $G_{\bm\mu} = |\Gcal_{\bm\mu}|\leq K$ denote the number of good arms. GAI attempts to minimize the expected labeling time $\tau_{\Gcal, i}$ for all $i \in \{1,...G\}$ without knowing the number of good arms $G$ in advance. We include a notation table in Appendix \ref{app:notation}.

Algorithms for 
THR and GAI 
are composed of two distinct components: (i) $\Fcal_t$-measurable sequential tests used for labeling arms at each time $t$, and (ii) a sampling policy $\pi_t: (A_i, X_i)_{i < t} \rightarrow \Delta^K$.
In this work,
we provide
a novel sequential test, and
show that when paired with
popular regret-minimizing sampling scheme, 
we achieve minimax optimality in labeling times.
We first introduce our novel sequential test in Section~\ref{sec_4:av_stopping_rules} below.

\section{Anytime-Valid Good Arm Tests}
\label{sec_4:av_stopping_rules}
A natural choice for achieving 
$\delta$-level error control in
THR and GAI is to use anytime-valid sequential tests. 
For each arm $a$, AV tests control the error probability of mislabeling an arm
while allowing for repeated testing across all $t \in \NN$.
An arm $a$ is labeled as good when the chosen AV test rejects the composite hypothesis 
$\Hcal^-_a=\{\Pcal(\mu_a): \mu_a \leq \xi\}$.
Likewise, an arm $a$ is labeled as bad when 
the test rejects 
$\Hcal^+_a= \{\Pcal(\mu_a):\mu_a > \xi\}$.

In this section, we first define sequential tests based on $e$-process (Proposition~\ref{prop:delta_error_control}) and 
then define
$e$-power (Definition~\ref{defn:e_power}), the natural notion of power for the sequential setting. We then provide a universal representation for $e$-processes under our nonparametric assumptions (Definition~\ref{defn:test_martingale}). Lastly, we introduce our novel predictable plug-in sequential test that builds upon this framework (Definition~\ref{defn:predicable_maximin}), and justify our choice through minimax optimality results on $e$-power (Theorems~\ref{thm:minmax_opt_e_power} and \ref{thm:minmax_opt_gen_bern}). 

\subsection{Sequential Testing with e-Processes}
The $e$-process
is a nonnegative process that serves as a \emph{measure of evidence} against a null hypothesis set of distributions $\Hcal$ \citep{ramdas2023gametheoretic}. 
\begin{definition}[$e$-Process]\label{defn:e_process}
    We define a sequence of random variables $E = (E_t)_{t \in \NN}$ as a process if for all $t \in \NN$, $E_t$ is $\Fcal_t$-measurable. We say that the process $E$ is an $e$-process for null hypothesis set $\Hcal$ if (i) $E$ is nonnegative and (ii) $\sup_{P \in \Hcal}\sup_{\tau}\EE_P[E_\tau] \leq 1$, where first supremum is taken over all possible distributions in the null set $\Hcal$, and the second supremum is taken over all (potentially infinite) stopping times $\tau$. 
\end{definition}

The canonical example of an $e$-process in the sequential testing literature is the product of likelihood ratios \citep{Wald1945SequentialTO}, which satisfies the requirements in Definition \ref{defn:e_process}. To understand the role of the $e$-process as a running measure of evidence, we first provide the sequential test associated with $e$-processes.

\begin{prop}[Anytime-Valid Test using $e$-processes]\label{prop:delta_error_control}
    If $P \in \Hcal$, then for $e$-process $E = (E_t)_{t\in\NN}$ w.r.t. the null $\Hcal$, 
    $$\PP_P(\exists t \in \NN: E_t \geq 1/\delta) \leq \delta.$$
    The implied AV test $T_t(E, \delta) = \mathbf{1}[E_t \geq 1/\delta]$, where $T_t(E, \delta) = 1$ represents rejection of the null at time $t$, provides $\delta$-level error control across all times $t \in \NN$ (or equivalently any (potentially infinite) stopping time $\tau$). 
\end{prop}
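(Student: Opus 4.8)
The plan is to reduce the time-uniform (``$\exists t$'') statement to the single-stopping-time bound that is built directly into the definition of an $e$-process, using a first-crossing stopping time and Markov's inequality. First I would define the crossing time $\tau = \inf\{t \in \NN : E_t \geq 1/\delta\}$. This is a genuine $\Fcal_t$-stopping time: since each $E_t$ is $\Fcal_t$-measurable (so $E$ is adapted) and we work in discrete time, $\{\tau \leq t\} = \bigcup_{s \leq t}\{E_s \geq 1/\delta\} \in \Fcal_t$. The target event is then exactly $\{\exists t : E_t \geq 1/\delta\} = \{\tau < \infty\}$.

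The one genuine subtlety, which I expect to be the main (minor) obstacle, is that $E_\tau$ is only unambiguously defined on $\{\tau < \infty\}$, so I cannot plug the possibly-infinite $\tau$ directly into the defining inequality $\sup_P\sup_\tau \EE_P[E_\tau] \leq 1$. I would circumvent this by truncation: fix $n \in \NN$ and consider the bounded stopping time $\tau \wedge n$. On the event $\{\tau \leq n\}$ we have $E_{\tau \wedge n} = E_\tau \geq 1/\delta$, so $\{\tau \leq n\} \subseteq \{E_{\tau \wedge n} \geq 1/\delta\}$. Because $E_{\tau \wedge n}$ is nonnegative and $\tau \wedge n$ is a (bounded, hence finite) stopping time, Markov's inequality together with the $e$-process property $\EE_P[E_{\tau \wedge n}] \leq 1$ (which applies since $P \in \Hcal$) gives
$$
\PP_P(\tau \leq n) \;\leq\; \PP_P(E_{\tau \wedge n} \geq 1/\delta) \;\leq\; \delta\, \EE_P[E_{\tau \wedge n}] \;\leq\; \delta.
$$
Note that I invoke the $e$-process definition only for bounded stopping times, which sidesteps any question about the value of the process at infinity.

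Finally I would let $n \to \infty$. The events $\{\tau \leq n\}$ increase to $\{\tau < \infty\}$, so continuity of measure from below yields $\PP_P(\exists t : E_t \geq 1/\delta) = \PP_P(\tau < \infty) = \lim_{n} \PP_P(\tau \leq n) \leq \delta$, which is the displayed bound. The claim about the implied test $T_t(E,\delta) = \mathbf{1}[E_t \geq 1/\delta]$ then follows immediately: under any null $P \in \Hcal$ the probability of \emph{ever} rejecting is at most $\delta$, and for any data-adaptive stopping time $\tau'$ the rejection event $\{T_{\tau'} = 1\}$ is contained in $\{\exists t : E_t \geq 1/\delta\}$, so $\PP_P(T_{\tau'} = 1) \leq \delta$ as well, giving the equivalence between controlling error at all fixed times and at arbitrary stopping times. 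The only ingredients are the stopping-time formulation of the $e$-process definition, Markov's inequality, and monotone continuity of probability; crucially, no supermartingale or Ville-inequality structure is required, which is precisely why the result can be stated at the level of $e$-processes rather than test (super)martingales.
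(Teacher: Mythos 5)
Your proof is correct, but it takes a genuinely different route from the one the paper uses. The paper never derives Proposition~\ref{prop:delta_error_control} from the bare $e$-process definition: where it needs time-uniform error control (the proof of Theorem~\ref{thm:error_control_alg_2} in Appendix~\ref{proof:error_control_alg_2}), it first verifies that the relevant processes are nonnegative supermartingales and then invokes Ville's maximal inequality (Lemma~\ref{lem:ville}); to cover general $e$-processes this route would additionally pass through the domination result of \cite{ramdas2022admissible} (Lemma~\ref{lem:domination}), bounding the $e$-process almost surely by a test supermartingale and applying Ville to the dominating process. You instead work directly with the stopping-time clause of Definition~\ref{defn:e_process}: the first-crossing time $\tau=\inf\{t\in\NN: E_t\geq 1/\delta\}$, truncation to the bounded stopping time $\tau\wedge n$, Markov's inequality giving $\PP_P(\tau\leq n)\leq \delta\,\EE_P[E_{\tau\wedge n}]\leq \delta$, and continuity from below as $n\to\infty$. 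Both arguments are sound, but yours is more elementary and strictly more general: it needs no (super)martingale structure whatsoever, so it applies verbatim to $e$-processes that are not themselves supermartingales (e.g., infima of test martingales over a composite null) without any appeal to domination, and your truncation step cleanly resolves the one real subtlety --- that $E_\tau$ is undefined on $\{\tau=\infty\}$ --- which the phrase ``potentially infinite stopping times'' in Definition~\ref{defn:e_process} glosses over. What the paper's heavier machinery buys is structural information it needs elsewhere anyway: the universal supermartingale representation and the domination lemma are the backbone of the minimax argument in Theorem~\ref{thm:minmax_opt_e_power}, so the paper obtains the proposition essentially as a byproduct of tools it must develop regardless, whereas your argument isolates exactly the minimal ingredients the proposition itself requires.
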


The AV-test associated with $e$-process $E$ rejects the null set $\Hcal$ when $E$ is at least as large as $1/\delta$. Intuitively, high values of $E$ represent larger amounts of evidence against the null $\Hcal$, while small values of $E$ represent minimal evidence for rejecting $\Hcal$. In cases where the true distribution $P \not\in \Hcal$, one would hope that the value of $E_t$ grows quickly with respect to $t$ in order to reject the null as quickly as possible. The concept of $e$-power quantifies this notion precisely, and serves as a natural measure of optimality when $P \not\in \Hcal$.
\begin{definition}[$e$-power]\label{defn:e_power}
    For an $e$-process $E = (E_t)_{t \in \NN}$ with respect to null hypothesis set $\Hcal$, we define $e$-power of any  $P \not\in \Hcal$ as $\EE_{P}[\log(E_t)]/t$. 
\end{definition}
The $e$-power measures the logarithmic \emph{growth rate} of an $e$-process $E$, normalized by the number of observations $t$. Larger values of $e$-power of a distribution $P \not\in \Hcal$ intuitively correspond to faster expected rates at which we accumulate evidence against the null.

\subsection{Universal Representation via Test Supermartingales}

While Definition \ref{defn:e_process} provides the requirements for a process $E$ to be an $e$-process,
it does not explain how to construct one.
To develop an $e$-process for our use cases, we first focus on a subclass of $e$-processes called \emph{nonnegative test (super)martingales} (NM). Nonnegative test (super)martingales are an important subset of $e$-processes: any anytime-valid procedure must utilize nonnegative test (super)martingales to be admissible
\citep{ramdas2022admissible} (Lemma~\ref{lem:domination} in Appendix~\ref{app:proof_thm_minmax_opt_e_power}). 
We focus specifically on the class of NMs to construct our labeling tests. Under our nonparametric assumptions in Section \ref{sec_3:prob_formulation}, NMs have a universal representation, provided below:

\begin{definition}[Test Supermartingales for One-Sided Mean Testing,
\citealt{waudbysmith2022estimating}]\label{defn:test_martingale}
    Let $b \in [0,1]$ be a constant. For arm $a$ and null hypothesis sets $\Hcal^+_a = \{\Pcal(\mu_a): \mu_a > \xi\}$ and  $\Hcal^-_a = \{\Pcal(\mu_a): \mu_a \leq \xi\}$, we define a class of $e$-processes denoted as $\Ecal(\Hcal^+_a, b)$ and $\Ecal(\Hcal^-_a, b)$, respectively:
    \begin{align}\label{eq:one_sided_test}
        \Ecal(\Hcal^-_a, b) \equiv \{E_a = \left(E_t^a(\lambda, \xi)\right)_{t \in \NN}\}_{\lambda \in [0, b/\xi]}, 
        \\
        \quad
        \Ecal(\Hcal_a^+, b) \equiv \{E_a = \left(E_t^a(\lambda, \xi)\right)_{t \in \NN}\}_{\lambda \in [-b/(1-\xi), 0]},
    \end{align} 
    where $E_t^a(\lambda, \xi) = \prod_{i: A_i = a} (1+  \lambda(X_i - \xi) )$ and $\lambda$ is a $\Fcal_{t-1}$-measurable univariate parameter. 
\end{definition}

The truncation constant $b \in [0,1]$ ensures that our test martingales are strictly non-negative and satisfy Definition \ref{defn:e_process}. The restrictions on the sign of $\lambda$ intuitively align with the null hypotheses $\Hcal$ each test martingale wishes to reject. When $\mu_a > \xi$, $X_i - \xi$ is positive in expectation. Multiplied by a positive scalar $\lambda$, this ensures that $E_a \in \Ecal(\Hcal_a^-, b)$, the evidence against $\Hcal_a^-$, grows large. The same intuition holds for the sign restriction on $\lambda$ for $E_a \in \Ecal(\Hcal_a^+, b)$.

When $b = 1$ (i.e., we do not pose additional restriction on $\lambda$ other than $E_a$ is nonnegative), 
Definition~\ref{defn:test_martingale} provides a
\emph{universal} representation of any nonnegative test martingale (Proposition 3 of \citealt{waudbysmith2022estimating}) used to test whether arm means are above 
a threshold $\xi$. Equivalently, any test martingale for one-sided mean testing under our assumptions can be written as some $E_a \in \Ecal(\Hcal_a^-, 1) \cup \Ecal(\Hcal_a^+, 1)$. 
In Definition~\ref{defn:predicable_maximin}, 
we present a novel choice of $\lambda$ that defines a test martingale in $\Ecal(\Hcal_a^+, b)$ and $\Ecal(\Hcal_a^-, b)$ for each arm $a \in [K]$.
We derive the minimax optimal $e$-power over all valid $e$-process in Theorem~\ref{thm:minmax_opt_e_power}, and then show our novel $e$-process achieves this bound in Theorem~\ref{thm:minmax_opt_gen_bern},
justifying it as our sequential test choice.

\subsection{Minimax Optimal Sequential Testing}\label{subsec:minimax}
Our test supermartingales (Definition \ref{defn:predicable_maximin}) are a special case of the universal test supermartingales from Definition \ref{defn:test_martingale}. 

\begin{definition}[Generalized Bernoulli $e$-Process]\label{defn:predicable_maximin}
    For any $b \in [0,1]$, our predictable plugin $e$-processes for $a \in [K]$ is given by:
    $$E_t^\text{PrPl}(\Hcal_a^-, b) = \prod_{i\leq t: A_i = a} (1+ \lambda^-_{t,a} (X_i - \xi)  ),  $$
    $$E_t^\text{PrPl}(\Hcal_a^+, b) = \prod_{i\leq t: A_i = a} (1+ \lambda^+_{t,a}(X_i - \xi) ),  $$
    where $\lambda_{t,a}^-$, $\lambda_{t,a}^+$ are defined as
    \begin{align}
    \lambda^-_{t,a} = \min\left(\frac{b}{\xi},\max\left( \frac{\hat\mu_{t-1}(a) - \xi}{\xi (1-\xi)}, 0 \right)\right)\label{eq:lambda_minus}\\ 
    \lambda^+_{t,a} = \min\left(0,\max\left( \frac{\hat\mu_{t-1}(a) - \xi}{\xi (1-\xi)}, \frac{-b}{(1-\xi)} \right)\right);\label{eq:lambda_plus}
    \end{align}
    $\hat\mu_{t-1}(a) = \frac{\sum_{i \leq t-1: A_i = a} X_i}{N_{t-1}(a)}$; $N_{t-1}(a) = \sum_{i =1}^{t-1} \mathbf{1}[A_i =a]$ is the number of draws from arm $a$ by time $t-1$; and $\hat\mu_{0}(a) = \xi$. 
\end{definition}

Our test supermartingales follow the \emph{predictable plugin} strategy \citep{waudbysmith2022estimating}, where we use $\Fcal_{t-1}$-measurable choice of $\lambda$ in order to adaptively learn the means $\bm\mu$ of the bandit instance $P \in \Pcal(\bm\mu)$. 

We make two important observations on 
our $e$-processes in Definition \ref{defn:predicable_maximin}.
First, when $b=1$, it is the nonparametric generalization  Bernoulli likelihood ratio test. When $P_a$ is a sequence of Bernoulli distributions with fixed mean $\mu_a$, our approach reduces directly to the optimal sequential test \citep{Wald1945SequentialTO} when $\hat\mu_{t-1}(a) = \mu_a$. More generally, our $e$-processes maintain $\delta$-level error control when testing is conducted according to Proposition \ref{prop:delta_error_control}.

Second, our choice of $\lambda$ converges to the $\lambda$ that attains \emph{minimax optimal} $e$-power as $N_t(a)$, the number of times arm $a$ is sampled, goes to infinity. To formalize this, in Theorem \ref{thm:minmax_opt_e_power} (proof in Appendix \ref{app:proof_thm_minmax_opt_e_power}), we characterize the minimax optimal $e$-power for a fixed conditional mean $\mu_a$, representing the best-case $e$-power across all $e$-processes under the worst-case distributional sequence $P \in \Pcal(\mu_a)$. 
Next, in Theorem~\ref{thm:minmax_opt_gen_bern}, we show that our $e$-process achieves near optimal minimax guarantees. 


\begin{theorem}[Minimax Optimal $e$-power]\label{thm:minmax_opt_e_power} Let $\xi \in (0,1)$, 
and $\mu_a \in (0,1)$. Let $E$ be any possible $e$-proceses with respect to composite null $\Hcal^-_a  = \{\Pcal(\mu_a): \mu_a \leq \xi\}$.
Then, for a fixed number of arm pulls $N_t(a) \in \NN$ of arm $a$ with $\mu_a > \xi$, the optimal growth rate of the e-process under the worst-case instance is
\begin{align}\label{eq:minmax_e_power}
    &\inf_{P \in \Pcal(\mu_a)} \sup_{E = (E_t)_{t \in \NN}} \frac{\EE[\log(E_t)]}{N_t(a)} 
    \\ \quad &= \log\frac{1-\mu_a}{1-\xi} + \mu_a \log\frac{\mu_a(1-\xi)}{\xi(1-\mu_a)}.
\end{align}
Likewise, 
let $E$ be any possible $e$-processes with respect to the composite null $\Hcal_a^+$.
Then, Equation \eqref{eq:minmax_e_power} holds as well for any fixed
$N_t(a) \in \NN$ of arm $a$ with $\mu_a \leq \xi$.
\end{theorem}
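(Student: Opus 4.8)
The right-hand side of \eqref{eq:minmax_e_power} is precisely the binary relative entropy $D\!\left(\mathrm{Bern}(\mu_a)\,\middle\|\,\mathrm{Bern}(\xi)\right) = \mu_a\log\frac{\mu_a}{\xi} + (1-\mu_a)\log\frac{1-\mu_a}{1-\xi}$, so the plan is to prove the two inequalities $\geq$ and $\leq$ separately and recognize that both are controlled by a single per-step quantity. Since the test martingales for arm $a$ depend only on $\{X_i : A_i = a\}$ and $N_t(a)$ is held fixed, I would first reduce the whole statement to an i.i.d.-style problem over the $n := N_t(a)$ rewards drawn from arm $a$. By the admissibility/domination result (Lemma~\ref{lem:domination}) together with the universal representation (Definition~\ref{defn:test_martingale} with $b=1$), the supremum over all $e$-processes for $\Hcal^-_a$ may be taken over the product form $E_n = \prod_{i\le n}\bigl(1+\lambda_i(X_i-\xi)\bigr)$ with predictable $\lambda_i \in [0,1/\xi)$. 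I would then define the per-step growth function $g(\lambda) := \mu_a\log\bigl(1+\lambda(1-\xi)\bigr) + (1-\mu_a)\log(1-\lambda\xi)$ and verify by differentiation that its maximizer is $\lambda^\star = \frac{\mu_a-\xi}{\xi(1-\xi)}$, at which $1-\lambda^\star\xi = \frac{1-\mu_a}{1-\xi}$ and $1+\lambda^\star(1-\xi)=\frac{\mu_a}{\xi}$, so that $g(\lambda^\star)$ equals exactly the claimed right-hand side.

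For the lower bound $\inf_P\sup_E \ge g(\lambda^\star)$, it suffices to exhibit, for every $P\in\Pcal(\mu_a)$, one valid $e$-process attaining growth rate at least $g(\lambda^\star)$. I would take the constant oracle bet $\lambda_i\equiv\lambda^\star$; since $\mu_a>\xi$ gives $\lambda^\star>0$ and the conditional mean is $\mu_a\le\xi$ under the null, this is a supermartingale for $\Hcal^-_a$, hence a genuine $e$-process. The key observation is that $x\mapsto \log(1+\lambda^\star(x-\xi))$ is \emph{concave} on $[0,1]$, so it lies above its chord joining the endpoints; taking conditional expectations and using $\EE[X_i\mid \Fcal_{i-1}]=\mu_a$ gives $\EE[\log(1+\lambda^\star(X_i-\xi))\mid\Fcal_{i-1}]\ge (1-\mu_a)\log(1-\lambda^\star\xi)+\mu_a\log(1+\lambda^\star(1-\xi)) = g(\lambda^\star)$, \emph{irrespective} of the (possibly nonstationary) law of $X_i$. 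Summing over the $n$ pulls via the tower property yields $\EE_P[\log E_n]\ge n\,g(\lambda^\star)$ for arbitrary $P$.

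For the upper bound $\inf_P\sup_E \le g(\lambda^\star)$, I would exhibit a single worst-case alternative and bound every $e$-process at it. Taking $P^\star$ to be i.i.d.\ $\mathrm{Bern}(\mu_a)$ (which lies in $\Pcal(\mu_a)$), the chord inequality becomes an equality since $X_i\in\{0,1\}$, so for the product form each conditional increment is exactly $g(\lambda_i)\le \max_\lambda g(\lambda)=g(\lambda^\star)$; summing gives $\EE_{P^\star}[\log E_n]\le n\,g(\lambda^\star)$. Equivalently, and more robustly against arbitrary (non-product) $e$-processes, I would invoke the log-optimality of likelihood ratios: testing against the boundary null $Q=\mathrm{Bern}(\xi)^{\otimes n}\in\Hcal^-_a$, any nonnegative $E_n$ with $\EE_Q[E_n]\le 1$ satisfies $\EE_{P^\star}[\log E_n] = D(P^\star\|Q) + \EE_{P^\star}\!\bigl[\log\tfrac{E_n\,dQ}{dP^\star}\bigr] \le D(P^\star\|Q) + \log\EE_Q[E_n] \le D(P^\star\|Q) = n\,g(\lambda^\star)$ by Jensen. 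Combining the two displays gives the equality. Finally, the second assertion (for $\mu_a\le\xi$ and null $\Hcal^+_a$) follows by the symmetry $X_i\mapsto 1-X_i$, $\xi\mapsto 1-\xi$, $\mu_a\mapsto 1-\mu_a$, which flips the admissible sign of $\lambda$, maps the two null classes into one another, and leaves the binary relative entropy invariant.

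I expect the main obstacle to be the converse ($\le$) direction at the right level of generality: the supremum ranges over \emph{all} $e$-processes, not merely the product-form test martingales, so the argument must either route through the domination/admissibility reduction or, as above, apply the likelihood-ratio log-optimality bound against a boundary null $Q=\mathrm{Bern}(\xi)$ while keeping the reduction to the fixed-$N_t(a)$ arm-$a$ subsequence clean. The conceptual crux is that the Bernoulli law plays a dual extremal role—it minimizes the achievable per-step log-growth among all $[0,1]$-distributions with mean $\mu_a$ (driving the achievability lower bound through concavity) and simultaneously furnishes the exact witness for the converse—so that both bounds collapse onto the same value $g(\lambda^\star)$.
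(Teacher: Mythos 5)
Your proof is correct, and its primary route is essentially the paper's: both reduce the supremum over all $e$-processes to predictable product-form test martingales via admissibility (Lemma~\ref{lem:domination}) together with the universal representation (Lemma~\ref{lem:universal_rep}), both identify $\lambda^\star=(\mu_a-\xi)/(\xi(1-\xi))$ as the per-step optimizer with value $d(\mu_a,\xi)$, and both rest on the extremality of the Bernoulli law among mean-$\mu_a$ distributions on $[0,1]$. Your chord inequality for the concave map $x\mapsto\log(1+\lambda(x-\xi))$ is exactly the content of the paper's Lemma~\ref{lem:worst_case_instance}, which the paper instead proves via the auxiliary randomization $R=\mathbf{1}[U\le X]$ and Jensen; your version is more direct but equivalent. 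The genuine point of divergence is your second converse argument, bounding $\EE_{P^\star}[\log E_t]\le D(P^\star\,\|\,Q)+\log\EE_Q[E_t]$ against a boundary null $Q$: this defeats \emph{arbitrary} $e$-processes at the fixed time $t$ without routing through the domination/universal-representation chain, which is the most delicate link in the paper's proof (and it also sidesteps the paper's reindexing by the random pull times $\gamma_r$ and the $\sup\sum\le\sum\sup$ step). The one piece you should make fully explicit there is the trajectory-level construction of $Q$ in the bandit filtration: take $Q$ to agree with $P^\star$ on the policy and on all arms $a'\neq a$, and replace arm $a$'s conditional reward law by $\mathrm{Bern}(\xi)$, so that $Q\in\Hcal_a^-$, mutual absolute continuity holds since $\mu_a,\xi\in(0,1)$, and the chain rule for KL gives $D(P^\star\,\|\,Q)=\EE_{P^\star}[N_t(a)]\,d(\mu_a,\xi)=N_t(a)\,d(\mu_a,\xi)$ because $N_t(a)$ is fixed; you flag this reduction but do not carry it out. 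With that filled in, your quantifiers are in the correct order in both directions (a single oracle bet valid uniformly over $P\in\Pcal(\mu_a)$ for the lower bound, a single witness $P^\star$ for the upper bound), and your symmetry reduction $X\mapsto 1-X$, $\xi\mapsto 1-\xi$ for $\Hcal_a^+$ matches the paper's remark that the other case is symmetric.
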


Next, in Theorem~\ref{thm:minmax_opt_gen_bern} (proof in Appendix~\ref{proof:thm_2_e_power_optimal}) we show that 
our $e$-processes attain the same power asymptotically, with a vanishing term as
$N_t(a)$ grows large. 

\begin{theorem}[Asymptotic Minimax Lower Bound
of Generalized Bernoulli $e$-Process]\label{thm:minmax_opt_gen_bern}
Let $\xi \in (0,1)$, $b \in (0,1)$, and $\mu_a \in (\xi(1-b),b(1-\xi)+\xi)$. Then, for a fixed number of arm pulls $N_t(a) \in \NN$ of arm $a$ with $\mu_a > \xi$, the worst-case $e$-power of our predictable  plugin e-processes in rejecting the null $\Hcal_a^-$, $\inf_{P \in \Pcal(\mu_a)}\frac{1}{N_t(a)} \EE[\log\left( E_t^{PrPl}(\Hcal_a^-, b) \right)]$, is at least
\begin{align}
    \left(\log\frac{1-\mu_a}{1-\xi} + \mu_a \log\frac{\mu_a(1-\xi)}{\xi(1-\mu_a)}\right) - O\left(\sqrt\frac{\log N_t(a)}{{N_t(a)}}\right).\label{eq:gb_lowerbound}
\end{align}
Likewise, under the same conditions, when $\mu_a \leq \xi$, we can show the Equation  \eqref{eq:gb_lowerbound} holds for the lower bound of $\inf_{P \in \Pcal(\mu_a)}\frac{1}{N_t(a)} \EE[\log\left( E_t^{PrPl}(\Hcal_a^+, b) \right)]$.
\end{theorem}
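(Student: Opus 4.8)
The plan is to compare the predictable-plugin wagers against the single \emph{oracle} wager that attains the minimax $e$-power of Theorem~\ref{thm:minmax_opt_e_power}, and to charge the resulting gap to the estimation error of $\hat\mu_{t-1}(a)$. Write $Y_1,\dots,Y_n$ for the observations from arm $a$ in the order they are drawn, with $n=N_t(a)$, and let $\hat\lambda_k$ denote the value of $\lambda^-_{t,a}$ used at the $k$-th such pull, which is predictable (fixed given the history before $Y_k$). Then $\log E_t^{PrPl}(\Hcal_a^-,b)=\sum_{k=1}^n \log\!\big(1+\hat\lambda_k(Y_k-\xi)\big)$. I set $\lambda^\star=\tfrac{\mu_a-\xi}{\xi(1-\xi)}$; a direct check shows $1+\lambda^\star(Y-\xi)$ equals $\mu_a/\xi$ at $Y=1$ and $(1-\mu_a)/(1-\xi)$ at $Y=0$, i.e. the Bernoulli likelihood ratio, so the per-pull oracle value equals the target leading term $\log\frac{1-\mu_a}{1-\xi}+\mu_a\log\frac{\mu_a(1-\xi)}{\xi(1-\mu_a)}=\mathrm{KL}(\mathrm{Bern}(\mu_a)\,\|\,\mathrm{Bern}(\xi))$. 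The hypothesis $\mu_a\in(\xi(1-b),b(1-\xi)+\xi)$ is exactly what places $\lambda^\star$ in the clipping interval $[0,b/\xi]$, so the oracle wager is feasible.

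The first substantive step is a per-pull worst-case reduction. Conditioning on the history before the $k$-th pull makes $\hat\lambda_k$ a fixed constant, and $y\mapsto\log(1+\hat\lambda_k(y-\xi))$ is concave; hence among all laws on $[0,1]$ with mean $\mu_a$ the conditional expectation of the log-factor is minimized by the two-point law on $\{0,1\}$, giving
\begin{equation}
\EE_P\!\big[\log(1+\hat\lambda_k(Y_k-\xi))\mid\text{hist}\big]\ \ge\ L(\hat\lambda_k),\quad L(\lambda):=\mu_a\log(1+\lambda(1-\xi))+(1-\mu_a)\log(1-\lambda\xi),
\end{equation}
for \emph{every} $P\in\Pcal(\mu_a)$. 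Because $b<1$, every factor stays in $[1-b,\,1+b(1-\xi)/\xi]$, bounded away from $0$, so $L$ is $C^2$ on $[0,b/\xi]$, strictly concave, and maximized at $\lambda^\star$ with $L(\lambda^\star)=\mathrm{KL}$. Summing over $k$ and applying the tower property yields, for every $P$,
\begin{equation}
\EE_P\!\big[\log E_t^{PrPl}(\Hcal_a^-,b)\big]\ \ge\ n\,\mathrm{KL}-\sum_{k=1}^n\EE_P\!\big[L(\lambda^\star)-L(\hat\lambda_k)\big].
\end{equation}

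It remains to bound the summed deficit. Since projecting onto the interval $[0,b/\xi]\ni\lambda^\star$ is non-expansive, $|\hat\lambda_k-\lambda^\star|\le|\hat\mu_{k-1}(a)-\mu_a|/(\xi(1-\xi))$; combining this with the curvature bound $L(\lambda^\star)-L(\lambda)\le C(\lambda-\lambda^\star)^2$ on the compact interval (legitimate since $L'(\lambda^\star)=0$) converts each deficit into a multiple of $(\hat\mu_{k-1}(a)-\mu_a)^2$. The key point is that $\EE_P[(\hat\mu_{k-1}(a)-\mu_a)^2]\le\frac{1}{4(k-1)}$ holds for \emph{every} admissible, possibly nonstationary $P$, since it uses only the martingale identity $\EE[Y_j-\mu_a\mid\text{past}]=0$ and the conditional variance bound $\le\tfrac14$ from boundedness, so the cross terms vanish and one is left with an orthogonal sum. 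Summing gives total deficit of order $\log n$, hence per-sample deficit of order $\log n/n$, which is in particular $O(\sqrt{\log n/n})$, matching \eqref{eq:gb_lowerbound}; taking the infimum over $P$ preserves the lower bound. (The same rate also follows from a time-uniform concentration bound $|\hat\mu_{k-1}(a)-\mu_a|=O(\sqrt{\log k/k})$ paired with a first-order deficit bound, should one prefer to bypass the second-moment route.)

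I expect the main obstacle to be making the worst-case argument compose with the \emph{adaptive} wagers: one must verify that the conditional Bernoulli lower bound and the mean-square control of $\hat\mu_{k-1}(a)$ both hold uniformly over the adversarial class $\Pcal(\mu_a)$, not merely for i.i.d.\ Bernoulli data, and that they interlock correctly through the tower property despite $\hat\lambda_k$ depending on the realized path. Secondary care is needed at the clipping boundary, where the quadratic Taylor bound must be replaced by the uniform curvature bound on the compact interval, and in confirming that $\lambda^\star$ is strictly interior so the oracle comparison is valid---both guaranteed by the open interval assumed for $\mu_a$ together with $b<1$. The $\Hcal_a^+$, $\mu_a\le\xi$ case is symmetric under $\lambda\mapsto-\lambda$ (equivalently $X\mapsto1-X$, $\xi\mapsto1-\xi$), producing the identical KL leading term and the same remainder.
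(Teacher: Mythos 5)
Your proposal is correct, and it reaches the stated bound by a genuinely different---and in fact sharper---route than the paper's at the decisive step. The first half coincides with the paper's argument: your conditional, per-pull reduction to the two-point law with mean $\mu_a$ is exactly Lemma~\ref{lem:worst_case_instance} (concavity in the observation plus Jensen), applied conditionally on the history so that it composes with the predictable $\hat\lambda_k$ via the tower property; the paper performs the same reduction after reindexing pulls by the random times $\gamma_r$, and your identification of the interior maximizer $\lambda^\star=\frac{\mu_a-\xi}{\xi(1-\xi)}$ with value $\mathrm{KL}(\mathrm{Bern}(\mu_a)\,\|\,\mathrm{Bern}(\xi))$ matches Theorem~\ref{thm:minmax_opt_e_power}. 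The divergence is in how the plug-in deficit $L(\lambda^\star)-L(\hat\lambda_k)$ is charged to estimation error. The paper uses only a \emph{first-order} bound: concavity of $\mu\mapsto\log\bigl(1+\tfrac{\mu-\xi}{\xi(1-\xi)}(X-\xi)\bigr)$ gives a Lipschitz constant $\tfrac{1}{b\xi(1-\xi)}$, so the deficit is controlled by $\EE\abs*{\hat\mu_{\gamma_r-1}(a)-\mu_a}$, which is in turn bounded by the time-uniform $1$-subgaussian confidence sequence of Lemma~\ref{lem:av_inequality_howard} with $\alpha=1/\sqrt{r-1}$, giving per-sample remainder $O\bigl(\sqrt{\log N_t(a)/N_t(a)}\bigr)$. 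You instead exploit the stationarity $L'(\lambda^\star)=0$ (legitimate since the open-interval hypothesis on $\mu_a$ places $\lambda^\star$ strictly inside the clipping range and $b<1$ keeps $L$ uniformly $C^2$ there), together with non-expansiveness of the projection, to reduce the deficit to a multiple of $(\hat\mu_{k-1}(a)-\mu_a)^2$; your martingale orthogonality bound $\EE_P[(\hat\mu_{k-1}(a)-\mu_a)^2]\le\frac{1}{4(k-1)}$ does hold uniformly over the nonstationary, adaptively sampled class $\Pcal(\mu_a)$, because increments at pull times form a bounded martingale difference sequence with respect to the pull-time filtration---the same structural fact the paper invokes when asserting subgaussianity under arbitrary policies. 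Summing (with the $k=1$ term handled by $\hat\mu_0(a)=\xi$, hence a constant deficit) gives a total deficit $O(\log n)$ and a per-sample remainder $O\bigl(\log N_t(a)/N_t(a)\bigr)$, strictly better than the theorem's claimed rate, which the paper itself concedes is loose due to the subgaussian confidence sequence. What the paper's first-order route buys in exchange is that it never uses the stationary-point structure, so it would not need interiority of $\lambda^\star$; but since the hypotheses of Theorem~\ref{thm:minmax_opt_gen_bern} grant interiority anyway, nothing is lost by your approach, and your parenthetical fallback (first-order deficit plus time-uniform concentration) is essentially the paper's proof. Your symmetric treatment of the $\Hcal_a^+$ case likewise matches the paper.
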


We note that the proof of Theorem \ref{thm:minmax_opt_gen_bern} relies only on the number of pulls of arms $a$, $N_t(a)$, being fixed, allowing for the actions to be generated by any (potentially data-adaptive) policy $\pi$ and stopped at any $\Fcal_t$-measurable stopping time $\tau$. Thus, Theorem \ref{thm:minmax_opt_gen_bern} provides an \emph{anytime-valid} suboptimality bound. 

By imposing the truncation parameter $b \in (0,1)$ in our e-processes, we restrict the range of $\mu_a$ at which the minimax lower bound can be achieved. In Theorem~\ref{thm:minmax_opt_gen_bern}, this range is provided by $\mu_a\in (\xi(1-b),b(1-\xi)+\xi)$, and
it can be increased to $(0,1)$ by setting $b \in (0,1)$ near $1$. The restriction of $b \not\in \{0,1\}$ avoids scenarios where $E_t^{PrPl}(\Hcal_a^+, b), E_t^{PrPl}(\Hcal_a^+, b)$ are constant across $t \in \NN$. Setting $b=0$ implies our test martingales are equal to $1$ for all $t \in \NN$. We exclude $b=1$ due to the fact that if $X_t = 1$ and $\lambda_{t,a}^- = 1/\xi$ for some $t \in \NN$, $E_{t'}^{PrPl}(\Hcal_a^-, 1) = 0$ for all $t' > t$. This implies that we can never reject the null hypothesis $\Hcal_a^-$ after time $t$. The same logic holds in the case of $X_t = 0$, $\lambda_{t,a}^+ = -\frac{1}{1-\xi}$, and $E_{t'}^{PrPl}(\Hcal_a^+, 1)$. 

Theorem \ref{thm:minmax_opt_gen_bern} states that the worst-case $e$-power of the generalized Bernoulli likelihood ratio is at least the minimax lower bound in Theorem \ref{thm:minmax_opt_e_power} with an additional negative term that vanishes at the rate $O\left(\sqrt{\frac{\log N_t(a)}{N_t(a)}}\right)$. As $N_t(a) \rightarrow \infty$, this additional term vanishes to $0$, giving us the minimax optimality result for our proposed $e$-process. Theorems \ref{thm:minmax_opt_e_power} and \ref{thm:minmax_opt_gen_bern} motivate our use of the generalized Bernoulli likelihood ratio: as $N_t(a)$ grows large, no other $e$-process grows its average value faster under the worst-case distribution sequence $P$. 


\section{Sampling Rules for Good Arm Identification}
\label{sec_5:sampling_scheme}
In this section, we pair regret-optimal sampling schemes with our sequential tests to obtain asymptotically optimal stopping times for GAI. We first provide a high-level condition that is both sufficient and necessary for horizon-free regret-minimizing algorithms to attain the optimal regret rate, and provide examples in the literature that satisfy this condition. 
We describe our algorithm in Algorithm~\ref{alg:i-GAI}, which accommodates earlier stopping based on the number of good arms desired.
Finally, we provide its key theoretical properties.

\subsection{Regret-Minimizing Sampling Schemes}
We impose the following condition on our sampling policy, which is satisfied by a wide class of optimal sampling policies for reward-maximizing approaches.
\begin{definition}[Horizon-Free Regret-Minimizing Policy]\label{defn:regret_minimizing_scheme}
    Let $a^* = \argmax_{a \in [K]}\mu_a$. We say that a sampling policy $\pi_t: \Fcal_{t-1} \rightarrow \Delta^K$
    is a \emph{horizon-free regret-minimizing policy} if (i) the algorithm does not take a horizon parameter $t$ as input\footnote{We note that other works \citep{moss_anytime} denote this property as anytime. We opt not to use this name due to the potential confusion with anytime-valid testing, which is also discussed in this work.} and (ii) for any $t \in \NN$, there exists $c \in \RR^+$ independent of $t$ such that $\PP_{\pi}(A_t \neq a^*) \leq c/t$.
\end{definition}

For any horizon-free reward maximization algorithm, the condition in Definition \ref{defn:regret_minimizing_scheme} is both necessary and sufficient to achieve the optimal regret bound of order $O(\log(t))$ (defined with respect to arm mean differences, as in \citealt{Lai_Robbins_1985}). 

An example of a regret-minimizing policy that achieves this rate 
is MOSS-anytime \citep{moss_anytime}. For our empirical results, we use a modified version of MOSS-anytime (Algorithm \ref{alg:modified_moss}) to accommodate sequential elimination of arms once they have been labeled as good or bad. Our restriction for selecting arms in $\Ical_{t-1}$ ensures that once an arm has been labeled as good or bad, it is not sampled again in future timesteps. Given a regret-minimizing sampling strategy $\pi = (\pi_t)_{t \in \NN}$ that satisfies Definition \ref{defn:regret_minimizing_scheme}, we now introduce our GAI algorithm, and provide theoretical results that show our approach asymptotically attains the minimax lower bounds for GAI stopping times. 

\begin{algorithm}[t] 
  \caption{Modified MOSS-anytime Policy $\pi_t$.} 
  \label{alg:modified_moss}
  \begin{algorithmic}[1] 
  \State \textbf{input}: $\alpha > 0$, observed samples $(A_i, X_i)_{i < t}$, unlabeled set $\Ical_{t-1}$.

  \State Find the arm $a$ with the largest upper confidence bound among the unlabeled arms in $\Ical_{t-1}$:
  $$ A_t = \argmax_{a \in \Ical_{t-1}}\hat\mu_{t-1}(a) + \sqrt{\frac{1+\alpha}{2} \frac{\max(0, \log\frac{t}{K N_{t-1}(a)})}{N_{t-1}(a)}} $$
  with $N_{t-1}(a)$, $\hat\mu_{t-1}(a)$ as defined in Definition \ref{defn:predicable_maximin}.
  
  \State \textbf{output}: $A_t$.
  \end{algorithmic}    
\end{algorithm}




\subsection{A Minimax Optimal Approach for GAI}

We describe
our approach
in Algorithm \ref{alg:i-GAI}, which provides a \emph{minimax optimal} solution for the GAI problem. At each round, Algorithm \ref{alg:i-GAI} samples an arm according to the modified version of MOSS provided in Algorithm \ref{alg:modified_moss}, and uses the $e$-processes $E_t^{\text{PrPl}}(\Hcal_{A_t}^-, b)$ and $E_t^{\text{PrPl}}(\Hcal_{A_t}^-, b)$ to test whether we can label the arm as good or bad. This test is conducted in the manner of Proposition \ref{prop:delta_error_control}. 
If the $e$-process $E_t^{\text{PrPl}}(\Hcal_{A_t}^-, b)$ surpasses the threshold $2K/\delta$, 
we reject the null hypotheses $\Hcal_{A_t}^- = \{\Pcal(\mu_a): \mu_a \leq \xi\}$, meaning that we deem the arm as good. 
Likewise, if the test statistic $E_t^{\text{PrPl}}(\Hcal_{A_t}^+, b)$ surpasses the threshold $2K/\delta$, 
we reject the null hypotheses $\Hcal_{A_t}^+ = \{\Pcal(\mu_a): \mu_a > \xi\}$, labeling the arm as bad. 

We note that the threshold for the $e$-processes is $2K/\delta$, not $1/\delta$, in order to provide $\delta$-correct error guarantees. This threshold results from a simple union-bound over two tests (one for labeling arms as good, one for labeling arms as bad) for each of the $K$ arms. While different constructions of $e$-processes, such as the averaging method done in \cite{cho2024peeking}, provide strictly larger values of the $e$-processes almost surely, our union bounds suffice to attain asymptotically optimal results. 

In Algorithm \ref{alg:i-GAI}, we also include the optional parameter of $m$, which specifies the number of desired good arms and permits early stopping before all arms are labeled. When $m = K$, our algorithm solves GAI problem, where we stop at $\tau_{\text{stop}}$, the time in which we label all arms as good or bad. When $m < G_{\bm\mu}$, i.e., the desired number of good arms is less than the unknown number of good arms, we terminate our GAI algorithm as soon as we identify $m$ good arms, i.e., at time $\tau_{\Gcal, m}$. If $m$ good arms are not identified by our algorithm before labeling all arms, we still terminate at $\tau_{\text{stop}}$, ensuring that misspecification of $m$ does not harm performance relative to solving the full GAI problem.   

\begin{algorithm}[t] 
  \caption{Minimax Optimal GAI} 
  \begin{algorithmic}[1] 
  \State \textbf{input}: sampling policy $\pi$ for GAI (e.g., Alg. \ref{alg:modified_moss}), error parameter $\delta$, truncation constant $b$, desired number of good arms $m$ (optional). 
  \State Set good arm and bad arm set $\Gcal_0, \Bcal_0$ as empty, and unlabeled arm set $\Ical_0 = [K]$. 
  \State Sample each arm once, and 
  initialize counter $t = K$.
  \While{$|\Gcal| < m$ and $\Ical_t \neq \emptyset$}
      \State Set $t = t+1$. 
      \State Select an arm $A_t \in \Ical_{t-1}$ based on policy $\pi$. 
      \State Calculate $E_t^{\text{PrPl}}(\Hcal_{A_t}^-, b), E_t^{\text{PrPl}}(\Hcal_{A_t}^+, b)$ as provided in Definition \ref{defn:predicable_maximin}. 
      \If{$E_t^{\text{PrPl}}(\Hcal_{A_t}^-, b) \geq \frac{2K}{\delta}$}{
         set $\Ical_t = \Ical_{t-1} \setminus A_t$, and $\Gcal_t = \Gcal_{t-1} \cup A_t$.} 
      \EndIf
    
      \If{$E_t^{\text{PrPl}}(\Hcal_{A_t}^+, b) > \frac{2K}{\delta}$} {set $\Ical_t = \Ical_{t-1} \setminus A_t$, and $\Bcal_t = \Bcal_{t-1} \cup A_t$.}
      \EndIf
  \EndWhile
  \State \textbf{output}: $\Gcal_t, \Bcal_t, \tau = t$.
  \end{algorithmic}
  \label{alg:i-GAI}
\end{algorithm}

\subsection{Theoretical Guarantees}

We first provide our error guarantees in Theorem \ref{thm:error_control_alg_2}, which ensures our approach satisfies the $\delta$-error level constraint defined in Definition \ref{defn:error_control}. 

\begin{theorem}[$\delta$-level Error Control for Algorithm \ref{alg:i-GAI}]\label{thm:error_control_alg_2}
For any $P \in \Pcal(\bm\mu)$, for any $\delta \in [0,1]$, $b \in [0,1]$, $m \in [K]$, Algorithm \ref{alg:i-GAI} satisfies Definition \ref{defn:error_control}. 
\end{theorem}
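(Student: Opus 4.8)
The plan is to reduce the $\delta$-level error event of Definition~\ref{defn:error_control} to a finite collection of false rejections and then control each by the anytime-valid guarantee of Proposition~\ref{prop:delta_error_control}, closing with a union bound over arms. First I would unwind Algorithm~\ref{alg:i-GAI}: it places $a$ into $\Gcal$ exactly when $E_t^{\text{PrPl}}(\Hcal_a^-,b)\geq 2K/\delta$ and into $\Bcal$ exactly when $E_t^{\text{PrPl}}(\Hcal_a^+,b)> 2K/\delta$. Hence $\{\Gcal_t\not\subseteq\Gcal_{\bm\mu}\}$ can occur only if some arm $a$ with $\mu_a\leq\xi$ (so $P\in\Hcal_a^-$) ever has $E_t^{\text{PrPl}}(\Hcal_a^-,b)\geq 2K/\delta$, and symmetrically $\{\Bcal_t\not\subseteq\Bcal_{\bm\mu}\}$ can occur only if some $a$ with $\mu_a>\xi$ (so $P\in\Hcal_a^+$) ever has $E_t^{\text{PrPl}}(\Hcal_a^+,b)> 2K/\delta$. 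In each case the threshold-crossing process is an $e$-process evaluated under a null to which $P$ genuinely belongs.

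The next step is to certify that, for the relevant direction, the process is a bona fide $e$-process for the true null. This is exactly what Definitions~\ref{defn:test_martingale} and~\ref{defn:predicable_maximin} provide: the clipping in~\eqref{eq:lambda_minus} keeps $\lambda^-_{t,a}\in[0,b/\xi]$ and that in~\eqref{eq:lambda_plus} keeps $\lambda^+_{t,a}\in[-b/(1-\xi),0]$, so $E^{\text{PrPl}}(\Hcal_a^-,b)\in\Ecal(\Hcal_a^-,b)$ and $E^{\text{PrPl}}(\Hcal_a^+,b)\in\Ecal(\Hcal_a^+,b)$, each a nonnegative test supermartingale satisfying Definition~\ref{defn:e_process} with respect to its null. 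Crucially, for a fixed arm $a$ only one of the two nulls is true (according to whether $\mu_a\leq\xi$ or $\mu_a>\xi$), and it is precisely the process whose threshold-crossing would be a mislabeling that is controlled; the opposite process is under a false null, but triggering it yields a \emph{correct} label and contributes no error.

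I would then apply Proposition~\ref{prop:delta_error_control} at the adjusted level $\delta'=\delta/(2K)$, i.e.\ with threshold $1/\delta'=2K/\delta$. For each $a$ with $\mu_a\leq\xi$ this gives $\PP_P(\exists t: E_t^{\text{PrPl}}(\Hcal_a^-,b)\geq 2K/\delta)\leq\delta/(2K)$, and for each $a$ with $\mu_a>\xi$ it gives $\PP_P(\exists t: E_t^{\text{PrPl}}(\Hcal_a^+,b)> 2K/\delta)\leq\delta/(2K)$. Since exactly one such event attaches to each of the $K$ arms, a union bound yields
\begin{align*}
\PP_P\!\left(\exists t:\{\Gcal_t\not\subseteq\Gcal_{\bm\mu}\}\cup\{\Bcal_t\not\subseteq\Bcal_{\bm\mu}\}\right)&\leq K\cdot\frac{\delta}{2K}\\ &=\frac{\delta}{2}\;\leq\;\delta,
\end{align*}
which is Definition~\ref{defn:error_control}. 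The edge cases are immediate: for $\delta=1$ the bound is trivial, and for $\delta=0$ (or $b=0$, where the processes are identically $1$) no arm is ever labeled, so the error probability is $0$. The optional parameter $m$ only removes labeling decisions, hence can never enlarge the mislabeling event and the bound is preserved.

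Since the argument is fundamentally a union bound, the main thing to get right is the time-uniformity rather than any hard estimate. The error event of Definition~\ref{defn:error_control} quantifies over all data-adaptive stopping times $\tau_a$ simultaneously, and this is supplied exactly by the "for all $t\in\NN$" supremum control in Proposition~\ref{prop:delta_error_control}, which rests on the supermartingale property of the clipped predictable-plugin processes. The only genuine subtlety is therefore bookkeeping: verifying that each arm contributes a single controlled false-rejection event under a true null, so that the $2K/\delta$ threshold (a factor $2$ for the two directions, a factor $K$ for the arms) suffices, and observing that the two processes for one arm are never simultaneously under true nulls, which is why the realized bound is $\delta/2$ rather than needing the full $2K$ split to be tight.
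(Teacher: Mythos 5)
Your proposal is correct and follows essentially the same route as the paper: both arguments establish that each clipped predictable-plugin process is a nonnegative supermartingale (hence an $e$-process) under the arm's single true null, invoke the time-uniform Ville-type bound (Proposition~\ref{prop:delta_error_control}) at threshold $2K/\delta$, and close with a union bound in which each of the $K$ arms contributes one controlled false-rejection event. The only difference is presentational: the paper re-verifies the supermartingale property explicitly, handling the bandit sampling by conditioning on $\{A_t=a\}$ versus $\{A_t\neq a\}$, whereas you delegate this to membership in $\Ecal(\Hcal_a^-,b)$ and $\Ecal(\Hcal_a^+,b)$ from Definition~\ref{defn:test_martingale}, which the paper's definitions indeed license.
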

Theorem \ref{thm:error_control_alg_2} (proof in Appendix~\ref{proof:error_control_alg_2}) states that for any choice of input parameters, we ensure $\delta$-level error control across the duration of the algorithm. While $\delta$-level error control ensures the desired correctness of our outputs, they do not ensure that our algorithm performs well in terms of stopping times $\tau_{G,i}$ for all $i\in \{1,...,G_{\bm\mu}\}$ and $\tau_{\text{stop}}$. To contextualize our stopping time results, we provide asymptotic minimax lower bounds on the stopping times for any GAI approach with $\delta$-level error control, and show that our approach achieves this lower bound.

\begin{theorem}[Asymptotic Minimax Optimality]\label{theorem:minimax_lower_bounds_stopping_time}
    Let (i) $\xi \in (0,1)$, $b \in (0,1)$, (ii) $\bm\mu \in \left[\xi(1-b), b(1-\xi) + \xi\right]^K \setminus [\xi]^K$, and (iii) $\mu_i \neq \mu_j$ for any $i\neq j$. Define $d(\mu_a, \xi) = \log\frac{1-\mu_a}{1-\xi} + \mu_a\log\frac{\mu_a(1-\xi)}{\xi(1-\mu_a)}$. For any $\delta$-correct fixed confidence GAI algorithm $(\pi, \tau)$, the asymptotic minimax stopping time $\tau_{\text{stop}}$ is lower bounded by:
    \begin{align}
        \liminf_{\delta \rightarrow 0 \ (\pi, \tau)}\sup_{P \in \Pcal(\bm\mu)}\frac{\EE[\tau_{\text{stop}}]}{\log(1/\delta)} \geq \sum_{a=1}^K \frac{1}{d(\mu_a, \xi)}.
    \end{align}
    Consider the following modifications for Algorithm \ref{alg:i-GAI}. In lines 8-9 of Algorithm \ref{alg:i-GAI}, we reset $E_t(\Hcal_a^-, b) = E_t(\Hcal_a^+, b) = 1$,  $\hat\mu_{t-1}(a) = \xi$, $N_{t}(a) = 0$ for all $a \in \Ical_t$. For $m \in [\Gcal_{\bm\mu},K]$, Algorithm \ref{alg:i-GAI} achieves the minimax lower bound for $\tau_{\text{stop}}$, i.e.,  
    \begin{equation}
        \lim_{\delta \rightarrow 0}\sup_{P \in \Pcal(\bm\mu)} \frac{\EE[\tau_{\text{stop}}]}{\log(1/\delta)} \leq \sum_{a=1}^K \frac{1}{d(\mu_a, \xi)}.
    \end{equation} 
    Furthermore, let $\mu_{i^*}$ denote the $i$-th largest mean among $\bm{\mu}$. For all $i \leq G_{\bm\mu}$, the asymptotic minimax lower bound on the stopping time $\tau_{\Gcal, i}$ is given by:
    \begin{align}\label{eq:min_max_good_bound}
        \liminf_{\delta \rightarrow 0 \ (\pi, \tau)}\sup_{P \in \Pcal(\bm\mu)} \frac{\EE[\tau_{\Gcal, i}]}{\log(1/\delta)} \geq 
        \quad \sum_{a: \mu_a \geq \mu_{i^*}} \frac{1}{d(\mu_a, \xi)}.
    \end{align}
     For all $m \in [K]$, $i \leq \min(m, \Gcal_{\bm\mu})$, our modified version of Algorithm \ref{alg:i-GAI} achieves the minimax lower bound for $\tau_{\Gcal, i}$, i.e.,
    \begin{equation}\label{eq:min_max_good}
        \lim_{\delta \rightarrow 0}\sup_{P \in \Pcal(\bm\mu)} \frac{\EE[\tau_{\Gcal, i}]}{\log(1/\delta)} \leq \sum_{a:\mu_a \geq \mu_{i^*}}\frac{1}{d(\mu_a, \xi)}.
    \end{equation}
\end{theorem}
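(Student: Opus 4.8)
Both bounds hinge on recognizing that the divergence $d(\mu_a,\xi)$ in the statement is exactly the Bernoulli Kullback--Leibler divergence $\mathrm{KL}\!\left(\mathrm{Bern}(\mu_a),\mathrm{Bern}(\xi)\right)$; a one-line expansion of both expressions verifies this. Together with Theorem~\ref{thm:minmax_opt_e_power}, which identifies $d(\mu_a,\xi)$ as the minimax-optimal growth rate over $\Pcal(\mu_a)$, this says that the Bernoulli sequence is the worst-case (least detectable) instance in $\Pcal(\mu_a)$, with closest null mean $\xi$. I would use this fact on both sides: it drives the change-of-measure on the lower bound, and it is precisely the rate our $e$-process attains (Theorem~\ref{thm:minmax_opt_gen_bern}) on the upper bound.

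\textbf{Lower bounds.} I would fix the true instance to be the Bernoulli bandit with means $\bm\mu$ (which lies in $\Pcal(\bm\mu)$) and apply a change-of-measure (transportation) argument. For $\tau_{\mathrm{stop}}$: for each arm $a$, form the alternative $P'$ that moves $\mu_a$ just across $\xi$ and leaves the other arms fixed; with the event ``arm $a$ holds its correct label at $\tau_{\mathrm{stop}}$,'' the transportation inequality gives $\EE[N_\tau(a)]\,\mathrm{KL}(\mathrm{Bern}(\mu_a),\mathrm{Bern}(\mu_a'))\ge\mathrm{kl}(1-\delta,\delta)$. Letting $\mu_a'\to\xi$ and using $\mathrm{kl}(1-\delta,\delta)\sim\log(1/\delta)$ yields $\EE[N_\tau(a)]\gtrsim\log(1/\delta)/d(\mu_a,\xi)$, and since each round pulls one arm, $\tau_{\mathrm{stop}}=\sum_a N_\tau(a)$; summing gives the stated bound. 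For $\tau_{\Gcal,i}$ the same per-arm estimate applies, but now only the $i$ certified good arms are guaranteed to be labeled, so I would partition the success event by which size-$i$ subset $S\subseteq\Gcal_{\bm\mu}$ is certified and lower bound $\tau_{\Gcal,i}\ge\sum_{a\in S}N_\tau(a)$ on each piece. Because $1/d(\mu_a,\xi)$ is decreasing in $\mu_a$, every feasible $S$ satisfies $\sum_{a\in S}1/d(\mu_a,\xi)\ge\sum_{a:\mu_a\ge\mu_{i^*}}1/d(\mu_a,\xi)$, i.e. the top-$i$ arms are the cheapest certifiable set, which is exactly the claimed bound.

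\textbf{Upper bounds.} The reset modification is what makes the upper bound tractable: each labeling event reinitializes the counts and $e$-processes of the surviving arms, so a run decomposes into at most $K$ phases, one per labeled arm. Within a phase, $\pi$ is effectively a fresh horizon-free policy on the unlabeled set, so Definition~\ref{defn:regret_minimizing_scheme} forces all but $o(L)$ of a length-$L$ phase onto the highest-mean surviving arm $a$; since arms above $\xi$ are exactly the highest-mean arms, the phases label arms in decreasing-mean order and the first $i$ phases certify the top-$i$ (all good) arms. I would then invoke Theorem~\ref{thm:minmax_opt_gen_bern}: after $N$ pulls the relevant $e$-process of $a$ satisfies $\EE[\log E_N]\ge N\,d(\mu_a,\xi)-O(\sqrt{N\log N})$, and as $\log E_N$ is a sum of bounded predictable increments it concentrates about its mean. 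Hence $E_N$ crosses the label threshold $2K/\delta$ by phase-length $N^\star=\log(1/\delta)/d(\mu_a,\xi)\,(1+o(1))$, closing the phase. Summing phase lengths over the first $i$ (resp. all $K$) arms gives $\EE[\tau_{\Gcal,i}]$ (resp. $\EE[\tau_{\mathrm{stop}}]$) $\le\log(1/\delta)\sum(1+o(1))/d(\mu_a,\xi)$, matching the lower bounds; Theorem~\ref{thm:error_control_alg_2} ensures these stopping times are $\delta$-correct.

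\textbf{Main obstacle.} I expect the crux to lie in the upper-bound phase analysis. Two points need care: first, that resetting and re-running $\pi$ on the shrinking unlabeled set preserves the $c/t$ regret guarantee, so that the off-best pulls and the evidence discarded at each reset are both $o(\log(1/\delta))$ and hence negligible at leading order; and second, converting the expected-log-growth bound of Theorem~\ref{thm:minmax_opt_gen_bern} into a bound on the \emph{expected crossing time} of the $e$-process, which requires a lower-tail (Azuma-type) concentration so that the $O(\sqrt{N\log N})$ bias and the fluctuations together perturb $N^\star$ by only a lower-order amount. On the lower-bound side the only delicate step is formalizing the ``cheapest certifiable set'' argument for $\tau_{\Gcal,i}$ via the partition over certified subsets; the transportation step itself is routine once the KL identity is in hand.
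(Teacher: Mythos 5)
Your overall architecture matches the paper's: lower bounds via the worst-case Bernoulli instance, upper bounds via the reset-induced phase decomposition, Definition~\ref{defn:regret_minimizing_scheme} to make off-best pulls negligible, and Theorem~\ref{thm:minmax_opt_gen_bern} as the growth-rate engine. On the lower bound you and the paper diverge only in provenance: the paper does not re-derive the transportation argument but cites the Bernoulli-case lower bounds of Kano et al.\ and Degenne--Koolen and embeds them into $\Pcal(\bm\mu)$ via $\inf_{(\pi,\tau)}\sup_P \geq \sup_P \inf_{(\pi,\tau)} \geq \inf_{(\pi,\tau)} \EE_{P'}$ for the stationary Bernoulli $P'$. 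Your direct change-of-measure sketch is essentially the content of those cited results, and your ``cheapest certifiable set'' observation (monotonicity of $1/d(\cdot,\xi)$ forcing the top-$i$ subset) is the right reason the bound for $\tau_{\Gcal,i}$ sums over $\{a:\mu_a\geq\mu_{i^*}\}$; the partition-over-certified-subsets step you flag as delicate is precisely what the citation absorbs.

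The genuine gap is in your crossing-time conversion for the upper bound. You argue: $\EE[\log E_N]\geq N\,d(\mu_a,\xi)-O(\sqrt{N\log N})$, plus ``$\log E_N$ is a sum of bounded predictable increments so it concentrates about its mean,'' hence the threshold is crossed by $N^\star=(1+o(1))\log(1/\delta)/d(\mu_a,\xi)$. But Azuma-type inequalities concentrate $\log E_N$ around its \emph{predictable compensator} $\sum_i \EE[\log(1+\lambda^-_{i,a}(X_i-\xi))\mid \Fcal_{i-1}]$, which is random (it is driven by $\hat\mu_{i-1}(a)$, and early increments can even have conditional mean zero when $\lambda^-_{i,a}=0$), not around the unconditional expectation that Theorem~\ref{thm:minmax_opt_gen_bern} controls. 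To close your route you would additionally need a high-probability lower bound on the compensator itself, i.e.\ redo the confidence-sequence control of $\hat\mu_{t-1}(a)$ from the proof of Theorem~\ref{thm:minmax_opt_gen_bern} pathwise, and then integrate the stopping-time tail. The paper sidesteps concentration entirely with a Wald-style two-sided bound at the stopping time: by definition of $\tau_1^-$, $\EE[\log E_{\tau_1^-}]\geq\log(2K/\delta)$, while a one-increment overshoot bound gives $\EE[\log E_{\tau_1^-}]\leq\log(2K/\delta)+\log\left(1+b\max\left(\frac{\xi}{1-\xi},\frac{1-\xi}{\xi}\right)\right)$; combining the overshoot bound with the in-expectation growth rate (made uniform over the random pull count via Jensen applied to the concave $\sqrt{\log n/n}$, using the almost-sure floor of Lemma~\ref{lem:tau_lower_bound}) directly bounds $\EE[N_{\tau_1^-}(1)]$ with no tail analysis. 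Relatedly, your ``all but $o(L)$ of a phase on the best arm'' step needs care: the $c/t$ guarantee is unconditional, whereas you must count off-best pulls on the event that the phase is still running; the paper decouples these via Lemma~\ref{lem:product_greater_joint} ($\PP(\tau_a\geq t, A_t\neq a)\leq\PP(\tau_a\geq t)\PP(A_t\neq a)$, valid because labeled arms are never resampled), a lemma your sketch would need in some form. Finally, your claim that phases deterministically label arms in decreasing-mean order overstates what holds; the paper only assumes the worst-case elimination order to bound each phase by $1/d(\mu_{j+1},\xi)$, which suffices for an upper bound.
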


As error control becomes stricter (i.e., $\delta \rightarrow 0$), Theorem \ref{theorem:minimax_lower_bounds_stopping_time} guarantees that the largest expected stopping times of Algorithm \ref{alg:i-GAI} are at most the smallest expected stopping times among any $\delta$-correct GAI algorithm in the worst-case setting. When the desired number of good arms is misspecified (i.e., $m \geq \Gcal_{\mu}$) or we want all arms to be labeled ($m = K$), the expectations of stopping times $(\tau_{\Gcal, i})_{i \leq \Gcal_{\bm\mu}}$ and $\tau_{\text{stop}}$ of our approach are 
no worse than their respective minimax lower bounds. When $m \leq G_{\bm\mu}$, Equations \eqref{eq:min_max_good_bound} and \eqref{eq:min_max_good} imply we expect to stop at the minimax optimal time for identifying $m$ good arms. 

Our modifications for Algorithm \ref{alg:i-GAI} effectively restarts the sampling scheme $\pi$ and testing procedures as if we had collected no information up to time $\tau_{\Gcal, 1}$. We emphasize that this is for analytical convenience for analyzing the limiting stopping times $\tau_{\Gcal, i}$ and $\tau_{\text{stop}}$. In practice, discarding such information is likely to cause far worse performance than the empirical results using Algorithm \ref{alg:i-GAI} in the main body of the paper.


\paragraph{When does reward maximization align with GAI?} We highlight the special case where there exists at least one good arm, and $m=1$. This setting corresponds to scenarios where we desire a treatment with satisfactory effect as quickly as possible, and has applications in system verification \citep{degenne2023existence} and financial portfolio risk \citep{juneja2019samplecomplexitypartitionidentification}. For this problem of finding one good arm, the proof of Theorem \ref{theorem:minimax_lower_bounds_stopping_time} in Appendix \ref{proof:thm_stopping_times} implies that MOSS-anytime is an optimal sampling strategy for achieving minimax optimal stopping time $\tau_{\Gcal, 1}$ when tight error control (i.e., small $\delta$) is desired. This emphasizes that in many practical applications, the pure exploration problem does not require a trade off with regret minimization: experimenters can obtain the fastest time for obtaining a $\delta$-correct conclusion, while study subjects obtain the best possible outcomes at an optimal rate.

\section{Empirical Results}
\label{sec_7:empirical_results}
\begin{table*}[tbh]
    \centering
    \fontsize{8}{7}\selectfont
    \begin{tabular}{lll|rrrrrrrr}
$K$ & Dist. & & \multicolumn{2}{c}{$\tau_{\Gcal, 1}$} & \multicolumn{2}{c}{$\tau_{\Gcal, 2}$} & \multicolumn{2}{c}{$\tau_{\text{stop}}$} & \multicolumn{2}{c}{$R[\tau_{\Gcal, 1}]$} \\
\midrule
\multirow[c]{12}{*}{4} & \multirow[c]{6}{*}{Bern} & OPT & 255.1 $\pm$ & 177.5 & 1231.3 $\pm$ & 595.5 & 2505.6 $\pm$ & 896.5 & 0.0 $\pm$ & 0.0 \\
 &  & \textbf{Alg. 2} & \textbf{532.8} $\pm$ & \textbf{361.6} & \textbf{1954.8} $\pm$ & \textbf{892.1} & \textbf{3588.6} $\pm$ & \textbf{1213.4} & \textbf{19.0} $\pm$ & \textbf{17.1} \\
 &  & HDoC & 1422.3 $\pm$ & 552.5 & 5692.7 $\pm$ & 1556.7 & 10729.0 $\pm$ & 2125.1 & 40.0 $\pm$ & 18.8 \\
 &  & LUCB-G & 1553.0 $\pm$ & 686.3 & 5999.9 $\pm$ & 1673.9 & 10743.0 $\pm$ & 2137.1 & 54.2 $\pm$ & 25.5 \\
 &  & APT-G & 9429.5 $\pm$ & 1835.1 & 10665.4 $\pm$ & 2049.5 & 10739.6 $\pm$ & 2027.8 & 930.5 $\pm$ & 198.6 \\
\cmidrule(lr){2-11}
 & \multirow[c]{6}{*}{Mix} & OPT & 190.7 $\pm$ & 85.1 & 939.5 $\pm$ & 341.0 & 1963.4 $\pm$ & 528.0 & 0.0 $\pm$ & 0.0 \\
 &  & \textbf{Alg. 2} & \textbf{355.1} $\pm$ & \textbf{201.1} & \textbf{1279.0} $\pm$ & \textbf{580.4} & \textbf{2366.4} $\pm$ & \textbf{837.0} & \textbf{13.6} $\pm$ & \textbf{9.9} \\
 &  & HDoC & 1407.5 $\pm$ & 458.0 & 5629.3 $\pm$ & 1286.4 & 10856.6 $\pm$ & 1845.3 & 38.9 $\pm$ & 15.5 \\
 &  & LUCB-G & 1539.0 $\pm$ & 452.8 & 5904.9 $\pm$ & 1329.0 & 10927.4 $\pm$ & 1908.7 & 54.5 $\pm$ & 17.8 \\
 &  & APT-G & 9604.7 $\pm$ & 1558.1 & 10871.0 $\pm$ & 1707.9 & 10936.0 $\pm$ & 1712.5 & 969.4 $\pm$ & 177.6 \\
\cmidrule(lr){1-11}
\multirow[c]{12}{*}{10} & \multirow[c]{6}{*}{Bern} & OPT & 290.6 $\pm$ & 169.8 & 1500.3 $\pm$ & 732.6 & 7578.7 $\pm$ & 1685.0 & 0.0 $\pm$ & 0.0 \\
 &  & \textbf{Alg. 2} & \textbf{827.4} $\pm$ & \textbf{579.3} & \textbf{2596.9} $\pm$ & \textbf{1006.4} & \textbf{10319.4} $\pm$ & \textbf{2055.9} & \textbf{50.5} $\pm$ & \textbf{39.5} \\
 &  & HDoC & 1898.3 $\pm$ & 635.2 & 7048.2 $\pm$ & 1664.1 & 27999.0 $\pm$ & 3464.8 & 112.1 $\pm$ & 36.7 \\
 &  & LUCB-G & 2306.6 $\pm$ & 792.8 & 7845.2 $\pm$ & 1878.8 & 27943.5 $\pm$ & 3446.6 & 168.1 $\pm$ & 50.9 \\
 &  & APT-G & 24674.5 $\pm$ & 3243.1 & 27721.1 $\pm$ & 3533.1 & 28054.5 $\pm$ & 3525.1 & 3348.8 $\pm$ & 427.0 \\
\cmidrule(lr){2-11}
 & \multirow[c]{6}{*}{Mix} & OPT & 210.3 $\pm$ & 80.7 & 1099.3 $\pm$ & 354.6 & 5704.8 $\pm$ & 860.6 & 0.0 $\pm$ & 0.0 \\
 &  & \textbf{Alg. 2} & \textbf{513.6} $\pm$ & \textbf{293.6} & \textbf{1756.9} $\pm$ & \textbf{680.8} & \textbf{6921.3} $\pm$ & \textbf{1239.2} & \textbf{33.1} $\pm$ & \textbf{20.8} \\
 &  & HDoC & 1894.0 $\pm$ & 599.8 & 7033.4 $\pm$ & 1454.9 & 28452.0 $\pm$ & 2805.4 & 110.7 $\pm$ & 33.2 \\
 &  & LUCB-G & 2333.9 $\pm$ & 640.8 & 8037.3 $\pm$ & 1596.4 & 28394.8 $\pm$ & 2767.7 & 169.9 $\pm$ & 43.4 \\
 &  & APT-G & 25316.0 $\pm$ & 2409.9 & 28430.8 $\pm$ & 2739.5 & 28666.2 $\pm$ & 2709.7 & 3433.8 $\pm$ & 330.1 \\
\cmidrule(lr){1-11}
\multirow[c]{12}{*}{20} & \multirow[c]{6}{*}{Bern} & OPT & 317.2 $\pm$ & 154.4 & 1679.2 $\pm$ & 700.9 & 16870.0 $\pm$ & 2480.8 & 0.0 $\pm$ & 0.0 \\
 &  & \textbf{Alg. 2} & \textbf{1085.8} $\pm$ & \textbf{728.1} & \textbf{3480.6} $\pm$ & \textbf{1758.1} & \textbf{22417.2} $\pm$ & \textbf{2864.7} & \textbf{85.4} $\pm$ & \textbf{61.2} \\
 &  & HDoC & 2606.2 $\pm$ & 862.2 & 9263.8 $\pm$ & 2366.6 & 57720.3 $\pm$ & 4800.7 & 232.6 $\pm$ & 70.2 \\
 &  & LUCB-G & 3551.1 $\pm$ & 1085.9 & 11373.0 $\pm$ & 2861.6 & 57864.9 $\pm$ & 4727.6 & 366.6 $\pm$ & 101.1 \\
 &  & APT-G & 51586.8 $\pm$ & 4899.0 & 57016.2 $\pm$ & 4960.8 & 57865.4 $\pm$ & 4874.9 & 7620.7 $\pm$ & 702.9 \\
\cmidrule(lr){2-11}
 & \multirow[c]{6}{*}{Mix} & OPT & 245.7 $\pm$ & 85.2 & 1238.2 $\pm$ & 406.4 & 12754.6 $\pm$ & 1274.8 & 0.0 $\pm$ & 0.0 \\
 &  & \textbf{Alg. 2} & \textbf{678.1} $\pm$ & \textbf{409.7} & \textbf{2219.0} $\pm$ & \textbf{819.1} & \textbf{15123.7} $\pm$ & \textbf{2066.8} & \textbf{58.1} $\pm$ & \textbf{36.2} \\
 &  & HDoC & 2674.0 $\pm$ & 687.7 & 9413.8 $\pm$ & 1687.0 & 58446.1 $\pm$ & 3700.6 & 238.8 $\pm$ & 56.5 \\
 &  & LUCB-G & 3376.2 $\pm$ & 949.0 & 11566.3 $\pm$ & 1998.0 & 58370.7 $\pm$ & 3719.3 & 354.2 $\pm$ & 89.3 \\
 &  & APT-G & 52153.7 $\pm$ & 3998.9 & 58004.5 $\pm$ & 4075.6 & 58549.6 $\pm$ & 4019.2 & 7709.9 $\pm$ & 560.2 \\
 \midrule
  \multicolumn{2}{c}{\multirow[c]{5}{*}{Dose-finding}} &OPT & 2046.3 $\pm$ & 1210.1 & \multicolumn{2}{c}{\text{-}} & 7288.8 $\pm$& 2607.6 & 0.0 $\pm$ & 0.0 \\
 &  & \textbf{Alg. 2} & \textbf{3444.7} $\pm$ & \textbf{1665.5} & \multicolumn{2}{c}{\text{-}  }& \textbf{10587.9} $\pm$ & \textbf{3291.6} & \textbf{41.4} $\pm$& \textbf{23.5} \\
 &  & HDoC & 9797.5 $\pm$& 3047.7 & \multicolumn{2}{c}{\text{-}  } & 31726.1 $\pm$& 5617.9 & 108.9 $\pm$& 36.4 \\
 &  & LUCB-G & 10333.8 $\pm$& 3270.3 & \multicolumn{2}{c}{\text{-}  } & 31779.4 $\pm$& 5665.6 & 169.4 $\pm$& 47.5 \\
 &  & APT-G & 31087.9 $\pm$& 5598.5 & \multicolumn{2}{c}{\text{-}  } & 31807.3 $\pm$& 5708.6 & 1662.9 $\pm$& 330.6 \\
\cmidrule(lr){1-11}
\end{tabular}
    \caption{Average stopping times and standard deviations for 200 independent runs. We bold the lowest average stopping times outside of OPT. $\tau_{\Gcal, 2}$ is omitted for Dose-finding (no runs find two good arms).}
    \label{tab:simulation-results} 
\end{table*}

In this section, we provide empirical results for Algorithm \ref{alg:i-GAI}, showing our method performs well beyond the minimax setting and outperforms all existing approaches for the GAI problem. For all simulations, we use error tolerance $\delta = 0.05$, $m = K$, and 200 simulations for each setting of arm distribution and $K$.

\textbf{Simulation Settings.} For all simulations, we have 2 good arms, and set the threshold to $\xi = 0.5$. We vary the total number of arms $K \in \{4, 10, 20\}$ to test different levels of good arm sparsity. For each $K$, we test both Bernoulli distribution arms, reflecting the worst-case instance, and a mixture distribution between Bernoulli and Uniform distributions, keeping the mean vector $\bm\mu \in \RR^K$ constant for each $K$. For all $K$, arms have means within $\xi \pm 0.1$ to ensure our problem is appropriately difficult (i.e., hard to distinguish whether a mean is above/below the threshold value $\xi = 0.5$), reflecting common use-cases in practice. We refer to Appendix \ref{app:exp_details} for full details on arm distributions. 

\textbf{Case Study.} We simulate the dose-finding experiment in \cite{kano2019good}, with $\xi = 0.5$ and Bernoulli arms with means $\bm\mu = [0.36, 0.34, 0.469, 0.465, 0.537]$. The means represent placebo, and secukinumab 25mg, 75mg, 150mg, and 300mg, respectively. The expected reward indicates American College of Rheumatology 20\% response (ACR) at week 16, provided in Table 2 of \cite{Genovese863}.


\textbf{Baselines and Metrics.} For our baselines, we use three different GAI algorithms, LUCB-G, APT-G, and HDoC, 
included in \cite{kano2019good}. We also include an instance-wise minimax optimal GAI algorithm that uses oracle knowledge of $\bm\mu$, denoted as OPT. We provide further detail on the sampling schemes and stopping times in Appendix \ref{app:exp_details}. For our metrics of comparison, we keep track of the following times: $\tau_{\Gcal, 1}$, $\tau_{\Gcal, 2}$, and $\tau_{\text{stop}}$, the first times in which we label one good arm, two good arms, and all arms, respectively. To show that our approach results in small regret when identifying one good arm, we track $R[\tau_{\Gcal, 1}] = \EE[\tau_{\Gcal, 1}\max_{a \in [K]}\mu_a - \sum_{t = 1}^{\tau_{\Gcal, 1}}X_t]$, the regret incurred up to time $\tau_{\Gcal, 1}$. For all methods shown, no arms were mislabeled across all simulations. 

\paragraph{Discussion of Results} Our experiments show that Algorithm \ref{alg:i-GAI} performs well across both distribution types, and for any level of good arm sparsity. Across all simulation settings, our approach provides smaller stopping times relative to all methods other than OPT. The differences between average stopping times for our method and OPT are smaller than those between our method and the next best method. This shows that our approach achieves results in roughly the same order as OPT relative to other methods.

The Bernoulli distributions provide an example of the minimax setting, where our approach achieves asymptotic optimality. In this setting, our approach provides at least a 60\% reduction across all the expected stopping times relative to any baseline without oracle knowledge. We see a similar result with our mixture distribution, where we obtain roughly a 75\% reduction in the average stopping times $\tau_{\Gcal, 1}$, $\tau_{\Gcal, 2}$, and $\tau_{\text{stop}}$. This shows our approach has strong empirical performance well beyond the minimax case. With higher degrees of sparsity (e.g., larger $K$), all algorithms suffer larger stopping times for identifying good arms ($\tau_{\Gcal, 1}, \tau_{\Gcal, 2}$) relative to OPT. While more arms necessarily lead to more exploration costs, the reductions in expected stopping time from the next best non-oracle method remain similar across all $K$, showing our approach maintains its relative performance even in sparser settings. We use $\tau_{\text{stop}}$ as a proxy for the improved power of our test. The average value of $\tau_{\text{stop}}$ serves as a proxy for the power of our sequential test, as each arm must be sufficiently sampled to be labeled. Across all distributions, we see that our stopping rules drastically reduce $\tau_{\text{stop}}$. This shows our novel sequential test significantly improves detection power, even beyond the minimax optimal $e$-power results of Theorem \ref{thm:minmax_opt_gen_bern}. 


We demonstrate the value of this minimum-harm approach in our semi-synthetic case study. Here, there only exists one good arm, and our approach identifies this arm using 65\% less samples on average than any other non-oracle method. Using reward maximizing schemes over the duration of $\tau_{\Gcal, 1}$, we ensure that we minimize regret (i.e., welfare loss due to receiving sub-optimal treatments) for this short duration, resulting in a 60\% reduction of $R[\tau_{\Gcal, 1}]$ relative to the best non-oracle methods. This demonstrates a case where the pure exploration problem of interest directly aligns with reward maximization.

\section{Conclusions and Future Directions}
\label{sec_8:conclusion}

In this work, we provide an approach for GAI that (i) ensures $\delta$-level error control under minimal nonparametric assumptions on arm distributions, (ii) asymptotically achieves the minimax optimal stopping times for labeling any number of good arms, regardless of the problem instance, and (iii) aligns directly with regret-optimal reward maximization sampling schemes in the case where the experimenter seeks to find one good arm as quickly as possible. Future directions include second-order minimax optimality, and exploring different pure exploration problems.


\newpage
\bibliography{citation}

\appendix

\thispagestyle{empty}

\onecolumn 
\section*{Appendix}
\section{Notation} \label{app:notation}
\begin{description}[
  leftmargin=2.2cm,
  labelwidth=2cm,
  labelsep=.2cm,
  parsep=0mm,
  itemsep=2pt
  ]
\item[${[}K{]}$] set of integers $1,\ldots, K$, where $K$ is the total number of arms
\item[$\Delta^K$] probability simplex over the K arms
\item[$\pi_t$] the sampling policy at time $t$ that takes in the history up to the current time and decides the probability of sampling each arm; $\pi_t: (X_i,A_i)_{i=1}^{t-1}\rightarrow \Delta^K$
\item[$\Fcal_t$] the canonical filtration at time $t$; $\Fcal_t =\sigma ((A_i, X_i)_{i=1}^t)$
\item[$\Fcal_0$] the empty sigma field
\item[$\mu_a$] the mean of arm $a$ at time $t$ when conditioned on $\Fcal_{t-1}$
\item[$\bm\mu$] the vector containing $K$ conditional arm means; $\bm\mu \equiv \{\mu_1, \cdots, \mu_K\}$
\item[$\Pcal(\mu_a)$] the set of all \emph{distributional sequences} on $[0,1]^\infty$ such that $\EE[X_t| A_t=a, \Fcal_{t-1}] = \mu_a$
\item[$\Pcal(\bm\mu)$] the set of all \emph{distributional sequences} on $[0,1]^\infty$ such that $\EE[X_t| A_t=a, \Fcal_{t-1}] = \mu_a \; \forall a\in [K]$; $\Pcal(\bm\mu) \equiv \cap_{a \in k} \Pcal(\mu_a)$
\item[$\xi$] threshold value
\item[$\Gcal_{\bm\mu}$] the set of \emph{true} good arms; $ \Gcal_{\bm\mu}= \{a \in [K]: \mu_a  > \xi\}$
\item[$\Bcal_{\bm\mu}$] the set of \emph{true} bad arms; $\Bcal_{\bm\mu} = [K]\setminus \Gcal_{\bm\mu}$
\item[$\tau_a$] the first time arm $a$ is labeled as either good or bad 
\item[$\tau_{\Gcal, i}$] the first time $i$ arms are labeled as good; $\tau_{\Gcal, i} = \inf\{t \in \NN: |\Gcal_t| = i\}$
\item[$\tau_{\text{stop}}$] the first time all arms are labeled good or bad; $\tau_{\text{stop}} = \inf\{t \in \NN: \Gcal_{t} \cup \Bcal_{t} = [K]\}$
\item[$\delta$] a fixed error rate
\item[$\Hcal^-_a$] the composite hypothesis that the conditional mean of arm $a$ is \emph{below or equal} to the threshold; $\Hcal^-_a=\{\Pcal(\mu_a): \mu_a \leq \xi\}$
\item[$\Hcal^+_a$] the composite hypothesis that the conditional mean of arm $a$ is \emph{above} the threshold; $\Hcal^+_a= \{\Pcal(\mu_a):\mu_a > \xi\}$
\item[$E$] e-process; a sequence of $\Fcal_t$-measurable random variables $(E_t)_{t \in \NN}$ satisfying 1) $E$ is nonnegative and (2) $\sup_{P \in \Hcal}\sup_{\tau}\EE_P[E_\tau] \leq 1$, where first supremum is taken over all possible distributions in the null set $\Hcal$, and the second supremum is taken over all (potentially infinite) stopping times $\tau$
\item[$T_t(E, \delta)$] anytime-valid test representing $\mathbf{1}[E_t\geq 1/\delta]$, where a value of $1$ indicates rejection of the null hypothesis at time $t$
\item[$\EE_{P}(\log(E_t))/t$] 
$e$-power of any hypothesis $P$ not in the null hypothesis set $\Hcal$ of an $e$-process $E$; the logarithmic growth rate of an $e$-process normalized by the number of observations $t$ 
\item[$\lambda$] a $\Fcal_{t-1}$-measurable univariate parameter
\item[$E_a$]  an $e$-process parametrized by $\lambda$, where the sequence of random variables $\left(E_t^a(\lambda, \xi)\right)_{t \in \NN}$ is characterized by $E_t^a(\lambda, \xi) = \prod_{i: A_i = a} (1+  \lambda(X_i - \xi) )$
\item[$b$] truncation constant in $[0,1]$ ensuring that our test martingales are strictly non-negative and are $e$-processes
\item[$\Ecal(\Hcal^+_a, b)$] class of $e$-processes $E_a$ where the range of $\lambda$ is in $ [-b/(1-\xi), 0]$
\item[$\Ecal(\Hcal^-_a, b)$] class of $e$-processes $E_a$ where the range of $\lambda$ is in $ [0, b/\xi]$
\item[$N_{t}(a)$] the number of draws from arm $a$ by time $t$
\item[$\hat\mu_{t}(a)$] the empirical average of the mean of arm $a$ at time $t$; $\hat\mu_{t}(a) = \frac{\sum_{i \leq t: A_i = a} X_i}{N_{t}(a)}$
\item[$(E_t^\text{PrPl}(\Hcal_a^-, b))_{t\in\mathbb{N}}, (E_t^\text{PrPl}(\Hcal_a^+, b))_{t\in\mathbb{N}}$] Our proposed predictable plugin $e$-processes as defined in Definition~\ref{defn:predicable_maximin}
\item[$a^*$] the arm with the highest expected conditional mean;  $a^* = \argmax_{a \in [K]}\mu_a$
\item[$c$] a positive constant independent of $t$ such that the probability of not selecting arm $a^*$ under policy $\pi$ is upper bounded by $c/t$; $\PP_{\pi}(A_t \neq a^*) \leq c/t$
\item[$\Ical_{t-1}$] a set of unlabeled arms returned by the algorithm by the end of the time step $t-1$
\item[$m$] optional parameter denoting the number of good arms $m$ that enable early stopping of GAI when $|\Gcal_{\bm\mu}|\geq m$
\item[$\Gcal_t$] good arm set returned by the algorithm at time $t$
\item[$\Bcal_t$] bad arm set returned by the algorithm at time $t$
\item[$d(\mu_a, \xi)$] The KL divergence between 2 Bernoulli distributions with means $\mu_a$ and $\xi$; $\log\frac{1-\mu_a}{1-\xi} + \mu_a\log\frac{\mu_a(1-\xi)}{\xi(1-\mu_a)}$
\item[$\mu_{i^*}$] the $i$-th largest mean among $\bm\mu$
\item[$R(\tau_{\Gcal, 1})$] regret incurred up to time $\tau_{\Gcal, 1}$; $R[\tau_{\Gcal, 1}] = \EE[\tau_{\Gcal, 1}\max_{a \in [K]}\mu_a - \sum_{t = 1}^{\tau_{\Gcal, 1}}X_t]$
\end{description}

\section{Experiment Details} \label{app:exp_details}
In this section, we provide details about our experimental setup and two ablation studies to test the performance of different combinations of policies and stopping times.

\subsection{Simulation details}
\subsubsection{Reward Data-Generating Process (DGP)}\label{subsec:dgp}
We tested two reward DGPs for each arm: Bernoulli (Bern) and Mixture (Mix). The mixture DGP is generated by averaging one Bernoulli distribution and one Uniform (Unif) distribution:
\[p_k(x) = \frac{p_{\text{Bern}(2\mu_k - 1/2)}(x)}{2} + \frac{p_{\text{Unif(0, 1)}}(x)}{2}.
\]
Note that \[
\mu_\text{Mix} = \frac{\mu_\text{Bern} + \mu_\text{Unif}}{2} = \frac{(2\mu_k - 1/2) + (1/2)}{2} = \mu_k.
\] The two DGPs share the same mean for each arm. 

\subsubsection{Number of arms and mean vectors}
We set $\xi = 0.5$ throughout.
In our synthetic experiments, we fix the number of the good arms to be 2 and set their means to be $\mu_1 = \xi + 0.1, \mu_2 = \xi + 0.05$, respectively.
We tested three values of $K$: $4,10$, and $20$, corresponding to $2,8,$ and  $18$ bad arms, respectively.
We equally distribute the means of the bad arms between $\xi-0.05$ and $\xi-0.1$. Table~\ref{tab:synthetic-setup} describes our setup.


\begin{table}[h]
    \centering
    \begin{tabular}{c|cccc}
        & \multicolumn{4}{c}{Number of arms with $\mu = $}\\
         $K$& $\xi + 0.1$ & $\xi + 0.05$ & $\xi - 0.05$ & $\xi - 0.1$ \\
         \midrule
         4 & 1 & 1 & 1 & 1\\ 
         10 & 1 & 1 & 4 & 4\\
         20 & 1 & 1 & 9 & 9\\
    \end{tabular}
    \caption{Number of arms with specified means in the synthetic experimental setup}
    \label{tab:synthetic-setup}
\end{table}


\subsubsection{Sampling policies}
To test the performance of Algorithm~\ref{alg:i-GAI}, we consider two reward-maximizing sampling algorithms, along with three algorithms considered by
\cite{kano2018good}. Furthermore, we benchmark the performance of all algorithms against an oracle that knows the arm means in advance, deterministically pulling the optimal arm at each round.
The arm selection criteria of each algorithm in each iteration can be compactly described as follows:
\begin{align*}
    &\text{Oracle Policy} = \argmax_{a\in \Ical_{t-1}} \mu_a\\
    &\text{MOSS} (\text{Alg. 2}) = \argmax_{a\in \Ical_{t-1}} 
    \hat\mu_{t-1}(a) + \sqrt{\frac{1 + \alpha}{2} \frac{\max(0, \log{\frac{t}{K N_{t-1}(a)})}}{N_{t-1}(a)}}\\
    &\text{UCB} = \argmax_{a\in \Ical_{t-1}}
    \hat\mu_{t-1}(a) + \sqrt{\frac{\log(1 + t \log^2(t)))}{2 N_{t-1}(a)}}\\
    &\text{HDoC} = \argmax_{a\in \Ical_{t-1}}
    \hat\mu_{t-1}(a) + \sqrt{\frac{\log(t)}{2 N_{t-1}(a)}}\\
    &\text{LUCB-G} = \argmax_{a\in \Ical_{t-1}}
     \hat\mu_{t-1}(a) + \sqrt{\frac{\log(4 K N^2_{t-1}(a) / \alpha)}{2 N_{t-1}(a)}}\\
    &\text{APT-G} = \argmin_{a\in \Ical_{t-1}}
    \sqrt{N_{t-1}(a)} |\xi - \hat\mu_{t-1}(a)|\\
\end{align*}
We initialize $\hat\mu_0=\xi=0.5$. 
We set $\alpha = 0.05$ in Alg. 2 (MOSS) and LUCB-G.


\subsubsection{Stopping Criterion}
Alg. 2 in our ablation study implements the stopping criterion as described in
Algorithm \ref{alg:i-GAI}.
We set the error parameter $\delta = 0.05$ and truncation constant $b = 0.98$. We did not specify $m$: we ran the sampling policy until all arms were identified as either good or bad.

The stopping rule in HDoC, LUCB-G, and APT-G are provided by \cite{kano2018good}. We label 
an arm $a$ as good, if \[
\hat\mu_{t-1}(a) + \sqrt{\frac{\log(4KN^2_{t-1}(a) / \delta)}{2 N_{t-1}(a)}} > \xi \] and as bad, if \[
\hat\mu_{t-1}(a) - \sqrt{\frac{\log(4KN^2_{t-1}(a) / \delta)}{2 N_{t-1}(a)}} < \xi. \]
We set $\delta = 0.05$.

Since the oracle knows the true mean vector $\bm\mu$, we substitute the true mean into the stopping criteria for the oracle, rather than using the predictable plug-in 
$e$-process described in ~\autoref{defn:predicable_maximin}.  This substitution is applied in \autoref{eq:lambda_minus} and \autoref{eq:lambda_plus}.



\subsection{Implementation and Runtime}
The runtime of all algorithms is linear in the number of iterations necessary to stop all arms. 
We implement our experiments in Python.
Without parallelization, the runtime of our algorithm is roughly 1/6000 seconds per iteration. A parallelized version of the algorithm was executed 
on an AWS EC2 c7a.12xlarge instance with 48 cores of CPU and 96GiB RAM.
All methods are computationally efficient, as the sampling policies and stopping criterion can all be
incrementally updated with low computational complexity.




\subsection{Ablation studies}
\label{subsec:ablation}
Our algorithm consists of two components: 1)
an adaptive sampling scheme and 2) a sequential AV test that serves as the stopping criterion.
To investigate the performance of each component, we
conduct two ablation studies, testing different combinations of sampling schemes and stopping criteria under two DGPs described in ~\autoref{subsec:dgp}.

\begin{table}[tbh]
    \centering
    \fontsize{8}{7}\selectfont
     \begin{tabular}{llll|rlrlrlrl}
 $K$ & Dist. & Policy & Stopping Criteria & \multicolumn{2}{c}{$\tau_{\Gcal, 1}$} & \multicolumn{2}{c}{$\tau_{\Gcal, 2}$} & \multicolumn{2}{c}{$\tau_{\text{stop}}$} & \multicolumn{2}{c}{$R[\tau_{\Gcal, 1}]$} \\
 \toprule
\multirow[c]{8}{*}{4} & \multirow[c]{4}{*}{Bern} & Alg. 2 & Alg. 2 & 532.8 $\pm$ & 361.6 & 1954.8 $\pm$ & 892.1 & 3588.6 $\pm$ & 1213.4 & 19.0 $\pm$ & 17.1 \\
 &  & HDoC & Alg. 2 & 570.4 $\pm$ & 324.3 & 2086.4 $\pm$ & 947.0 & 3605.6 $\pm$ & 1244.2 & 22.4 $\pm$ & 13.0 \\
 &  & Alg. 2 & HDoC & 1231.2 $\pm$ & 536.4 & 5623.2 $\pm$ & 1587.9 & 10707.5 $\pm$ & 2180.5 & 24.9 $\pm$ & 19.8 \\
 &  & HDoC & HDoC & 1422.3 $\pm$ & 552.5 & 5692.7 $\pm$ & 1556.7 & 10729.0 $\pm$ & 2125.1 & 40.0 $\pm$ & 18.8 \\
\cmidrule(lr){2-12}
 & \multirow[c]{4}{*}{Mix} & Alg. 2 & Alg. 2 & 355.1 $\pm$ & 201.1 & 1279.0 $\pm$ & 580.4 & 2366.4 $\pm$ & 837.0 & 13.6 $\pm$ & 9.9 \\
 &  & HDoC & Alg. 2 & 399.3 $\pm$ & 231.9 & 1370.2 $\pm$ & 554.6 & 2394.0 $\pm$ & 772.1 & 18.0 $\pm$ & 10.7 \\
 &  & Alg. 2 & HDoC & 1199.4 $\pm$ & 391.9 & 5530.2 $\pm$ & 1302.4 & 10949.5 $\pm$ & 1833.1 & 22.9 $\pm$ & 12.4 \\
 &  & HDoC & HDoC & 1407.5 $\pm$ & 458.0 & 5629.3 $\pm$ & 1286.4 & 10856.6 $\pm$ & 1845.3 & 38.9 $\pm$ & 15.5 \\
\cmidrule(lr){1-12} \cmidrule(lr){2-12}
\multirow[c]{8}{*}{10} & \multirow[c]{4}{*}{Bern} & Alg. 2 & Alg. 2 & 827.4 $\pm$ & 579.3 & 2596.9 $\pm$ & 1006.4 & 10319.4 $\pm$ & 2055.9 & 50.5 $\pm$ & 39.5 \\
 &  & HDoC & Alg. 2 & 904.3 $\pm$ & 506.2 & 3173.3 $\pm$ & 1339.2 & 10362.8 $\pm$ & 1994.9 & 72.5 $\pm$ & 37.4 \\
 &  & Alg. 2 & HDoC & 1562.0 $\pm$ & 829.3 & 6329.0 $\pm$ & 1800.0 & 27977.4 $\pm$ & 3431.7 & 60.5 $\pm$ & 50.2 \\
 &  & HDoC & HDoC & 1898.3 $\pm$ & 635.2 & 7048.2 $\pm$ & 1664.1 & 27999.0 $\pm$ & 3464.8 & 112.1 $\pm$ & 36.7 \\
\cmidrule(lr){2-12}
 & \multirow[c]{4}{*}{Mix} & Alg. 2 & Alg. 2 & 513.6 $\pm$ & 293.6 & 1756.9 $\pm$ & 680.8 & 6921.3 $\pm$ & 1239.2 & 33.1 $\pm$ & 20.8 \\
 &  & HDoC & Alg. 2 & 686.9 $\pm$ & 320.9 & 2222.3 $\pm$ & 929.4 & 6877.5 $\pm$ & 1458.8 & 60.3 $\pm$ & 26.2 \\
 &  & Alg. 2 & HDoC & 1512.4 $\pm$ & 693.1 & 6157.6 $\pm$ & 1313.4 & 28457.3 $\pm$ & 3054.7 & 54.8 $\pm$ & 38.3 \\
 &  & HDoC & HDoC & 1894.0 $\pm$ & 599.8 & 7033.4 $\pm$ & 1454.9 & 28452.0 $\pm$ & 2805.4 & 110.7 $\pm$ & 33.2 \\
\cmidrule(lr){1-12} \cmidrule(lr){2-12}
\multirow[c]{8}{*}{20} & \multirow[c]{4}{*}{Bern} & Alg. 2 & Alg. 2 & 1085.8 $\pm$ & 728.1 & 3480.6 $\pm$ & 1758.1 & 22417.2 $\pm$ & 2864.7 & 85.4 $\pm$ & 61.2 \\
 &  & HDoC & Alg. 2 & 1481.9 $\pm$ & 719.8 & 4825.2 $\pm$ & 1652.2 & 22549.2 $\pm$ & 2864.7 & 162.7 $\pm$ & 72.8 \\
 &  & Alg. 2 & HDoC & 2131.0 $\pm$ & 1426.7 & 7055.6 $\pm$ & 1964.7 & 57938.8 $\pm$ & 4915.1 & 120.2 $\pm$ & 100.3 \\
 &  & HDoC & HDoC & 2606.2 $\pm$ & 862.2 & 9263.8 $\pm$ & 2366.6 & 57720.3 $\pm$ & 4800.7 & 232.6 $\pm$ & 70.2 \\
\cmidrule(lr){2-12}
 & \multirow[c]{4}{*}{Mix} & Alg. 2 & Alg. 2 & 678.1 $\pm$ & 409.7 & 2219.0 $\pm$ & 819.1 & 15123.7 $\pm$ & 2066.8 & 58.1 $\pm$ & 36.2 \\
 &  & HDoC & Alg. 2 & 1289.6 $\pm$ & 619.3 & 3864.7 $\pm$ & 1342.0 & 15398.2 $\pm$ & 2102.4 & 150.2 $\pm$ & 67.0 \\
 &  & Alg. 2 & HDoC & 2071.0 $\pm$ & 1283.7 & 7076.4 $\pm$ & 1410.0 & 58546.1 $\pm$ & 3831.8 & 111.8 $\pm$ & 85.0 \\
 &  & HDoC & HDoC & 2674.0 $\pm$ & 687.7 & 9413.8 $\pm$ & 1687.0 & 58446.1 $\pm$ & 3700.6 & 238.8 $\pm$ & 56.5 \\
\cmidrule(lr){1-12} \cmidrule(lr){2-12}
\end{tabular}
\caption{Ablation study for comparing Alg.2 with HDoC by testing different combinations of sampling policies and stopping criterion. We report the mean and standard deviation of 200 independent runs. Across all runs, we observe $2$ runs with mislabeled arms. They were discarded in the stopping time calculation.}
\label{tab:stopping-ablation}
\end{table}

\begin{figure}[ht]
    \centering
    \begin{subfigure}[t]{0.48\linewidth}
        \centering
        \includegraphics[width=\linewidth]{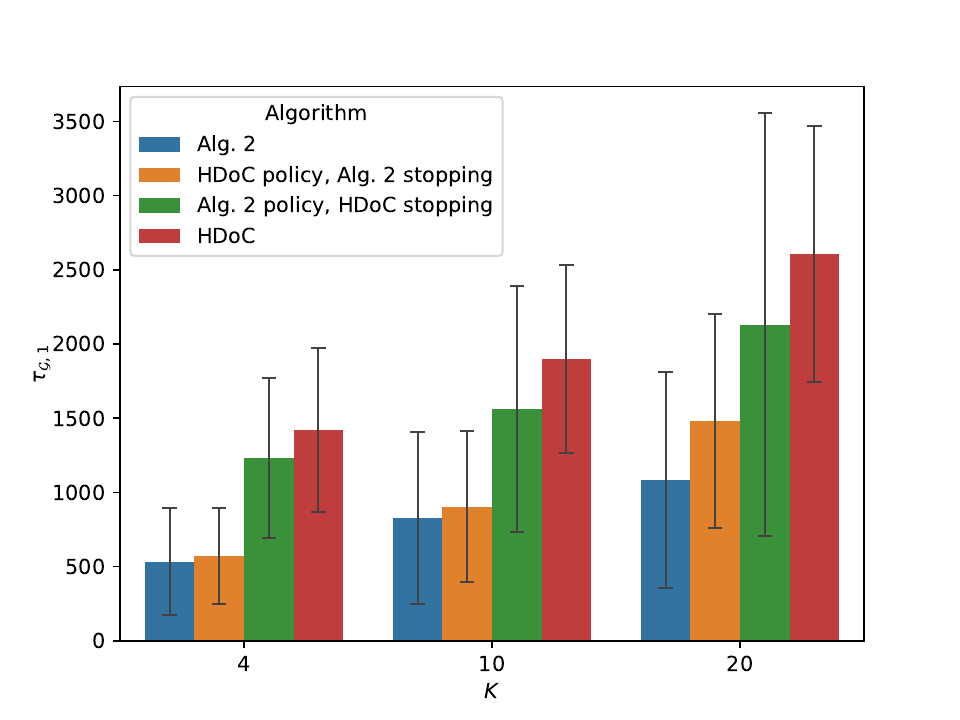}
        \caption{Bernoulli Case}
        \label{fig:bern-stopping-ablation}
    \end{subfigure}
    \hfill
    \begin{subfigure}[t]{0.48\linewidth}
        \centering
        \includegraphics[width=\linewidth]{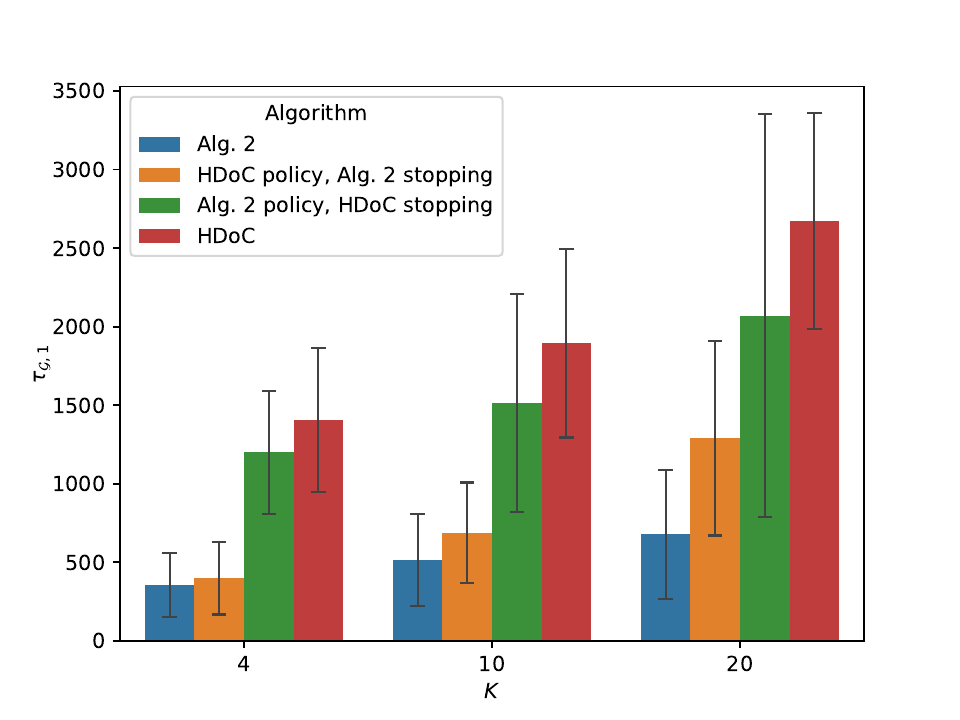}
        \caption{Mixture Case}
        \label{fig:mix-stopping-ablation}
    \end{subfigure}
    \caption{Visualization of $\tau_{\Gcal, 1}$ ablation results from \autoref{tab:stopping-ablation}.}
    \label{fig:stopping-ablation}
\end{figure}

\subsubsection{Comparing HDoC and Alg. 2}
To investigate the relative performance of HDoC and Alg. 2 regarding both the efficiency of the sampling policy and the power of stopping criteria, we exhaustively test the combinations of the 2 sampling policies and stopping criteria in \autoref{tab:stopping-ablation}. Additionally, we provide a visualization of 
%
$\tau_{\Gcal, 1}$ in \autoref{fig:stopping-ablation}.

We observe that when the stopping criterion is fixed to be that of Alg. 2, the performance difference between the two sampling policies is relatively small across all stopping times. However, this gap increases with the number of arms. For $\tau_{\Gcal,1}$, this gap ranges from 40 when $K=4$ and 400 when $K=20$ in the Bernoulli case.
When the stopping criterion is fixed to that of HDoC, we empirically observe that Moss (Alg. 2) outperforms the sampling algorithm of HDoC by a larger margin. 

Further, we observe that our sampling algorithm results in lower regret (measured by $R_{\tau_{\Gcal,1}}$), with the reduction in regret increasing as the number of arms increases. When comparing the 2 mix-and-match experiments, we observe that while HDoC sampling combined with Alg. 2 stopping achieves a lower stopping time, Alg. 2 sampling paired with HDoC stopping achieves a lower regret when $K\geq 10$, highlighting the benefits of regret minimizing algorithms.

When we fix the sampling algorithm and compare the performance of the two stopping criteria, we find that our proposed stopping criterion consistently achieves better stopping times by a significant margin.


Overall, we find that Alg. 2 outperforms all other algorithm configurations. While our improved stopping criterion primarily contributes to the performance gain in stopping time, it's important to note that the regret-minimizing approach achieves lower regret while maintaining comparable stopping time performance. Further, we observe that our modified regret-minimizing policy performs better even when paired with the stopping criterion of HDoC.


\begin{table}[bt]
    \centering
    \fontsize{8}{7}\selectfont
    \begin{tabular}{llll|rlrlrlrl}
 $K$ & Dist. & Policy & Stopping Criteria & \multicolumn{2}{c}{$\tau_{\Gcal, 1}$} & \multicolumn{2}{c}{$\tau_{\Gcal, 2}$} & \multicolumn{2}{c}{$\tau_{\text{stop}}$} & \multicolumn{2}{c}{$R[\tau_{\Gcal, 1}]$}\\
 \toprule
\multirow[c]{8}{*}{4} & \multirow[c]{4}{*}{Bern} & Oracle & Oracle & 255.1 $\pm$ & 177.5 & 1231.3 $\pm$ & 595.5 & 2505.6 $\pm$ & 896.5 & 0.0 $\pm$ & 0.0 \\
 &  & Oracle & Alg. 2 & 353.4 $\pm$ & 222.0 & 1776.1 $\pm$ & 800.4 & 3592.4 $\pm$ & 1210.6 & 0.0 $\pm$ & 0.0 \\
 &  & Alg. 2 & Oracle & 427.2 $\pm$ & 292.1 & 1404.1 $\pm$ & 758.8 & 2520.9 $\pm$ & 922.2 & 16.7 $\pm$ & 15.4 \\
 &  & Alg. 2 & Alg. 2 & 532.8 $\pm$ & 361.6 & 1954.8 $\pm$ & 892.1 & 3588.6 $\pm$ & 1213.4 & 19.0 $\pm$ & 17.1 \\
\cmidrule(lr){2-12}
 & \multirow[c]{4}{*}{Mix} & Oracle & Oracle & 190.7 $\pm$ & 85.1 & 939.5 $\pm$ & 341.0 & 1963.4 $\pm$ & 528.0 & 0.0 $\pm$ & 0.0 \\
 &  & Oracle & Alg. 2 & 214.5 $\pm$ & 120.7 & 1133.3 $\pm$ & 531.0 & 2386.6 $\pm$ & 802.7 & 0.0 $\pm$ & 0.0 \\
 &  & Alg. 2 & Oracle & 311.1 $\pm$ & 175.0 & 1051.9 $\pm$ & 387.2 & 1911.4 $\pm$ & 513.7 & 12.7 $\pm$ & 9.7 \\
 &  & Alg. 2 & Alg. 2 & 355.1 $\pm$ & 201.1 & 1279.0 $\pm$ & 580.4 & 2366.4 $\pm$ & 837.0 & 13.6 $\pm$ & 9.9 \\
\cmidrule(lr){1-12} \cmidrule(lr){2-12}
\multirow[c]{8}{*}{10} & \multirow[c]{4}{*}{Bern} & Oracle & Oracle & 290.6 $\pm$ & 169.8 & 1500.3 $\pm$ & 732.6 & 7578.7 $\pm$ & 1685.0 & 0.0 $\pm$ & 0.0 \\
 &  & Oracle & Alg. 2 & 398.7 $\pm$ & 217.5 & 2051.0 $\pm$ & 884.5 & 10236.3 $\pm$ & 1969.1 & 0.0 $\pm$ & 0.0 \\
 &  & Alg. 2 & Oracle & 616.4 $\pm$ & 394.7 & 1952.4 $\pm$ & 798.6 & 7535.7 $\pm$ & 1521.0 & 41.6 $\pm$ & 28.4 \\
 &  & Alg. 2 & Alg. 2 & 827.4 $\pm$ & 579.3 & 2596.9 $\pm$ & 1006.4 & 10319.4 $\pm$ & 2055.9 & 50.5 $\pm$ & 39.5 \\
\cmidrule(lr){2-12}
 & \multirow[c]{4}{*}{Mix} & Oracle & Oracle & 210.3 $\pm$ & 80.7 & 1099.3 $\pm$ & 354.6 & 5704.8 $\pm$ & 860.6 & 0.0 $\pm$ & 0.0 \\
 &  & Oracle & Alg. 2 & 261.8 $\pm$ & 128.3 & 1310.6 $\pm$ & 545.0 & 6900.7 $\pm$ & 1291.9 & 0.0 $\pm$ & 0.0 \\
 &  & Alg. 2 & Oracle & 489.7 $\pm$ & 263.3 & 1522.1 $\pm$ & 498.6 & 5701.6 $\pm$ & 899.6 & 34.3 $\pm$ & 22.3 \\
 &  & Alg. 2 & Alg. 2 & 513.6 $\pm$ & 293.6 & 1756.9 $\pm$ & 680.8 & 6921.3 $\pm$ & 1239.2 & 33.1 $\pm$ & 20.8 \\
\cmidrule(lr){1-12} \cmidrule(lr){2-12}
\multirow[c]{8}{*}{20} & \multirow[c]{4}{*}{Bern} & Oracle & Oracle & 317.2 $\pm$ & 154.4 & 1679.2 $\pm$ & 700.9 & 16870.0 $\pm$ & 2480.8 & 0.0 $\pm$ & 0.0 \\
 &  & Oracle & Alg. 2 & 433.3 $\pm$ & 204.4 & 2327.9 $\pm$ & 1016.6 & 22484.2 $\pm$ & 2803.2 & 0.0 $\pm$ & 0.0 \\
 &  & Alg. 2 & Oracle & 939.9 $\pm$ & 622.4 & 2833.0 $\pm$ & 1526.8 & 16819.9 $\pm$ & 2490.7 & 82.6 $\pm$ & 57.4 \\
 &  & Alg. 2 & Alg. 2 & 1085.8 $\pm$ & 728.1 & 3480.6 $\pm$ & 1758.1 & 22417.2 $\pm$ & 2864.7 & 85.4 $\pm$ & 61.2 \\
\cmidrule(lr){2-12}
 & \multirow[c]{4}{*}{Mix} & Oracle & Oracle & 245.7 $\pm$ & 85.2 & 1238.2 $\pm$ & 406.4 & 12754.6 $\pm$ & 1274.8 & 0.0 $\pm$ & 0.0 \\
 &  & Oracle & Alg. 2 & 289.6 $\pm$ & 143.3 & 1451.2 $\pm$ & 545.5 & 15216.0 $\pm$ & 2164.5 & 0.0 $\pm$ & 0.0 \\
 &  & Alg. 2 & Oracle & 683.2 $\pm$ & 413.8 & 1956.5 $\pm$ & 603.0 & 12712.6 $\pm$ & 1299.0 & 60.3 $\pm$ & 37.8 \\
 &  & Alg. 2 & Alg. 2 & 678.1 $\pm$ & 409.7 & 2219.0 $\pm$ & 819.1 & 15123.7 $\pm$ & 2066.8 & 58.1 $\pm$ & 36.2 \\
\cmidrule(lr){1-12} \cmidrule(lr){2-12}
\end{tabular}
    \caption{Results for different combinations of policies and stopping times with and without oracle knowledge. We report the mean and standard deviation of 200 independent runs. All arms were correctly identified.}
    \label{tab:oracle-ablation}
\end{table}
\begin{figure}[ht]
    \centering
    \begin{subfigure}[t]{0.48\linewidth}
        \centering
        \includegraphics[width=\linewidth]{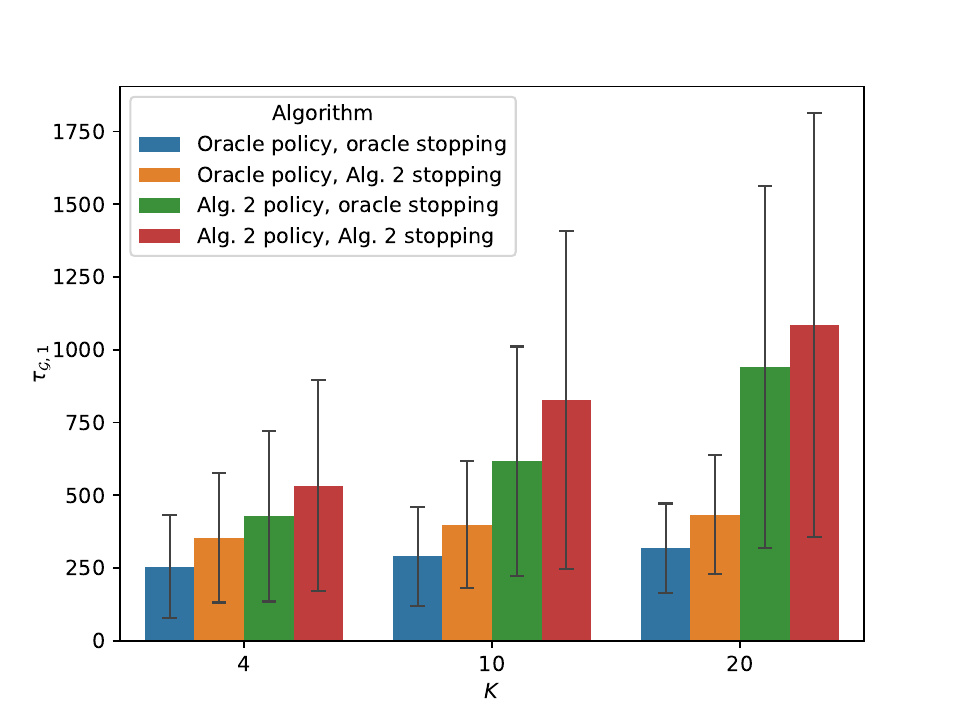}
        \caption{Bernoulli Case}
        \label{fig:bern-oracle-ablation}
    \end{subfigure}
    \hfill
    \begin{subfigure}[t]{0.48\linewidth}
        \centering
        \includegraphics[width=\linewidth]{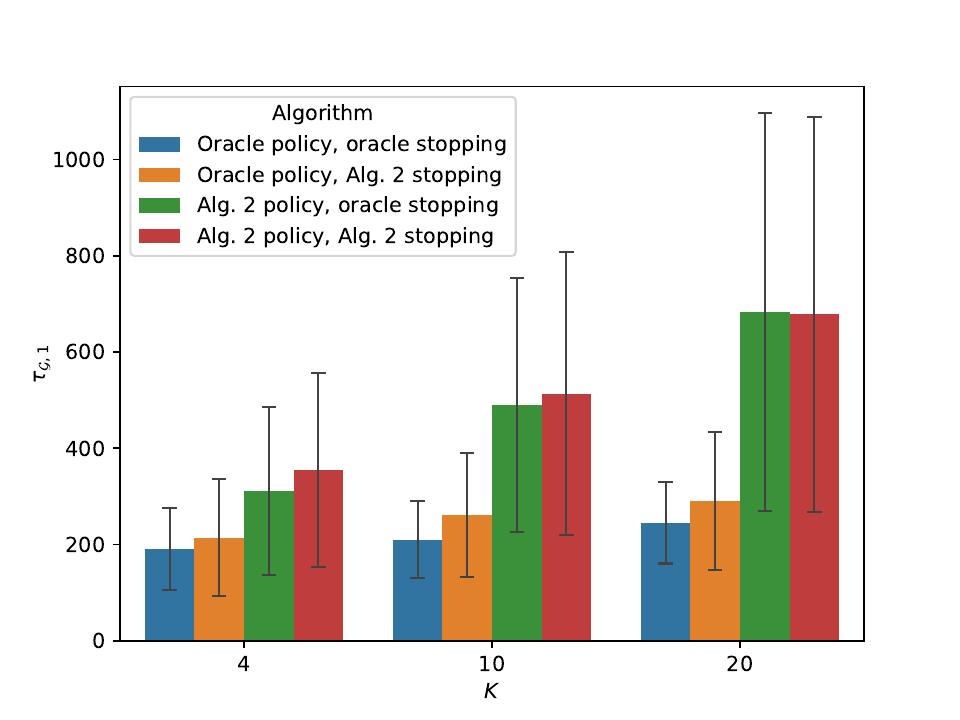}
        \caption{Mixture Case}
        \label{fig:mix-oracle-ablation}
    \end{subfigure}
    \caption{Visualization of $\tau_{\Gcal, 1}$ results from \autoref{tab:oracle-ablation}.}
    \label{fig:oracle-ablation}
\end{figure}

\subsubsection{Comparing Oracle with Alg. 2}
Next, we compare the performance of Alg. 2 to that of Oracle to gain insights into how closely our sampling policy and stopping criterion align with the performance achievable with oracle knowledge.
%
%
%
We present our results in \autoref{tab:oracle-ablation}, and
visualized $\tau_{\Gcal, 1}$
in \autoref{fig:oracle-ablation}.

When fixing the sampling policy to be either the oracle or Alg. 2, we observe that our stopping criterion, on average, incurs an additional 100 draws for $\tau_{\Gcal,1}$ and 500 draws for $\tau_{\Gcal,1}$ under Bernoulli distributions. Under mixture distributions, this gap is further reduced to 50 and 220, respectively.


When fixing the stopping criterion to be either the oracle or Alg. 2, we observe that our sampling policy incurs an additional 200 draws when $K=4$, with this number increasing as the number of arms grows.
However, this gap is significantly smaller than that between HDoC and the Oracle, as evidenced by the comparisons between \autoref{tab:oracle-ablation} and \autoref{tab:stopping-ablation}. 



\newpage

\section{Proofs}
Throughout our proofs, we will build upon two simple yet useful lemmas for establishing our theoretical results.

\subsection{Preliminary Lemmas}

\begin{lemma}[Concavity with Respect to $\lambda$.]\label{lem:concavity}

The function $\log(1+\lambda(X - \xi))$ is concave with respect to $X \in [0,1]$ for any $\xi \in (0,1)$, and any $\lambda \in [\frac{-b}{1-\xi}, \frac{b}{\xi}]$ for $b\in (0,1)$. 
\end{lemma}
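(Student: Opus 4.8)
The plan is to treat $f(X) = \log\prns{1 + \lambda(X-\xi)}$ as the composition of the (concave) logarithm with the affine map $X \mapsto 1 + \lambda(X-\xi)$. Since precomposing a concave function with an affine function preserves concavity, the entire content of the lemma reduces to a single domain check: I must verify that the argument of the logarithm stays strictly positive over all $X \in [0,1]$ and all admissible $\lambda$, so that $f$ is well-defined and finite throughout. Concavity is then immediate and requires no further work.

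First I would establish the positivity bound $1 + \lambda(X-\xi) \geq 1 - b > 0$ for every $X \in [0,1]$ and every $\lambda \in [-b/(1-\xi),\, b/\xi]$. Because the argument is affine in $X$, its minimum over $[0,1]$ is attained at an endpoint, so I split on the sign of $\lambda$. When $\lambda \geq 0$ the minimum occurs at $X = 0$, yielding $1 - \lambda\xi \geq 1 - (b/\xi)\xi = 1 - b$; when $\lambda \leq 0$ the minimum occurs at $X = 1$, yielding $1 + \lambda(1-\xi) \geq 1 - \prns{b/(1-\xi)}(1-\xi) = 1 - b$. Since $b \in (0,1)$, both cases give a lower bound of $1 - b > 0$, which is exactly where the constraint $b < 1$ (rather than merely $b \leq 1$) is used to ensure the logarithm does not degenerate.

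Given positivity, I would finish by the composition argument above, or equivalently by a direct second-derivative computation: differentiating twice in $X$ gives $f''(X) = -\lambda^2/\prns{1+\lambda(X-\xi)}^2$, which is nonpositive wherever the denominator is nonzero, and the previous step guarantees the denominator is bounded away from $0$. Either route establishes concavity. There is no genuinely hard step here; the only subtlety worth flagging is that the specific range $\lambda \in [-b/(1-\xi),\, b/\xi]$ is precisely the largest symmetric-in-spirit interval for which the endpoint values $1 - \lambda\xi$ and $1 + \lambda(1-\xi)$ remain positive on all of $[0,1]$, so the lemma's hypotheses on $\lambda$ and $b$ are exactly what the positivity check consumes.
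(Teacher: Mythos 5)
Your proposal is correct and matches the paper's proof in substance: the paper establishes concavity in $X$ via the same second-derivative computation $f''(X) = -\lambda^2/(1+\lambda(X-\xi))^2$ and the same positivity bound $\lambda(X-\xi) \geq -b > -1$ that you derive by checking the endpoints $X=0$ and $X=1$. Your leading framing via composition of $\log$ with an affine map is an equivalent (and arguably cleaner) packaging of the same content, and your explicit observation that $b<1$ is exactly what keeps the logarithm nondegenerate is the same role it plays in the paper's argument.
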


\begin{proof}[Proof of Lemma \ref{lem:concavity}]
    We show this with a simple second derivative test. 
    \begin{align}
        \frac{\partial^2}{\partial^2 X}\left( \log(1+\lambda(X-\xi) \right) &= \frac{\partial}{\partial \lambda} \frac{\lambda}{1+\lambda(X-\xi)}\\
        &= -\frac{\lambda^2}{(1+\lambda(X-\xi))^2}
    \end{align}
    By the bounds on $\lambda$, $b$, $X$, and $\xi$, the denominator term is nonzero, and the second derivative is well defined. 
    $$ \lambda(X - \xi) \geq \min\left(\frac{-b}{1-\xi}(1-\xi), \frac{b}{\xi}(-\xi)\right) = -b > -1.  $$
    Thus, we obtain that the second derivative is strictly nonpositive, i.e., $\frac{\partial^2}{\partial^2 \lambda}\left( \log(1+\lambda(X-\xi) \right) \leq 0$, and thus $\log(1+\lambda(X - \xi))$ is concave with respect to $\lambda$ for any $X, \xi \in [0,1]$, and $b \in (0,1)$. 
\end{proof}

\begin{lemma}[Worst-Case Instance for $\Pcal(\bm \mu)$]\label{lem:worst_case_instance}

For any $m_a \in [0,1]$, any $\xi \in (0,1)$, and any $\lambda \in [\frac{-b}{1-\xi}, \frac{b}{\xi}]$ for $b\in (0,1)$
$$\inf_{P_a \in \Pcal(\mu_a)}\EE_{X \sim P_a}[\log(1+\lambda(X-\xi))] = \EE_{X \sim \text{Bern}(\mu_a)}[\log(1+\lambda(X - \xi))].$$
\end{lemma}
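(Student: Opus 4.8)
The plan is to reduce the infimum over distribution \emph{sequences} to a single mean-constrained distribution, and then exploit the concavity established in Lemma~\ref{lem:concavity}. The integrand $g(x) := \log(1+\lambda(x-\xi))$ depends on the trajectory only through a single draw $X$ whose conditional law has mean $\mu_a$; hence $\EE_{X \sim P_a}[g(X)]$ depends on $P_a \in \Pcal(\mu_a)$ only through the marginal law of $X$ on $[0,1]$ subject to $\EE[X] = \mu_a$. So it suffices to minimize $\EE[g(X)]$ over all laws on $[0,1]$ with mean $\mu_a$.

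First I would check finiteness at the endpoints: for $\lambda \in [-b/(1-\xi),\, b/\xi]$ with $b \in (0,1)$, we have $1 + \lambda(0-\xi) \ge 1 - b > 0$ and $1 + \lambda(1-\xi) \ge 1 - b > 0$, so $g(0)$ and $g(1)$ are well-defined (this is exactly where $b < 1$ is used). Next, by Lemma~\ref{lem:concavity}, $g$ is concave on $[0,1]$, so it lies above each of its chords; writing any $x \in [0,1]$ as $x = (1-x)\cdot 0 + x\cdot 1$, concavity gives the affine lower bound
\[
g(x) \ge (1-x)\,g(0) + x\, g(1), \qquad x \in [0,1].
\]
Taking expectations under any $P_a$ with $\EE[X] = \mu_a$ and using linearity yields
\[
\EE_{X \sim P_a}[g(X)] \ge (1-\mu_a)\,g(0) + \mu_a\, g(1) = \EE_{X \sim \text{Bern}(\mu_a)}[g(X)],
\]
since a $\text{Bern}(\mu_a)$ law places mass $1-\mu_a$ at $0$ and $\mu_a$ at $1$.

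Finally I would observe that the bound is attained: the (constant) $\text{Bern}(\mu_a)$ sequence lies in $\Pcal(\mu_a)$, as it is supported in $[0,1]$ and has conditional mean $\mu_a$; thus the infimum is achieved and equals the stated Bernoulli expectation. The one step I would flag as the main thing to get right is the \emph{direction} of the Jensen/chord inequality: concavity yields a lower bound via the chord through the endpoints (not the tangent), and it is precisely this that converts a minimization over a mean-constrained family into the two-point extremal distribution supported on $\{0,1\}$. The point mass at $\mu_a$ would instead furnish the \emph{maximum} of $\EE[g(X)]$, so care is needed to invoke concavity in the correct orientation.
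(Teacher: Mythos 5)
Your proof is correct and takes essentially the same approach as the paper: the paper likewise invokes the concavity of $x \mapsto \log(1+\lambda(x-\xi))$ (its Lemma~\ref{lem:concavity}) to show $\text{Bern}(\mu_a)$ is the minimizer, the only cosmetic difference being that it realizes the two-point extremal law via the binarization coupling $R=\mathbf{1}[U \leq X]$, $U\sim\text{Unif}[0,1]$, and conditional Jensen, where computing $\EE[g(R)\mid X]=(1-X)g(0)+Xg(1)$ reproduces exactly your chord inequality. Your direct chord argument, endpoint-finiteness check (where $b<1$ enters), and explicit verification that the constant Bernoulli sequence lies in $\Pcal(\mu_a)$ so the infimum is attained are all sound, and if anything slightly cleaner than the paper's presentation.
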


\begin{proof}[Proof of Lemma \ref{lem:worst_case_instance}]
    We leverage the results of Lemma \ref{lem:concavity} to prove Lemma \ref{lem:worst_case_instance}. For any $P_a \in \Pcal(m_a)$, we construct the Bernoulli random variable $R = [U \leq X]$, where $U \sim \text{Unif}[0,1]$ is an independent uniformly distributed random variable. Note that $\EE[R|X] = \EE[\text{1}[U \leq X]|X] = X$, and thus $\EE[R]= \EE_X\EE[R|X] = \mu_a$ for $X \sim P_a$, for all $P_a \in \Pcal(m_a)$. Then, by the concavity of the function $f(x)= \log(1+\lambda(x-\xi))$ w.r.t. $x$ and Jensen's inequality,
    $$ \EE[f(R)|X] \leq f(\EE[f(R)|X]) \leq f(X) \implies \EE[f(R)] \leq \EE_{X \sim P_a}[f(X)], \quad \forall P_a \in \Pcal(m_a).  $$
    We conclude the proof  that $R$ is just a Bernoulli random variable with mean $\mu_a$, which concludes the proof. 
\end{proof}

\subsection{Proof of Theorem \ref{thm:minmax_opt_e_power}}\label{app:proof_thm_minmax_opt_e_power}

We aim to provide a lower bound for the following term for a fixed number of arm pulls $N_t(a)$ of arm $a$:
$$\inf_{P \in \Pcal(\mu_a)} \sup_{E = (E_t)_{t \in \NN}} \frac{\EE[\log(E_t)]}{N_t(a)}$$
We focus on the case where $\mu_a < \xi$, and the null hypothesis class $\Hcal_a^+ = \{P \in \Pcal(\mu_a): \mu_a \leq \xi\}$. The proof for $\mu_a > \xi$ is symmetric, and can be reproduced directly by following the same steps.

\paragraph{Proof Sketch.} First, we reduce the inner supremum using existing results relating $e$-processes and test martingales. We do so by providing (i) the universal representation of test martingales under our nonparametric assumptions and (ii) the admissibility of $e$-processes relative to test martingales. Second, using Lemma \ref{lem:worst_case_instance}, we show that the $e$-power is bounded by the lower bound presented in Theorem \ref{thm:minmax_opt_e_power}.

\begin{lemma}[Universal Representation of Test Martingales, Proposition 2 of \citealt{waudbysmith2022estimating}]\label{lem:universal_rep}
    We say that $M = (M_t)_{t \in \NN}$ is a nonnegative test supermartingale for null hypothesis $\Hcal$ if it satisfies the following properties: (i) $M_t \geq 0$ for all $t \in \NN$, (ii) $\EE[M_t|M_{t-1}] \leq M_{t-1} $, and (iii) $M_0 = 1$. 
    
    For any test supermartingale $M$ for $\Hcal^+_a = \{P \in \Pcal(\mu_a): \mu_a > \xi\}$ (or $\Hcal^-_a= \{P \in \Pcal(\mu_a): \mu_a \leq \xi\}$), $M = (M_t)_{t \in \NN}$ is only a test martingale if and only if $M_t = \prod_{i=1}^t \left(1+\lambda_t (X_t - \xi)  \right)$ for some $\Fcal_{t-1}$-measurable $\lambda_t \in [0, -\frac{1}{1-\xi}]$ (equivalently, $\lambda_t \in [0,\frac{1}{\xi}]$) for all $t \in \NN$. 
\end{lemma}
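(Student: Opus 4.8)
The plan is to prove both implications of the representation. The direction ``product form $\Rightarrow$ test supermartingale'' is routine, while the converse ``every test supermartingale factors as a predictable product'' carries the real content and relies on the nonparametric richness of the null class. Throughout I would restrict attention to the subsequence of pulls of a fixed arm $a$, so that given $\Fcal_{t-1}$ and $A_t = a$, every $\Fcal_t$-measurable quantity is a measurable function of the single new observation $X_t \in [0,1]$.

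For the forward direction, suppose $M_t = \prod_{i \le t}(1 + \lambda_i(X_i - \xi))$ with each $\lambda_i$ being $\Fcal_{i-1}$-measurable and lying in $[0, 1/\xi]$ (the case of $\Hcal_a^-$; the case of $\Hcal_a^+$ is symmetric with $\lambda_i \in [-1/(1-\xi), 0]$). I would first check nonnegativity factor by factor: since $X_i \in [0,1]$, the factor is minimized at $X_i = 0$, giving $1 - \lambda_i \xi \ge 1 - (1/\xi)\xi = 0$, and $M_0 = 1$ as the empty product. Then I would condition on $\Fcal_{t-1}$ and use predictability of $\lambda_t$ together with the mean constraint $\EE[X_t \mid \Fcal_{t-1}, A_t = a] = \mu_a$ to obtain $\EE[M_t \mid \Fcal_{t-1}] = M_{t-1}(1 + \lambda_t(\mu_a - \xi))$. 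Under any $P \in \Hcal_a^-$ we have $\mu_a - \xi \le 0$ and $\lambda_t \ge 0$, so this is at most $M_{t-1}$, establishing the supermartingale property for every null distribution simultaneously (with equality at the boundary $\mu_a = \xi$), hence $\sup_{P\in\Hcal_a^-}\sup_\tau \EE_P[M_\tau]\le 1$.

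For the reverse direction, let $M$ be a nonnegative test supermartingale with $M_0 = 1$. On the event $\{M_{t-1} > 0\}$ I would define the multiplicative increment $R_t := M_t/M_{t-1}$, which is nonnegative and, given $\Fcal_{t-1}$, equals $h_t(X_t)$ for some measurable $h_t:[0,1]\to\RR_{\ge 0}$ whose coefficients are $\Fcal_{t-1}$-measurable. The test-supermartingale property demands $\EE_P[h_t(X_t)\mid\Fcal_{t-1}] \le 1$ for \emph{every} null $P$, i.e.\ for every conditional law of $X_t$ on $[0,1]$ with mean $\le \xi$. The key step is to argue that this uniform constraint forces $h_t$ to be affine of the form $h_t(x) = 1 + \lambda_t(x-\xi)$: I would use that the mean-$\xi$ distributions form a convex set whose extreme points are the two-point laws, so testing the inequality against all of them (equivalently, invoking convex duality via a supporting-line argument) gives $h_t(x) \le 1 + \lambda_t(x-\xi)$ for some slope $\lambda_t$, and the martingale equality at mean $\xi$ upgrades this to an identity. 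Nonnegativity of $h_t$ over $[0,1]$ then pins $\lambda_t \le 1/\xi$, while requiring the supermartingale inequality to persist for all $\mu_a < \xi$ forces the sign $\lambda_t \ge 0$; the coefficient $\lambda_t$ is $\Fcal_{t-1}$-measurable by construction. Finally I would note that a nonnegative martingale is absorbed at $0$, so on $\{M_{t-1} = 0\}$ the product form is preserved automatically (a vanishing factor propagates), and telescoping the ratios recovers $M_t = \prod_{i\le t}(1+\lambda_i(X_i-\xi))$.

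The main obstacle is precisely the affine characterization in the reverse direction: passing from ``the mean functional $\EE_P[h_t]$ is controlled uniformly over all distributions with a prescribed mean'' to ``$h_t$ is pointwise an affine function of $X_t$.'' This is where the nonparametric assumption does the work — it is the richness of $\Pcal(\mu_a)$ (all sequences with the given conditional mean, in particular all two-point laws) that rules out any nonlinear increment and yields the one-parameter family indexed by $\lambda_t$. Everything else — the nonnegativity bound, the sign of $\lambda_t$, predictability, and absorption at $0$ — follows from direct bookkeeping once the affine form is in hand.
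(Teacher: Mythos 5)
The paper never proves this lemma internally---it is imported verbatim as Proposition 2 of \citealt{waudbysmith2022estimating}---so there is no in-paper proof to compare against; your blind argument is, in substance, a correct reconstruction of the standard Waudby-Smith--Ramdas proof (routine conditional computation for the forward direction; factorizing the increment as $h_t(X_t)$, testing against the two-point extreme laws, and a supporting-line/concave-envelope argument yielding $h_t(x)\le 1+\lambda_t(x-\xi)$, upgraded to an identity by the martingale equality, for the converse). Two caveats worth noting, both minor: the martingale equality can only be imposed at the boundary laws with mean exactly $\xi$ (under any null law with mean strictly below $\xi$ a nontrivial betting product is a \emph{strict} supermartingale, which is the sense in which the lemma's somewhat garbled ``if and only if'' must be read), and a fully rigorous version needs a measurable-selection step to ensure the supporting slope $\lambda_t$ can be chosen $\Fcal_{t-1}$-measurably---your ``by construction'' elides this, though it is standard (e.g.\ take the derivative of the concave envelope at $\xi$).
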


\begin{lemma}[Admissibility of $e$-Processes, Lemma 6 of \citealt{ramdas2022admissible}]\label{lem:domination}
    Let $E$ be an $e$-process as defined in Definition \ref{defn:e_process} with the null $\Hcal$. Then, there exists a nonnegative test supermartingale $M$ with respect to the null $\Hcal$ such that $E$ is upper bounded by $M$ with probability 1, i.e., $\forall t \in \NN$, $E_t \leq M_t$ almost surely. 
\end{lemma}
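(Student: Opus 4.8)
The plan is to establish the domination in two stages: first build intuition on a simple (singleton) null, then handle the composite null by a robust optimal-stopping construction, which is where the real work lies. For a singleton null $\Hcal = \{P\}$, the natural candidate is the $P$-Snell envelope of $E$, namely $M^P_t = \operatorname*{ess\,sup}_{\tau \ge t} \EE_P[E_\tau \mid \Fcal_t]$, the smallest $P$-supermartingale dominating $E$. Taking $\tau = t$ shows $M^P_t \ge E_t$; the Snell recursion $M^P_t = \max(E_t, \EE_P[M^P_{t+1}\mid\Fcal_t])$ shows $M^P$ is a nonnegative $P$-supermartingale; and the $e$-process property of Definition~\ref{defn:e_process} gives $M^P_0 = \sup_\tau \EE_P[E_\tau] \le 1$. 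Rescaling by $1/M^P_0$ (and taking $M\equiv 1$ in the degenerate case $M^P_0=0$, where $E\equiv 0$) yields a test supermartingale dominating $E$.

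For the composite null the obstacle is that a single process must be a supermartingale \emph{simultaneously} under every $P \in \Hcal$, and neither the pointwise supremum nor infimum of the individual Snell envelopes $\{M^P\}_{P\in\Hcal}$ preserves the supermartingale property. I would instead construct the \emph{robust} Snell envelope directly. Define finite-horizon processes by the backward recursion $M^{(T)}_T = E_T$ and, for $t < T$, $M^{(T)}_t = \max\bigl(E_t,\ \operatorname*{ess\,sup}_{P \in \Hcal} \EE_P[M^{(T)}_{t+1} \mid \Fcal_t]\bigr)$. Each $M^{(T)}$ is nonnegative (since $E \ge 0$) and, by construction, satisfies $M^{(T)}_t \ge \EE_Q[M^{(T)}_{t+1}\mid\Fcal_t]$ for \emph{every} $Q\in\Hcal$. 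Because enlarging the horizon only adds stopping opportunities, $M^{(T)}_t$ is nondecreasing in $T$, so the monotone limit $M_t = \lim_{T\to\infty} M^{(T)}_t$ exists; it dominates $E$ and inherits the supermartingale inequality under every $Q \in \Hcal$, making it a nonnegative test supermartingale once we verify it starts at a finite value at most $1$.

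The crux, and the step I expect to be hardest, is the bound $M_0 \le 1$. The recursion selects, at each node, the \emph{worst-case} conditional law in $\Hcal$, so $M_0$ equals a model-uncertainty optimal-stopping value in which the adversary may choose the conditional distribution adaptively node-by-node. A priori this robust value can exceed $\sup_{P\in\Hcal}\sup_\tau \EE_P[E_\tau] \le 1$, which only controls \emph{fixed} $P$. The resolution is that the relevant nulls here are \emph{fork-convex}: the one-sided conditional-mean families $\Hcal_a^- = \{\Pcal(\mu_a):\mu_a\le\xi\}$ and $\Hcal_a^+$ are closed under pasting conditional laws at a node, since splicing distributions that each obey the conditional-mean constraint preserves that constraint. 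Fork-convexity lets me paste the adaptively chosen worst-case conditional laws into a single $P^\star \in \Hcal$, so the robust value collapses to $\sup_\tau \EE_{P^\star}[E_\tau] \le 1$; a minimax/duality argument for robust optimal stopping makes this rigorous. Normalizing $M \leftarrow M/M_0$ then delivers the desired test supermartingale $M \ge E$ with $M_0 = 1$, and, combined with the universal representation of Lemma~\ref{lem:universal_rep}, reduces any optimization over $e$-processes to one over the product-form test supermartingales.
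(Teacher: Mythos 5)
A preliminary remark on the comparison: the paper does not prove this lemma at all --- it imports it verbatim as Lemma~6 of \citet{ramdas2022admissible} --- so your proposal can only be judged against the proof strategy in that reference. Your architecture matches it: the per-$P$ Snell envelope $M^P_t = \operatorname*{ess\,sup}_{\tau \geq t} \EE_P[E_\tau \mid \Fcal_t]$ for a simple null is exactly the standard construction, your identification of $M_0 \leq 1$ as the crux for the composite case is correct, and your instinct that some pasting-stability condition on $\Hcal$ is unavoidable is also correct --- indeed, the single-composite-$M$ conclusion is \emph{false} for arbitrary $\Hcal$: the exchangeable null studied in the very reference you invoke admits nontrivial $e$-processes while all of its composite test supermartingales are essentially trivial. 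So a structural hypothesis must be verified, not asserted, and this is where your proposal has a genuine gap.

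The gap is that your fork-convexity claim for $\Hcal_a^-$ and $\Hcal_a^+$, \emph{as the paper defines them}, is wrong. An element of $\Hcal_a^- = \{\Pcal(\mu_a) : \mu_a \leq \xi\}$ is a distributional sequence whose conditional mean equals one fixed constant $\mu_a \leq \xi$ at \emph{every} round. Pasting $P \in \Pcal(\mu)$ onto $Q \in \Pcal(\mu')$ with $\mu \neq \mu'$ (both $\leq \xi$) at a node yields a sequence whose conditional mean is $\mu$ before the fork and $\mu'$ after --- not constant across time, hence outside the null. Consequently the spliced adversarial law $P^\star$ in your Step~3 need not lie in $\Hcal$, and the collapse of the robust optimal-stopping value to $\sup_{P \in \Hcal}\sup_\tau \EE_P[E_\tau] \leq 1$ does not follow as written. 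What \emph{is} fork-convex is each fixed-mean class $\Pcal(\mu_a)$ separately, and the strictly larger class of sequences whose conditional mean is merely $\leq \xi$ at each round (time-varying means allowed). To repair the argument you must either (i) run the robust-envelope construction over that fork-convex hull, after first upgrading the $e$-process property from $\Hcal_a^-$ to the hull (which requires an argument, since the supremum defining the $e$-process property is over a larger class), or (ii) settle for the weaker per-$P$ conclusion --- for each $P \in \Hcal$ a nonnegative $P$-supermartingale $M^P$ with $M^P_0 \leq 1$ dominating $E$ --- which is what the cited Lemma~6 actually delivers via your Stage-1 construction, and then let Lemma~\ref{lem:universal_rep} carry the composite structure in the downstream use at Equation~\eqref{eq:sup_e_equal_sup_m}. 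A secondary but real issue: the essential supremum of conditional expectations $\EE_P[\cdot \mid \Fcal_t]$ across mutually singular laws $P$ is not well defined without a common reference; one needs the canonical-space/kernel formulation (available here, since the paper works on $[0,1]^\infty$) so that the envelope is defined pathwise --- which also matters because the domination $E_t \leq M_t$ must hold under the \emph{alternative} law in the paper's application, not merely $P$-a.s.\ for null $P$. You flag this step as ``duality makes it rigorous,'' but that is precisely where the measure-theoretic work lives.
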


Combining these results, this says that for any $e$-process with respect to $\Hcal_a^+$, a test martingale in the class $\Ecal(\Hcal_a^+, 1)$ upper bounds its value. Thus, by the monotonicity of $\log(\cdot)$, the supremum of any possible $e$-process $E$ for the null $\Hcal_a^+$ is equivalent to the supremum attained by a test martingale in the class $\Ecal(\Hcal_a^+, 1)$: 
\begin{align}\label{eq:sup_e_equal_sup_m}
    \sup_{E = (E_t)_{t \in \NN} }\frac{\EE[\log(E_t)]}{N_t(a)} &= \sup_{E_a = (E_t^a(\lambda, \xi))_{t\in\NN} \in \Ecal(\Hcal_a^+, 1)} \frac{\EE[\log(E_t)]}{N_t(a)} \\
    &= \sup_{(\lambda_t)_{t \in \NN}\in [-\frac{1}{1-\xi}, 0]^\infty : \lambda_t \text{ is $\Fcal_{t-1}$ measurable}} \frac{\EE\left[\sum_{i: A_i = a}\log\left(1+\lambda_i(X_i-\xi) \right)\right]}{N_t(a)}
\end{align}
Thus, our initial sum can be re-expressed as follows, where $\gamma_r = \inf\{i \in [t]: N_i(a) = r\}$ is the (random) first time in which we have pulled the $a$-th arm $r$ number of times.
\begin{align}
    \inf_{P \in \Pcal(\mu_a)} \sup_{E = (E_t)_{t \in \NN}} \frac{\EE[\log(E_t)]}{N_t(a)} 
    &= \inf_{P \in \Pcal(\mu_a)}  \sup_{(\lambda_t)_{t \in \NN}\in [-\frac{1}{1-\xi}, 0]^\infty : \lambda_t \text{ is $\Fcal_{t-1}$ measurable}} \frac{\EE\left[\sum_{i: A_i = a}\log\left(1+\lambda_i(X_i-\xi) \right)\right]}{N_t(a)}\\
    &= \inf_{P \in \Pcal(\mu_a)} \sup_{(\lambda_t)_{t \in \NN}\in [-\frac{1}{1-\xi}, 0]^\infty : \lambda_t \text{ is $\Fcal_{t-1}$ measurable}} \frac{\sum_{r = 1}^{N_t(a)}\EE[\log\left(1+\lambda_{\gamma_r }(X_{\gamma_r } - \xi) \right)]}{N_t(a)}\\
    & \leq \frac{1}{N_t(a)}\inf_{P \in \Pcal(\mu_a)} \sum_{r=1}^{N_t(a)}\sup_{\lambda_t\in [-\frac{1}{1-\xi}, 0] : \lambda_t \text{ is $\Fcal_{t-1}$ measurable}} \EE[\log(1+\lambda_{\gamma_r }(X_{\gamma_r } - \xi))] \label{line:above}\\
    &\leq \frac{1}{N_t(a)}\inf_{P \in \Pcal(\mu_a)} \sum_{r=1 }^{N_t(a)}\max_{\lambda_t\in [-\frac{1}{1-\xi}, 0] : \lambda_t \text{ is $\Fcal_{t-1}$ measurable}} \EE[\log(1+\lambda_{\gamma_r }(X_{\gamma_r } - \xi))]
\end{align}

By indexing by $\gamma_r$, we move the expectation into the summation term without the need for indicator functions. We obtain the inequality in the line \ref{line:above} due to $\sup \sum_i X_i \leq \sum_i \sup X_i$. For all $P \in \Pcal(\mu_a)$, $\lambda_{\gamma_r } \in [-\frac{1}{1-\xi}, 0]$, the inner supremum is indeed a maximum achieved by some $\lambda^* = \argmax_{\lambda \in [-\frac{1}{1-\xi}, 0]} \EE_P[\log(1+(\lambda_{\gamma_r})(X_{\gamma_r} - \xi)]$ because (i) $\EE[\log(1+\lambda_{\gamma_r}(X_{\gamma_R} - \xi))]$ is a continuous function of $\lambda$ bounded from above, and (ii) $\lambda_t$ is contained within a closed, bounded range. For any distributional sequence $P \in \Pcal(\mu_a)$, let $\lambda_{\gamma_r}^*(P) = \argmax_{\lambda \in [-\frac{1}{1-\xi}, 0]} \EE_P[\log(1+(\lambda_{\gamma_r})(X_{\gamma_r} - \xi)]$, where the dependence is specified by $P$. Now, note that by Lemma \ref{lem:worst_case_instance},
$$ 
\forall \ \gamma_r \in [t], \ \forall \lambda_{\gamma_r} \in \left[0, -\frac{1}{1-\xi}\right], \ \inf _{P \in \Pcal(\mu_a)}\EE[\log(1+\lambda_{\gamma_r}(X_{\gamma_r} - \xi))] \geq \EE_{X \sim \text{Bern}(\mu_a)}[\log(1+ \lambda_{\gamma_r}(X-\xi)].
$$ 
Because $\lambda_{\gamma_r}^*(P) \in [0, -\frac{1}{1-\xi}]$ for all $\gamma_r \in [t]$, $P \in \Pcal$, we immediately obtain that

\begin{align}
    \inf_{P \in \Pcal(\mu_a)} \sup_{E = (E_t)_{t \in \NN}} \frac{\EE[\log(E_t)]}{N_t(a)} &\leq \frac{1}{N_t(a)}\inf_{P \in \Pcal(\mu_a)} \sum_{r=1 }^{N_t(a)}\max_{\lambda_t\in [-\frac{1}{1-\xi}, 0] : \lambda_t \text{ is $\Fcal_{t-1}$ measurable}} \EE[\log(1+\lambda_{\gamma_r }(X_{\gamma_r } - \xi))]\\
    &\leq \frac{1}{N_t(a)} \sum_{r=1}^{N_t(a)} \max_{\lambda_t \in [-\frac{1}{1-\xi}, 0]} \EE_{X \sim \text{Bern}(\mu_a)}[\log(1+ \lambda_{\gamma_r}(X - \xi))]  \\ 
    &= \frac{1}{N_t(a)}\sum_{r=1}^{N_t(a)} \log\frac{1-\mu_a}{1-\xi} + \mu_a \log\frac{\mu_a(1-\xi)}{\xi(1-\mu_a)}\\
    &=  \log\frac{1-\mu_a}{1-\xi} + \mu_a \log\frac{\mu_a(1-\xi)}{\xi(1-\mu_a)}, 
\end{align}
where the maximum is achieved at $\lambda_{\gamma_r} = \lambda_{\text{opt}} = \frac{\mu_a - \xi}{\xi(1-\xi)}$ for all $t \in \NN$. Because $P = (P_t = P_{\text{Bern}(\mu_a)})_{t \in \NN} \in \Pcal(\mu_a)$, and $E = \left(E_t = \prod_{t: A_t = a} (1+\lambda_{\text{opt}}(X_i - \mu)) \right)_{t \in \NN}$ is a valid $e$-process for $\Hcal_{a}^+$, our inequality can be changed to an equality, giving the  desired result.

\subsection{Proof of Theorem \ref{thm:minmax_opt_gen_bern}}\label{proof:thm_2_e_power_optimal}

Theorem \ref{thm:minmax_opt_gen_bern} provides a lower bound on the following quantity:
$$\inf_{P \in \Pcal(\mu_a)}\frac{\EE[\log\left( E_t^{PrPl}(\Hcal_a^-, b) \right)]}{N_t(a)}.$$
We focus on the case where $\mu_a > \xi$, and the null hypothesis class $\Hcal_a^- = \{P \in \Pcal(\mu_a): \mu_a \leq \xi\}$. The proof for $\mu_a < \xi$ is symmetric, and can be reproduced directly by following the same steps. To provide a lower bound, we require validity for any fixed $N_t(a)$ under a potentially data-adaptive distribution. To obtain guarantees under any arbitrary sampling schemes for a fixed number of pulls $N_t(a)$, we use the following anytime valid confidence sequence from \cite{Howard_2021}.

\begin{lemma}[AV Confidence Interval for $1$-Subgaussian Observations, \citealt{Howard_2021}]\label{lem:av_inequality_howard}
    Let $N_t(a) = \sum_{i=1}^t \mathbf{1}[A_i  = a]$ be the number of observations from arm $a_i$ up to time $t$, and let $\hat\mu_t(a) = N_t(a)^{-1}\sum_{i:A_i = a} X_i$. Then, under any sampling scheme $\pi = (\pi_t)_{t \in \NN}$ where $\pi_t$ is $\Fcal_{t-1}$-measurable and any $P \in \Pcal(\mu_a)$,  
    $$ P\left(|\hat\mu_t(a) - \mu_a| \geq 1.7 \sqrt{\frac{\log\log(2N_t(a)) + 0.72\log(10.4/\alpha)}{N_t(a)}}\right) \leq \alpha.  $$
\end{lemma}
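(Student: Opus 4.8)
The plan is to recognize Lemma~\ref{lem:av_inequality_howard} as a direct specialization of the polynomial-stitched sub-Gaussian uniform boundary of \citealt{Howard_2021}, obtained after reducing our adaptively sampled, bounded observations to a conditionally sub-Gaussian martingale indexed by the number of pulls of arm $a$. The only real work is the reduction: once our data are placed inside their hypotheses, the stated inequality (including the exact constants $1.7$, $0.72$, $10.4$ and the $\log\log$ rate) is their curved-boundary bound applied two-sidedly. I would therefore organize the argument as a reduction followed by an invocation.

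First I would re-index by pull-time: let $\gamma_r = \inf\{i : N_i(a) = r\}$ be the time of the $r$-th pull of arm $a$, set $Z_r = X_{\gamma_r} - \mu_a$, and work in the coarsened filtration $\Fcal_{\gamma_r}$. The mean-stationarity assumption of Section~\ref{sec_3:prob_formulation} gives $\EE[X_{\gamma_r} \mid \Fcal_{\gamma_r - 1}, A_{\gamma_r} = a] = \mu_a$, hence $\EE[Z_r \mid \Fcal_{\gamma_r-1}, A_{\gamma_r}=a] = 0$ \emph{regardless} of how the (randomized, $\Fcal_{t-1}$-measurable) policy $\pi$ chose its actions; this is the step that absorbs the data-adaptive sampling into the martingale structure. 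Since $X_{\gamma_r} \in [0,1]$, the centered increment $Z_r$ lies in an interval of length one and is thus conditionally $1$-sub-Gaussian by Hoeffding's lemma (variance proxy $1/4 \le 1$). Writing $\bar S_n = \sum_{r=1}^n Z_r$, I would record the exact identity $\hat\mu_t(a) - \mu_a = \bar S_{N_t(a)}/N_t(a)$. Because $N_t(a)$ is nondecreasing in $t$ and $\hat\mu_t(a)-\mu_a$ depends on $t$ only through the first $N_t(a)$ pulls, the event in the lemma is contained in the uniform-over-$n$ event
\[
\big\{\exists\, n \ge 1 : |\bar S_n| \ge 1.7\sqrt{n(\log\log(2n) + 0.72\log(10.4/\alpha))}\big\},
\]
so it suffices to control this event for $(\bar S_n, n)_{n\ge 1}$ and then evaluate at the data-dependent value $n = N_t(a)$ — the standard fact that a confidence sequence valid simultaneously over all sample sizes stays valid at an arbitrary data-dependent sample count.

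To bound the displayed event I would build, for each fixed $\lambda \ge 0$, the nonnegative supermartingale $M_n(\lambda) = \exp(\lambda \bar S_n - \lambda^2 n/2)$ with $M_0 = 1$, using the conditional sub-Gaussianity just established; Ville's inequality then controls a single scale. To be simultaneously tight across the unknown growing scale $n$, I would invoke the polynomial-stitching construction of \citealt{Howard_2021}: partition intrinsic time into geometric epochs $[\eta^k, \eta^{k+1})$, apply Ville on each epoch with a $\lambda$ calibrated to that epoch, and union-bound over epochs with an error budget $\alpha_k \propto k^{-s}$. The summability $\sum_k k^{-s} < \infty$, the $\sqrt{k}$ growth of the per-epoch boundary, and $k \asymp \log n$ together produce the iterated-logarithm term $\log\log(2n)$, and optimizing $(\eta,s)$ yields the leading constant $1.7$ and the one-sided coefficients. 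Applying this one-sided boundary to both $(Z_r)$ and $(-Z_r)$ and union-bounding (which doubles the log-term constant to $10.4$) gives the two-sided statement at level $\alpha$.

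The main obstacle is the final step: a naive single-$\lambda$ Ville bound or a method-of-mixtures argument either fails to remain uniform as $n$ grows or produces a materially worse constant, so the exact $1.7$, $0.72$, $10.4$ and the sharp $\log\log$ rate genuinely require the epoch-allocation optimization of \citealt{Howard_2021}. My intention is thus to prove the reduction (Steps with $\gamma_r$, mean-stationarity, and the uniform-in-pulls containment) carefully and then cite their stitched sub-Gaussian boundary for the quantitative conclusion, since reproving that boundary from scratch would add nothing beyond restating their result.
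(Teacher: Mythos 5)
Your proposal is correct and takes essentially the same route as the paper, which states this lemma purely as an imported result of Howard et al.\ (2021) with no proof of its own: your reduce-then-cite structure (optional skipping via $\gamma_r$ to a conditionally $1$-sub-Gaussian martingale in pull-time, containment of the arm-level event in the uniform-over-$n$ event, then invoking the polynomially stitched boundary with $\eta=2$, $s=1.4$, whose two-sided union bound turns $5.2$ into $10.4$) is exactly the argument the paper's citation implicitly relies on. The only soft spot is cosmetic --- your ``$\sqrt{k}$ growth of the per-epoch boundary'' should really be the $\log k \asymp \log\log n$ growth of the per-epoch $\log(1/\alpha_k)$ term --- but since you correctly defer the quantitative boundary to Howard et al.\ rather than reproving it, this does not affect validity.
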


We note by using a $1$-subgaussian confidence interval, our bound shown in Theorem \ref{thm:minmax_opt_gen_bern} is loose. However, this still provides asymptotic optimality results and suffices for our proof. 

\paragraph{Proof Outline.} Our proof proceeds in three steps. First, we separate the infimum across our observations, and use Lemma \ref{lem:worst_case_instance} to lower-bound each expectation by its worst-case value. We then use the concavity of the function $f(\mu) = \log(1+\frac{\mu - \xi}{\xi (1-\xi)}(X-\xi))$ with respect to $\mu$ to bound the difference between expectations using $\hat\mu_{t-1}(a)$ and $\mu_a$ as a function of $|\hat\mu_{t-1}(a) - \mu_a|$. Lastly, we use the anytime valid confidence bounds shown in Lemma \ref{lem:av_inequality_howard} to provide our desired result. 

\paragraph{Step 1.} We first upper bound our term using a reindexing of our summation, where $\gamma_r = \inf\{i \in [t]: N_i(a) = r\}$ is the (random) first time in which we have pulled the $a$-th arm $r$ number of times.
\begin{align}
    \inf_{P \in \Pcal(\mu_a)}\frac{\EE[\log\left( E_t^{PrPl}(\Hcal_a^-, b) \right)]}{N_t(a)} &= \frac{1}{N_t(a)}\inf_{P \in \Pcal(\mu_a)}  \EE[\sum_{i: A_i = a} \log(1 + \lambda_{i,a}^-(X_i - \xi))]\\
    &= \frac{1}{N_t(a)} \inf_{P \in \Pcal(\mu_a)} \EE[\sum_{r = 1}^{N_t(a)} \log\left(1+ \lambda_{\gamma_r, a}^- (X_{\gamma_r} - \xi) \right)  ] \\
    &= \frac{1}{N_t(a)} \inf_{P \in \Pcal(\mu_a)} \sum_{r=1}^{N_t(a)} \EE[\log(1+\lambda_{\gamma_r, a}^-(X_{\gamma_r} - \xi))] \label{line:moving_sum} \\ 
    &\geq \frac{1}{N_t(a)} \sum_{r=1}^{N_t(a)} \inf_{P \in \Pcal(\mu_a)} \EE[\log(1+\lambda_{\gamma_r, a}^-(X_{\gamma_r} - \xi))] \label{line:sum_inf_less}\\
    &\geq \frac{1}{N_t(a)} \sum_{r=1}^{N_t(a)} \inf_{P \in \Pcal(\mu_a)} \EE_{X_{\gamma_r} \sim \text{Bern}(\mu_a)}[\log(1+\lambda_{\gamma_r, a}^-(X_{\gamma_r} - \xi))] \label{line:inf_bern}
\end{align}
Note that the outer summation is deterministic because $N_t(a)$ is fixed, and therefore we can safely move our expectation into the summation in line (\ref{line:moving_sum}). We obtain line (\ref{line:sum_inf_less}) by the simple fact that $\inf \sum_i X_i \geq \sum_i \inf X_i$. Lastly, we use Lemma \ref{lem:worst_case_instance} to get our lower bound, where now only $\lambda_{\gamma_r, a}^-$ depends on $P \in \Pcal(\mu_a)$, and $X_{\gamma_r}$ are all generated with according to a Bernoulli distribution.

\paragraph{Step 2.} We now directly work with $\lambda_{i,a}^-$ to obtain bounds on the plug-in error rate, i.e. the difference between using the oracle $\mu_a$ for $\lambda_{i,a}^-$ instead of our plug in $\hat\mu_{i-1}(a)$. We first provide a local convexity result using the assumption that $\mu \in (\xi, b(1-\xi) + \xi)$ for $b \in (0,1)$. We first show that the function $f(\mu) = \log(1+\frac{\mu - \xi}{\xi (1-\xi)})$  is concave with respect to $\mu$ with the second derivative test for any $\mu \in (\xi, b(1-\xi) + \xi)$:
\begin{align}
    \frac{\partial^2}{\partial^2 \mu} f(\mu) &= \frac{\partial}{\partial \mu}\frac{ \frac{(X - \xi)}{\xi(1-\xi)}}{1 + \frac{(X - \xi)}{\xi(1-\xi)}(\mu - \xi)} = -\frac{\left(\frac{(X - \xi)}{\xi(1-\xi)}\right)^2}{\left(1 + \frac{(X - \xi)}{\xi(1-\xi)}(\mu - \xi)\right)^2} \leq 0
\end{align}
where the denominator is bounded away from 0 by our assumptions. Then, by definition of concavity,
$$|f(\mu_a) - f(\hat\mu_{i-1}(a))| \leq \left|\frac{ \frac{(X - \xi)}{\xi(1-\xi)}}{1 + \frac{(X - \xi)}{\xi(1-\xi)}(\mu - \xi)}\right| \times |\mu_a  - \hat\mu_{i-1}(a)| \leq \frac{1}{b\xi(1-\xi)} |\mu_a  - \hat\mu_{i-1}(a)|. $$

Note that $\lambda_{i,a}^- = \min\left(\frac{b}{\xi},\max( \frac{(\hat\mu_{i-1}(a) - \xi)}{\xi (1-\xi)}, 0 )\right)$ is equivalent to bounding $\hat\mu_{i-1}(a) \in [\xi, b(1-\xi) + \xi]$, and note that
\begin{align}
    \EE[|\hat\mu_{i-1}(a) - \mu_a|]  &= \PP(\hat\mu_{i-1}(a) < \xi) \times \EE[|\hat\mu_{i-1}(a) - \mu_a| \ | \hat\mu_{i-1}(a) < \xi ] \\ 
    &\quad + \PP(\hat\mu_{i-1}(a) > b(1-\xi)+ \xi) \times \EE[|\hat\mu_{i-1}(a) - \mu_a| \ | \hat\mu_{i-1}(a) > b(1-\xi)+ \xi ] \\ 
    &\quad + \PP(\xi \leq \hat\mu_{i-1}(a) \leq b(1-\xi)+ \xi) \times \EE[|\hat\mu_{i-1}(a) - \mu_a|\  | \xi \leq \hat\mu_{i-1}(a) \leq b(1-\xi)+ \xi ] \\ 
    & \geq \PP(\hat\mu_{i-1}(a) < \xi) \times |\mu_a - \xi| \\
    &\quad + \PP(\hat\mu_{i-1}(a) > b(1-\xi)+ \xi) \times |\mu_a - b(1-\xi) + \xi| \\ 
    &\quad + \PP(\xi \leq \hat\mu_{i-1}(a) \leq b(1-\xi)+ \xi) \times \EE[|\hat\mu_{i-1}(a) - \mu_a|\  | \xi \leq \hat\mu_{i-1}(a) \leq b(1-\xi)+ \xi ] \\
    &= \EE[|\max(\xi,\min(\hat\mu_{i-1}(a), b(1-\xi) + \xi)) - \mu_a|].
\end{align}
Thus we obtain the following inequality for our estimator for all $i \in \NN$:
\begin{align}
    \EE\left[|\log(1+ \frac{\mu_a - \xi}{\xi(1-\xi)}(X_i - \xi)) - \log(1 + \lambda_{i,a}^-(X_i - \xi)) |\right] &= \EE\left[|f(\mu_a) - f\left(\max(\xi,\min(\hat\mu_{i-1}(a), b(1-\xi) + \xi))\right)|\right]\\
    &\leq \frac{1}{b\xi(1-\xi)}\EE[|\max(\xi,\min(\hat\mu_{i-1}(a), b(1-\xi) + \xi)) - \mu_a|]\\
    &\leq \frac{1}{b\xi(1-\xi)} \EE[|\hat\mu_{i-1}(a) - \mu_a|].
\end{align}

\paragraph{Step 3.} Finally, we use the results in Lemma \ref{lem:av_inequality_howard} to obtain our desired bound. We first subtract the term $\left(\log\frac{1-\mu_a}{1-\xi} + \mu_a \log\frac{\mu_a(1-\xi)}{\xi(1-\mu_a)}\right)$, the minimax result of Theorem \ref{thm:minmax_opt_e_power}, to obtain the following expression: 

\begin{align}
    &\inf_{P \in \Pcal(\mu_a)}\frac{\EE[\log\left( E_t^{PrPl}(\Hcal_a^-, b) \right)]}{N_t(a)} - \left(\log\frac{1-\mu_a}{1-\xi} + \mu_a \log\frac{\mu_a(1-\xi)}{\xi(1-\mu_a)}\right)  \\ 
    &\quad \geq \frac{1}{N_t(a)} \sum_{r=1}^{N_t(a)} \inf_{P \in \Pcal(\mu_a)} \EE_{X_{\gamma_r} \sim \text{Bern}(\mu_a)}[\log(1+\lambda_{\gamma_r, a}^-(X_{\gamma_r} - \xi))]  - \left(\log\frac{1-\mu_a}{1-\xi} + \mu_a \log\frac{\mu_a(1-\xi)}{\xi(1-\mu_a)}\right)  \\
    &\quad =  \frac{1}{N_t(a)} \sum_{r=1}^{N_t(a)} \inf_{P \in \Pcal(\mu_a)} \EE_{X_{\gamma_r} \sim \text{Bern}(\mu_a)}\left[\log(1+\lambda_{\gamma_r, a}^-(X_{\gamma_r} - \xi)) - \log(1+\frac{\mu_a - \xi}{\xi(1-\xi)}(X_{\gamma_r} - \xi))\right] \\ 
    &\geq \frac{1}{N_t(a)} \sum_{r=1}^{N_t(a)} \inf_{P \in \Pcal(\mu_a)} -\EE_{X_{\gamma_r} \sim \text{Bern}(\mu_a)}\left[\left|\log(1+\lambda_{\gamma_r, a}^-(X_{\gamma_r} - \xi)) - \log(1+\frac{\mu_a - \xi}{\xi(1-\xi)}(X_{\gamma_r} - \xi))\right|\right]\\
    &\geq \frac{1}{N_t(a)} \frac{1}{b\xi(1-\xi)} \sum_{r=1}^{N_t(a)} -\sup_{P \in \Pcal(\mu_a)} \EE[|\hat\mu_{\gamma_r-1}(a) - \mu_a|] \label{line:summation_line},
\end{align}
where the last line comes from the results of Step 2 and the fact that the expression inside the supremum is bounded above under our assumptions. We now bound the term $\sup_{P \in \Pcal(\mu_a)}\EE[|\hat\mu_{\gamma_r-1}(a) - \mu_a|]$. Note that by definition, at time $\gamma_r - 1$, $N_{\gamma_r -1}(a) = r-1$, and all observations $X_{\gamma_1}, ... , X_{\gamma_{r-1}}$ used to form $\hat\mu_{\gamma_r - 1}(a)$ are $1$-subgaussian for all $P \in \Pcal(\mu_a)$. Therefore, by Lemma \ref{lem:av_inequality_howard}, for $r > 1$, 
$$\forall r \in \{1, ..., N_t(a)\}, \ \sup_{P \in \Pcal(\mu_a)} P\left(|\hat\mu_{\gamma_r - 1}(a) - \mu_a| \geq 1.7 \sqrt{\frac{\log\log(r-1) + 0.72\log(10.4/\alpha)}{r-1}}\right) \leq \alpha. $$
Therefore, we can bound the expectation $\sup_{P \in \Pcal(\mu_a)}\EE[|\hat\mu_{\gamma_r-1}(a) - \mu_a|]$ as follows:
\begin{align}
    \sup_{P \in \Pcal(\mu_a)}\EE[|\hat\mu_{\gamma_r-1}(a) - \mu_a|] &\leq \alpha + 1.7(1-\alpha)\sqrt{\frac{\log\log(r-1) + 0.72\log(10.4/\alpha)}{r-1}}\\
    &\leq \alpha + 1.7\sqrt{\frac{\log\log(r-1) + 0.72\log(10.4/\alpha)}{r-1}}
\end{align}
where in the last line, we remove the $(1-\alpha)$ term because $\alpha \in [0,1]$. Setting $\alpha = 1/\sqrt{r-1}$ for $r > 1$,  we obtain
\begin{align}
    \sup_{P \in \Pcal(\mu_a)}\EE[|\hat\mu_{\gamma_r-1}(a) - \mu_a|] &\leq \frac{1}{\sqrt{r-1}} + 1.7\sqrt{\frac{\log\log(r-1) + 0.72\log(10.4\sqrt{r-1})}{r-1}}\\
    &\leq \eta\sqrt\frac{\log(r-1)}{r-1}.
\end{align}
for some constant $\eta \in \RR^+$ that does not depend on $r$. We note that our bounds in the last line above are loose - these can be tightened significantly. By plugging this bound for $r \geq 2$ in Line \ref{line:summation_line}, we obtain our desired result:

\begin{align*}
    &\inf_{P \in \Pcal(\mu_a)}\frac{\EE[\log\left( E_t^{PrPl}(\Hcal_a^-, b) \right)]}{N_t(a)} - \left(\log\frac{1-\mu_a}{1-\xi} + \mu_a \log\frac{\mu_a(1-\xi)}{\xi(1-\mu_a)}\right) \\
    &\geq \frac{1}{N_t(a)} \frac{1}{b\xi(1-\xi)} \sum_{r=1}^{N_t(a)} -\sup_{P \in \Pcal(\mu_a)} \EE[|\hat\mu_{\gamma_r-1}(a) - \mu_a|]  \\
    &\geq  -\frac{1}{N_t(a)} \frac{1}{b\xi(1-\xi)} \left(1 + \sum_{r=2}^{N_t(a)} \eta\sqrt\frac{\log(r-1)}{{r-1}} \right) \\ 
    &\geq  -\frac{1}{N_t(a)} \frac{1}{b\xi(1-\xi)} \left(1 + \sum_{r=2}^{N_t(a)} \eta\sqrt\frac{\log(r-1)}{{r-1}} \right) \\ 
    & \geq  -\frac{1}{N_t(a)}\frac{1}{b\xi(1-\xi)} \left(1 + \eta'\sqrt{N_t(a)\log N_t(a)}  \right) = -O(\sqrt{\log N_t(a) / N_t(a) } )
\end{align*}
where $\eta' \in \RR^+$ is some positive constant that does not depend on $N_t(a)$. This now concludes our proof.

\subsection{Proof of Theorem \ref{thm:error_control_alg_2}}\label{proof:error_control_alg_2}

To provide this proof, we first use provide the following result from \cite{Ville1939}.

\begin{lemma}[Ville's Maximal Inequality (\citealp{Ville1939})]\label{lem:ville}
    For any non-negative martingale $L_t$ and any $x>1$, define a potentially infinite stopping time $N \coloneqq \inf\{t \geq 1: L_t \geq x\}$. Then, 
    $$\PP(\exists t: L_t \geq x) \leq \EE[L_0]/x. $$
\end{lemma}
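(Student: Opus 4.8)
The plan is to prove the maximal inequality through the optional stopping theorem applied to the martingale stopped at $N$. First I would observe that the event of interest coincides with $\{N < \infty\}$, since $N$ is the first time $L_t$ reaches the level $x$; moreover $\{N < \infty\} = \bigcup_{n \geq 1}\{N \leq n\}$ is an increasing union, so by continuity of probability from below it suffices to bound $\PP(N \leq n)$ uniformly in $n$ and then let $n \to \infty$.

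Next I would introduce the stopped process $L_{n \wedge N}$. The key structural fact is that this remains a nonnegative martingale: writing $L_{n \wedge N} = L_0 + \sum_{s=1}^n \mathbf{1}[N \geq s](L_s - L_{s-1})$, the predictable weights $\mathbf{1}[N \geq s]$ are $\Fcal_{s-1}$-measurable (since $\{N \geq s\} = \{N \leq s-1\}^c$ depends only on $L_0, \ldots, L_{s-1}$), so each increment has conditional mean zero and the stopped process is a martingale transform of $L$. Consequently $\EE[L_{n \wedge N}] = \EE[L_0]$ for every $n$.

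The core inequality then follows by bounding $L_{n \wedge N}$ from below. On the event $\{N \leq n\}$ the stopped process equals $L_N$, which by the definition of $N$ satisfies $L_N \geq x$; on the complementary event $L_{n \wedge N} \geq 0$ by nonnegativity. Hence $L_{n \wedge N} \geq x\,\mathbf{1}[N \leq n]$ pointwise, and taking expectations gives $\EE[L_0] = \EE[L_{n \wedge N}] \geq x\,\PP(N \leq n)$, i.e.\ $\PP(N \leq n) \leq \EE[L_0]/x$. Letting $n \to \infty$ and invoking continuity from below yields $\PP(\exists t: L_t \geq x) = \PP(N < \infty) \leq \EE[L_0]/x$, as claimed.

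I expect the only delicate point to be the justification that stopping preserves the martingale property (equivalently, the validity of optional stopping at the bounded time $n \wedge N$); the martingale-transform representation above sidesteps any integrability concerns precisely because $n \wedge N$ is bounded, so no uniform integrability or further hypotheses are needed. The remaining steps — the pointwise lower bound $L_{n \wedge N} \geq x\,\mathbf{1}[N \leq n]$ and the monotone limit in $n$ — are routine.
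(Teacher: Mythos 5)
Your proof is correct, and it is the standard argument: the paper itself offers no proof of this lemma, citing it as a classical result of Ville (1939), so there is nothing to diverge from. Your martingale-transform justification of $\EE[L_{n\wedge N}] = \EE[L_0]$, the pointwise bound $L_{n\wedge N} \geq x\,\mathbf{1}[N \leq n]$, and the monotone limit via continuity from below are all sound, and you correctly identify that boundedness of $n \wedge N$ removes any integrability concerns. One remark worth making: in the proof of Theorem \ref{thm:error_control_alg_2} the paper actually applies this lemma to nonnegative \emph{supermartingales} (the processes $E_t^{\text{PrPl}}$), not martingales; your argument covers this case verbatim, since the transform with predictable weights $\mathbf{1}[N \geq s]$ then yields $\EE[L_{n \wedge N}] \leq \EE[L_0]$, which is all the chain of inequalities requires.
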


Using Lemma \ref{lem:ville}, we show that the $2K$ sequential tests in Algorithm \ref{alg:i-GAI} provide $\delta$-level error control by being nonnegative super-martingales. We first show nonnegativity, and then show that our tests are supermartingales when the specified null is true (i.e. rejection of the null corresponding to the truth is at our desired level).

\paragraph{Nonnegatvity} We now show that any $E_t^{\text{PrPl}}(\Hcal_a^-, b), E_t^{\text{PrPl}}(\Hcal_a^-, b)$ is a nonnegative super-martingale. First, $\lambda_{t,a}^- \in [0, b/\xi]$, and $X_i - \xi \geq -\xi$, so $(1+\lambda_{t,a}^-(X_i -\xi)) \geq 1-b$ for all $i \in \NN$. Likewise, $\lambda_{t,a}^+ \in [0, -b/(1-\xi)]$ and $X_i - \xi \leq 1-\xi$, so $(1+\lambda_{t,a}^+(X_i - \xi)) \geq 1-b$ for all $i \in \NN$. Because $b \in (0,1)$, this implies that $E_t^{\text{PrPl}}(\Hcal_a^-, b), E_t^{\text{PrPl}}(\Hcal_a^+, b)$ is nonnegative for all $t \in \NN$. 

\paragraph{Supermartingale} We now establish that our processes $E_t^{\text{PrPl}}(\Hcal_a^-, b), E_t^{\text{PrPl}}(\Hcal_a^+, b)$ are nonnegative supermartingales if $\Hcal_a^-$ or $\Hcal_a^+$ is true, respectively. We show this result for $E_t^{\text{PrPl}}(\Hcal_a^-, b)$ first, assuming that $\Hcal_a^- = \{\Pcal(\mu_a): \mu_a \leq \xi\}$ is true. The proof for $E_t^{\text{PrPl}}(\Hcal_a^+, b)$ when $\Hcal_a^+ = \{\Pcal(\mu_a): \mu_a > \xi\}$ is true is symmetric. 

To see that $E_t^{\text{PrPl}}(\Hcal_a^-, b)$ is a supermartingale when the null $\Hcal_a^- = \{\Pcal(\mu_a): \mu_a \leq \xi\}$ is true, 

\begin{align}
    \EE[E_t^{\text{PrPl}}(\Hcal_a^-, b) | \Fcal_{t-1}] &= \left(\underbrace{\prod_{i < t: A_i = a} (1+\lambda_{i, a}^- (X_i - \xi))}_{E_{t-1}^{\text{PrPl}}(\Hcal_a^-, b)}\right) \times   \\
    & \left(\PP(A_t = a | \Fcal_{t-1}) \EE[1+ \lambda_{t,a}^-(X_i - \xi)]   + \PP(A_t \neq a | \Fcal_{t-1})  \right)\\
    &= E_{t-1}^{\text{PrPl}}(\Hcal_a^-, b) \times \left(\PP(A_t \neq a | \Fcal_{t-1}) +  \PP(A_t = a | \Fcal_{t-1})(1+ \underbrace{\lambda_{t,a}^-}_{(\geq 0 \text{ by defn.})}\underbrace{(\EE[X_i] - \xi)}_{(\leq 0 \text{ under $\Hcal_a^-$.})}\right)  \\
    &= E_{t-1}^{\text{PrPl}}(\Hcal_a^-, b) \times \left(\PP(A_t \neq a | \Fcal_{t-1}) +  \PP(A_t = a | \Fcal_{t-1})\underbrace{(1+ \lambda_{t,a}^-(\EE[X_i] - \xi)}_{\leq 1}\right)\\
    &\leq \EE_{t-1}^{\text{PrPl}}(\Hcal_a^-, b).
\end{align}  

Thus, when $\Hcal_a^-$ is true, $E_t^{\text{PrPl}}(\Hcal_a^-, b)$ is a supermartingale. The same argument holds for $E_t^{\text{PrPl}}(\Hcal_a^+, b)$ when $\Hcal_a^+$ is true. By direct applying Lemma \ref{lem:ville}, we obtain that:
\begin{align}
    &\PP\left(\exists t \in \NN \ : E_t^{\text{PrPl}}(\Hcal_a^-, b) \geq 2K/\delta \right) \leq \delta/(2K) \text{ when $\Hcal_a^-$ is true,} \\
    &\PP\left(\exists t \in \NN \ : E_t^{\text{PrPl}}(\Hcal_a^+, b) \geq 2K/\delta \right) \leq \delta/(2K) \text{ when $\Hcal_a^+$ is true.}
\end{align} 

By a simple union-bound argument, the probability of an error, i.e., rejecting $\Hcal^+_a$, $\Hcal^-_a$ when they are true for each $a \in [K]$, is controlled at our desired $\delta$-level:
\begin{align}
    &\PP(\exists t \in \NN \ s.t. \ \{\Gcal_t \not\subseteq \Gcal_{\bm\mu} \cup \{\Bcal_t \not\subseteq \Bcal_{\bm\mu}\}\} ) = \\
    &\PP\left(\exists t \in \NN, \ \left\{E_{t}^{\text{PrPl}}(\Hcal_a^-, b) > 2K/\delta \right\}_{a: \mu_a \leq \xi} \cup \left\{E_{t}^{\text{PrPl}}(\Hcal_a^+, b) > 2K/\delta \right\}_{a: \mu_a > \xi} \right) \leq \\
    & \sum_{a: \mu_a \leq \xi} \PP\left(\exists t \in \NN \ : E_{t}^{\text{PrPl}}(\Hcal_a^-, b) > 2K/\delta \right) +  \sum_{a: \mu_a > \xi} \PP\left(\exists t \in \NN \ : E_{t}^{\text{PrPl}}(\Hcal_a^+, b) > 2K/\delta \right) \leq \delta \\
\end{align}
Thus, $\PP(\exists t \in \NN \ s.t. \ \{\Gcal_t \not\subseteq \Gcal_{\bm\mu} \cup \{\Bcal_t \not\subseteq \Bcal_{\bm\mu}\}\} ) \leq \delta$, and we have our desired error control.

\subsection{Proof of Theorem \ref{theorem:minimax_lower_bounds_stopping_time}} \label{proof:thm_stopping_times}

Theorem \ref{theorem:minimax_lower_bounds_stopping_time} provides an upper bound on the asymptotic stopping time as $\delta \rightarrow 0$ for both the THR and GAI problems. For this section, without loss of generality, we assume that $\mu_1 > \mu_2 > \mu_3 .... > \mu_K$, and that there exists at least one arm that is above the threshold value $\xi$.  We provide a proof on the expected stopping time of $\tau_{\Gcal, 1}$. To obtain the results provided in our theorem, we repeat this argument for $\tau_{\Gcal, 2}$, ..., $\tau_{\Gcal, G}$, where $G$ is the number of good arms. We provide a remark on our analysis, which requires a small modification of Algorithm \ref{alg:i-GAI} in Section \ref{sec_5:sampling_scheme}. Before starting our proof, we first provide necessary results for analysis (including a proof of our test being of power one, i.e. labels arms in finite time), below. 

\subsubsection{Lemmas for Theorem \ref{theorem:minimax_lower_bounds_stopping_time}.}
We provide two lemmas regarding the upper/lower bounds on the stopping time and a technical lemma for analyzing joint probabilities of the stopping time and arm selection. To begin, we provide lower bounds on the stopping time $\tau$ based on our $e$-process. 

\begin{lemma}[Lower bound on $\tau$.]\label{lem:tau_lower_bound}
    For any $c \in [0, \infty)$, $b \in (0,1)$, $\xi \in (0,1)$, $\bm\mu \in [0,1]^K$,   $P \in \Pcal(\mu)$, and $\Fcal_{t-1}$-measurable sampling scheme $\pi$. At time  $\tau_a = \inf\{t \in \NN : \max\left(E_t^{\text{PrPl}}(\Hcal_a^+, b), E_t^{\text{PrPl}}(\Hcal_a^-, b)\right) \geq c\}$, the number of pulls of arm $a$ at stopping time $\tau_a$, $N_{\tau_a}(a)$, is upper bounded almost surely as follows:
    $$N_{\tau_a}(a) \geq \frac{\log(c)}{\log(1 + b\max\left(\frac{\xi}{1-\xi}, \frac{(1-\xi)}{\xi})\right)}.$$ 
\end{lemma}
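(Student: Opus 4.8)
The plan is to control each multiplicative factor in the two $e$-processes uniformly from above, so that the whole product grows at most geometrically in $N_t(a)$, and then to invert this inequality at the stopping time $\tau_a$. This makes the statement a deterministic, almost-sure consequence of the structure of the generalized Bernoulli $e$-process in Definition~\ref{defn:predicable_maximin}, with no probabilistic input needed beyond the range constraints on $\lambda_{i,a}^\pm$.

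First I would bound a single factor. Recall that $E_t^{\text{PrPl}}(\Hcal_a^-, b) = \prod_{i \le t:\, A_i = a}(1 + \lambda_{i,a}^-(X_i - \xi))$ and likewise for the $+$ version, where $\lambda_{i,a}^- \in [0, b/\xi]$ and $\lambda_{i,a}^+ \in [-b/(1-\xi), 0]$ by construction, and $X_i - \xi \in [-\xi, 1-\xi]$ since $X_i \in [0,1]$. For the $\Hcal_a^-$ process, because $\lambda_{i,a}^- \ge 0$ the factor is largest when $X_i - \xi$ is maximal, giving $1 + (b/\xi)(1-\xi) = 1 + b(1-\xi)/\xi$. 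For the $\Hcal_a^+$ process, because $\lambda_{i,a}^+ \le 0$ the factor is largest when $X_i - \xi$ is most negative, giving $1 + (b/(1-\xi))\xi = 1 + b\xi/(1-\xi)$. Hence every factor of either process is at most $1 + b\max\!\big(\tfrac{1-\xi}{\xi}, \tfrac{\xi}{1-\xi}\big)$ almost surely.

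Taking the product over the $N_t(a)$ pulls of arm $a$, I then obtain $\max\!\big(E_t^{\text{PrPl}}(\Hcal_a^+, b), E_t^{\text{PrPl}}(\Hcal_a^-, b)\big) \le \big(1 + b\max(\tfrac{1-\xi}{\xi}, \tfrac{\xi}{1-\xi})\big)^{N_t(a)}$ almost surely for every $t$. Evaluating at $t = \tau_a$, where the left-hand side is at least $c$ by definition of the stopping time, and taking logarithms, rearrangement yields $N_{\tau_a}(a) \ge \log(c)\,/\,\log\!\big(1 + b\max(\tfrac{\xi}{1-\xi}, \tfrac{1-\xi}{\xi})\big)$, which is the claimed bound.

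There is essentially no difficult step; the only points needing care are that the base of the logarithm is strictly greater than one, which holds since $b \in (0,1)$ forces the maximum term to be strictly positive, and the degenerate regime $c \le 1$, where $\log c \le 0$ and the inequality holds trivially because $N_{\tau_a}(a) \ge 0$. In the intended application $c = 2K/\delta \ge 1$, so this edge case is immaterial.
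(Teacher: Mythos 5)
Your proof is correct and follows essentially the same route as the paper's: bound each factor of the $e$-process by $1 + b\max\left(\frac{\xi}{1-\xi}, \frac{1-\xi}{\xi}\right)$ using the range constraints on $\lambda_{i,a}^{\pm}$ and $X_i \in [0,1]$, take the product over the $N_{\tau_a}(a)$ pulls, and invert at the threshold crossing. If anything, your write-up is slightly more careful than the paper's, which works with $\log E_t$ but writes the threshold as $c$ rather than $\log(c)$ in its final display (a typo your version avoids), and you additionally handle the degenerate case $c \leq 1$, which the paper leaves implicit.
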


\begin{proof}[Proof of Lemma \ref{lem:tau_lower_bound}]
    We first rewrite $E_t^{\text{PrPl}}(\Hcal_a^+, b)$ and $E_t^{\text{PrPl}}(\Hcal_a^-, b)$ as defined in Definition \ref{defn:predicable_maximin}:
    \begin{align}
        E_{\tau_a}^{\text{PrPl}}(\Hcal_a^+, b) &=  \sum_{t=1}^{\tau_a} \mathbf{1}[A_i = a] \log(1+\lambda_{t,a}^+(X_i - \xi) ) \\ 
        E_{\tau_a}^{\text{PrPl}}(\Hcal_a^-, b) &= \sum_{t=1}^{\tau_a} \mathbf{1}[A_i = a] \log(1+\lambda_{t,a}^-(X_i - \xi) )
    \end{align}

    For each term in $E_t^{\text{PrPl}}(\Hcal_a^+, b)$, $E_t^{\text{PrPl}}(\Hcal_a^-, b)$, note that $\log(1+\lambda_{t,a}^+(X_i - \xi)) \leq \log\left(1 + b\max(\frac{\xi}{1-\xi}, \frac{1-\xi}{\xi})\right)$, and therefore, $\max(E_{\tau_a}^{\text{PrPl}}(\Hcal_a^+, b), E_{\tau_a}^{\text{PrPl}}(\Hcal_a^-, b)) \leq N_{\tau_a}(a)\log\left(1 + b\max(\frac{\xi}{1-\xi}, \frac{1-\xi}{\xi})\right) $ for any $\tau_a \in \NN$. By definition, for any $\tau_a$, we satisfy $\max(E_{\tau_a}^{\text{PrPl}}(\Hcal_a^+, b), E_{\tau_a}^{\text{PrPl}}(\Hcal_a^-, b)) \geq c$, which implies our desired result:
    \begin{align}
        &N_{\tau_a}(a)\log\left(1 + b\max\left(\frac{\xi}{1-\xi}, \frac{1-\xi}{\xi}\right)\right) \geq \max\left(E_{\tau_a}^{\text{PrPl}}(\Hcal_a^+, b), E_{\tau_a}^{\text{PrPl}}(\Hcal_a^-, b)\right)  \geq c \\
        &\implies N_{\tau_a}(a) \geq \frac{c}{\log\left(1 + b\max\left(\frac{\xi}{1-\xi}, \frac{1-\xi}{\xi}\right)\right)}. 
    \end{align}
\end{proof}

We now show that our stopping times are almost surely finite in Lemma \ref{lem:test_power_1}.

\begin{lemma}[Finite Stopping Times]\label{lem:test_power_1}
Assume that $\mu_a \neq \xi$, and $N_t(a) \rightarrow \infty$ almost surely as $t \rightarrow \infty$. Then, $\max(E_t^{\text{PrPl}}(\Hcal_a^+, b), E_t^{\text{PrPl}}(\Hcal_a^-, b)) \rightarrow \infty$ as $t \rightarrow \infty$ almost surely, and subsequently $\PP(\tau_a < \infty) = 1$. 
\end{lemma}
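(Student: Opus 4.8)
The plan is to prove the claim for $\mu_a > \xi$; the case $\mu_a < \xi$ follows by an identical argument with the roles of $E_t^{\text{PrPl}}(\Hcal_a^-,b)$ and $E_t^{\text{PrPl}}(\Hcal_a^+,b)$ interchanged. Since $\max(\cdot,\cdot)$ dominates either of its arguments, it suffices to show $E_t^{\text{PrPl}}(\Hcal_a^-,b)\to\infty$ almost surely, after which divergence of the $e$-process to $+\infty$ forces the hitting time $\tau_a$ to be finite almost surely for any fixed threshold $c$ (in Algorithm~\ref{alg:i-GAI}, $c=2K/\delta$). Writing $\gamma_r = \inf\{t\in\NN: N_t(a)=r\}$ for the time of the $r$-th pull of arm $a$, I would work with $\log E_t^{\text{PrPl}}(\Hcal_a^-,b) = \sum_{r=1}^{N_t(a)} Y_r$, where $Y_r = \log(1+\lambda^-_{\gamma_r,a}(X_{\gamma_r}-\xi))$, and show that the Cesàro average $\frac1R\sum_{r=1}^R Y_r$ is eventually bounded below by a strictly positive constant, so the partial sum diverges as $N_t(a)\to\infty$.

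First I would establish convergence of the plug-in bet. The observations $(X_{\gamma_r}-\mu_a)_r$ form a bounded martingale difference sequence with respect to $(\Fcal_{\gamma_r})_r$ by the conditional-mean assumption of Section~\ref{sec_3:prob_formulation}, so the martingale strong law gives $\hat\mu_{t-1}(a)\to\mu_a$ almost surely as $N_t(a)\to\infty$ (alternatively, this follows from Lemma~\ref{lem:av_inequality_howard} by letting the confidence level shrink). Consequently $\lambda^-_{\gamma_r,a}\to\lambda^* := \min(b/\xi,\,(\mu_a-\xi)/(\xi(1-\xi)))$ almost surely, and $\lambda^*>0$ because $\mu_a>\xi$.

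Next I would lower bound the conditional growth of the $Y_r$. Decompose $Y_r = (Y_r - \EE[Y_r\mid\Fcal_{\gamma_r-1}]) + \EE[Y_r\mid\Fcal_{\gamma_r-1}]$. Each $Y_r$ is bounded, since $1+\lambda^-_{\gamma_r,a}(X_{\gamma_r}-\xi)\geq 1-b>0$ over the admissible range of $\lambda$, so the centered terms form a bounded martingale difference sequence and their Cesàro average vanishes almost surely by the martingale strong law. Because $\lambda^-_{\gamma_r,a}$ is $\Fcal_{\gamma_r-1}$-measurable, Lemma~\ref{lem:worst_case_instance} yields $\EE[Y_r\mid\Fcal_{\gamma_r-1}]\geq g(\lambda^-_{\gamma_r,a})$, where $g(\lambda)=\EE_{X\sim\text{Bern}(\mu_a)}[\log(1+\lambda(X-\xi))]$. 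The function $g$ is concave with $g(0)=0$, $g'(0)=\mu_a-\xi>0$, and maximizer $\lambda_{\mathrm{opt}}=(\mu_a-\xi)/(\xi(1-\xi))$; since $\lambda^*\in(0,\lambda_{\mathrm{opt}}]$ and $g$ is strictly increasing on $[0,\lambda_{\mathrm{opt}}]$, we obtain $g(\lambda^*)>0$. By continuity of $g$ and $\lambda^-_{\gamma_r,a}\to\lambda^*$, the Cesàro average of $g(\lambda^-_{\gamma_r,a})$ converges to $g(\lambda^*)>0$. Combining the two pieces gives $\liminf_{R\to\infty}\frac1R\sum_{r=1}^R Y_r\geq g(\lambda^*)>0$ almost surely, so $\log E_t^{\text{PrPl}}(\Hcal_a^-,b)=\sum_{r=1}^{N_t(a)}Y_r\to\infty$ and hence $E_t^{\text{PrPl}}(\Hcal_a^-,b)\to\infty$ almost surely, completing the argument.

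The main obstacle is the nonstationarity of the setting: only the conditional mean is pinned at $\mu_a$, so no i.i.d.\ law applies, and the bets $\lambda^-_{\gamma_r,a}$ are themselves data-dependent. The two tools that resolve this are (i) the martingale strong law for bounded difference sequences, which simultaneously delivers the convergence of $\hat\mu_{t-1}(a)$ and the vanishing of the centered $Y_r$, and (ii) Lemma~\ref{lem:worst_case_instance}, which lower bounds the conditional growth under the worst-case distribution in $\Pcal(\mu_a)$ by its Bernoulli value, reducing the problem to positivity of the scalar function $g$.
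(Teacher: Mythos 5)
Your proof is correct, and it takes a genuinely different route from the paper's. The paper's Step~1 is a pathwise sandwich argument: it fixes $\epsilon_1>0$ with $f(\epsilon_1)=d(\mu_a,\xi)/p>0$, invokes the strong law once to get an almost surely finite time $t_1(\omega)$ after which $|\hat\mu_{t-1}(a)-\mu_a|<\epsilon_1$, splits $\log E_t$ into the finite prefix up to $t_1(\omega)$ plus a tail, and lower-bounds each tail increment by replacing the plug-in bet with the worst admissible \emph{constant} bet on each of the events $\{X_i\geq\xi\}$ and $\{X_i<\xi\}$ (namely $\frac{\mu_a-\epsilon_1-\xi}{\xi(1-\xi)}$ and $\frac{\mu_a+\epsilon_1-\xi}{\xi(1-\xi)}$, respectively); a second strong-law application to this constant-bet envelope yields a positive Cesàro limit $f(\epsilon_1)$, hence divergence. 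You instead decompose each increment $Y_r$ into a bounded martingale-difference part, annihilated by the martingale SLLN, plus the drift $\EE[Y_r\mid\Fcal_{\gamma_r-1}]$, which you lower-bound by $g(\lambda^-_{\gamma_r,a})$ through a \emph{conditional} application of Lemma~\ref{lem:worst_case_instance}, concluding via $\lambda^-_{\gamma_r,a}\to\lambda^*$ and continuity of $g$. Your route is actually more faithful to the nonstationary setting: the paper's second invocation of ``the strong law of large numbers'' to assert the sign-split average converges exactly to $f(\epsilon_1)$ is literally valid only when the rewards are i.i.d.\ Bernoulli; for a general $P\in\Pcal(\mu_a)$, where only conditional means are pinned, one needs precisely your centered-plus-drift decomposition (or the observation that the paper's sign-split envelope is itself concave, so the worst-case lemma extends to it conditionally). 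What the paper's proof buys in exchange is its Step~2 (via Lemma~\ref{lem:convergence_bandit}), which bootstraps finiteness of all $\tau_a$ under the elimination sampler; since the lemma as stated already assumes $N_t(a)\to\infty$ almost surely, your proof covers the statement as written without it. Two points worth making explicit in your write-up: (i) the martingale SLLN applies to $(X_{\gamma_r}-\mu_a)_r$ because $\{\gamma_r=t\}$ is determined by $\Fcal_{t-1}$ and $A_t$, so the optionally skipped sequence is a bounded martingale difference sequence with respect to $(\Fcal_{\gamma_r})_r$; and (ii) strict monotonicity of $g$ is unnecessary --- concavity with $g(0)=0$ already gives $g(\lambda)\geq \frac{\lambda}{\lambda_{\mathrm{opt}}}\,d(\mu_a,\xi)>0$ for all $\lambda\in(0,\lambda_{\mathrm{opt}}]$, which suffices for $g(\lambda^*)>0$.
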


\begin{proof}[Proof of Lemma \ref{lem:test_power_1}]

We provide a proof sketch, then provide each step below.

\paragraph{Outline.} We first show that when we have a single arm, i.e., $K = 1$ with fixed sampling scheme that samples the single arm, one of our $e$-process among $E_t(\Hcal_a^+, b)$ $E_t(\Hcal_a^- , b)$ diverges to infinity almost surely, and thus $\tau_a = \inf\{t \in \NN: \max(E_t(\Hcal_a^+, b), E_t(\Hcal_a^- , b)) \geq 2K/\delta \}$ is finite almost surely. Then, using a result from \cite{fact_e1}, we generalize our result to show that all stopping times $\tau_a$ for $a \in [K]$, $K \geq 1$ are finite almost surely.

\paragraph{Step 1: Finite Stopping Times Almost Surely for a Single Arm.} We consider the case where $\Hcal_a^+$ is true; the proof when $\Hcal_a^-$ is true is symmetric and follows the same arguments. We first show that for the test martingale using oracle knowledge, i.e., $\lambda_{a}^- = \frac{\mu_a - \xi}{\xi(1-\xi)}$, the worst-case $e$-power is strictly positive. Note that under our assumptions that $\mu_a \in [\xi(1-b), b(-\xi) + \xi]$, it implies that $\lambda_{a}^- \in [-\frac{b}{1-\xi}, \frac{b}{\xi}]$, so we do not need to threshold its value at $b/\xi$.  
\begin{align}
    \inf_{P \in \Pcal(\mu_a)} \frac{\sum_{i=1}^t\EE[\log(1+\lambda_{a}^- (X_i - \xi))]}{t}  &= \frac{\sum_{i=1}^t\EE_{X_i \sim \text{Bern}(\mu_a)}[\log(1+\lambda_{a}^- (X_i - \xi))]}{t} \label{line:kl_diff} \\
    & = \EE_{X \sim \text{Bern}(\mu_a)}[\log\left(1 + \lambda_{a}^-(X_i - \xi)\right)] \\ 
    & = \mu_a\log\frac{\mu_a}{\xi} + (1-\mu_a)\log\frac{1-\mu_a}{1-\xi} \\
    & = d(\mu_a, \xi),
\end{align}
where $d(\mu_a, \xi)$ is the KL-divergence between two Bernoulli distributions with mean $\mu_a$ and $\xi$. Note that our reduction in line \ref{line:kl_diff} is a direct application of Lemma \ref{lem:worst_case_instance}. Under our assumptions that $\mu_a \neq \xi$ for all $\mu_a$, $d(\mu_a, \xi)$ is strictly larger than 0. We now construct a function $f(\epsilon)$, which is equivalent to line \ref{line:kl_diff} when $\epsilon_1 = 0$:
\begin{align}
    f(\epsilon) &= \EE_{X_i \sim \text{Bern}(\mu_a)}\left[\mathbf{1}[X_i \geq \xi] \log\left(1 + \frac{(\mu_a - \epsilon) - \xi}{\xi(1-\xi)}(X_i - \xi) \right)\right] +  \\
    &\quad\quad \EE_{X_i \sim \text{Bern}(\mu_a)}\left[\mathbf{1}[X_i < \xi] \log\left(1 + \frac{(\mu_a + \epsilon) - \xi}{\xi(1-\xi)}(X_i - \xi) \right)\right].
\end{align}
Note that $f(\epsilon) = d(\mu_a, \xi)$ when $\epsilon_1 = 0$, and because $f(\epsilon_1)$ is continuous in $\epsilon$, there exists an $\epsilon_1$ such that $f(\epsilon_1) = d(\mu_a , \xi)/p$ where $p \in (1, \infty)$ is some fixed constant. We now use two applications of the strong law of large numbers to obtain that $E_{t}(\Hcal_1^-, b)$ diverges to infinity almost surely, which directly implies that $\tau_1$ is finite. 

We consider the set of sample path $\omega \in \Omega$, such that $\PP(\Omega) = 1$. For each random variable, we index by the sample path $\omega$, i.e., $\hat\mu_{t-1}(a, \omega)$, $X_{t}(\omega), E_t(\Hcal_a^-, b, \omega)$, and $\lambda_{t,a}^-(\omega)$. By the strong law of large numbers, we know that for all $\omega \in \Omega$, there exists $t_1(\omega) < \infty$ such that guarantees the following for all $t > t_1(\omega)$, $|\hat\mu_{t-1}(a, \omega) - \mu_a|  < \epsilon_1$. We write the log of our $e$-process $E_t(\Hcal_a^-, b)$ as follows:
\begin{align}
    \log(E_t(\Hcal_a^-, b, \omega)) = \underbrace{\sum_{i=1}^{t_1(\omega)} \log( 1 + \lambda_{t,a}^-(\omega) (X_i(\omega) - \xi) )}_{(a)} + \underbrace{\sum_{i=t_1(\omega)+ 1}^{t} \log( 1 + \lambda_{t,a}^-(\omega) (X_i(\omega) - \xi) )}_{(b)} 
\end{align}

Note that summation $(a)$ is finite due to $t_1(\omega) < \infty$, and each term within $(a)$ being bounded. We now show that summation $(b)$ diverges to infinity below. We first rewrite $(b)$ as follows:
\begin{align}
    (b) &= \sum_{i=t_1(\omega)+ 1}^{t} \mathbf{1}[X_i \geq \xi]\log\left( 1 + \frac{\hat\mu_{t-1}(a,\omega) - \xi}{\xi(1-\xi)}(X_i(\omega) - \xi) \right) \\ 
    &\quad\quad + \sum_{i=t_1(\omega)+ 1}^{t} \mathbf{1}[X_i < \xi] \log\left( 1 + \frac{\hat\mu_{t-1}(a,\omega) - \xi}{\xi(1-\xi)} (X_i(\omega) - \xi) \right).
\end{align}
Recall that for any $i > t_1(\omega)$, $|\hat\mu_{t-1}(a, \omega) - \mu_a| < \epsilon_1$. Thus, we can compare $(b)$ to a strictly smaller term for any realization  $\left(X_i(\omega)\right)_{i \in \NN, \omega \in \Omega}$: 
\begin{align}
    (b) \geq & \sum_{i=t_1(\omega)+ 1}^{t} \mathbf{1}[X_i \geq \xi]\log\left( 1 + \frac{\mu_a - \epsilon_1 - \xi}{\xi(1-\xi)}(X_i(\omega) - \xi) \right) \label{line:ineq_power_1}\\ 
    &\quad\quad + \sum_{i=t_1(\omega)+ 1}^{t} \mathbf{1}[X_i < \xi] \log\left( 1 + \frac{\mu_a + \epsilon_1 - \xi}{\xi(1-\xi)} (X_i(\omega) - \xi) \right).\label{line:ineq_power_2}
\end{align}
By another application of the strong law of large numbers, the RHS converges to a constant as $t \rightarrow \infty$ for all $\omega \in \Omega$, i.e. 
\begin{align}
    &\frac{\sum_{i=t_1(\omega)+ 1}^{t} \mathbf{1}[X_i \geq \xi]\log\left( 1 + \frac{\mu_a - \epsilon_1 - \xi}{\xi(1-\xi)}(X_i(\omega) - \xi) \right)+ \sum_{i=t_1(\omega)+ 1}^{t} \mathbf{1}[X_i < \xi] \log\left( 1 + \frac{\mu_a + \epsilon_1 - \xi}{\xi(1-\xi)} (X_i(\omega) - \xi) \right)}{t - t_1(\omega) - 1}\\
    &\rightarrow f(\epsilon_1) = d(\mu_a, \xi)/p > 0.
\end{align} 
This implies that the non-normalized term diverges to infinity, and by lines \ref{line:ineq_power_1} and \ref{line:ineq_power_2}, this implies that $(b)$ also must diverge to infinity for all $\omega \in \Omega$. Because $\log(E_t(\Hcal_a^-, b, \omega)) = \underbrace{(a)}_{\text{finite}} + \underbrace{(b)}_{\rightarrow \infty}$ for all $\omega \in \Omega$, $E_t(\Hcal_a^-, b)$ also diverges to infinity almost surely, and therefore $\tau_a = \inf\{t \in \NN: \max(E_t(\Hcal_a^+, b), E_t(\Hcal_a^- , b)) \geq 2K/\delta \}$ is finite almost surely. This concludes the proof when $\Hcal_a^+$ is true.
By the same argument, when $\Hcal_a^-$ is true, $E_t(\Hcal_a^+, b)$ diverges to infinity almost surely, and thus $\tau_a$ is finite.

\paragraph{Step 2: Generalizing to Multiple Arms.}

We generalize our result to multiple arms under our sampling scheme by using Lemma \ref{lem:convergence_bandit} below.

\begin{lemma}[Fact E.1, \citealp{fact_e1}]\label{lem:convergence_bandit}
        Suppose that $Y_n \rightarrow Y$ a.s. as $n \rightarrow \infty$, and $N(t) \rightarrow \infty$ a.s. as $t \rightarrow \infty$. Then $Y_{N(t)} \rightarrow Y$ a.s. as $t \rightarrow \infty$.
\end{lemma}

Note that as $t \rightarrow \infty$, there exists at least one arm $a_1^*$ such that $N_t(a_1^*) \rightarrow \infty$. Note that as $N_t(a_1^*) \rightarrow \infty$, $E_t(\Hcal_{a_1^*}^-, b) \rightarrow \infty$ almost surely, so $\tau_{a_1^*}$ is finite almost surely. Note that after $\tau_{a_1^*} < \infty$, we stop sampling arm $a_1^*$, so there must now be a new arm $a_2^*$ such that $N_t(a_2^*) \rightarrow \infty$ almost surely. By the same argument, $\tau_{a_2^*} < \infty$ almost surely. We repeat this argument until all arms are labeled, resulting in $\tau_a < \infty$ for all $a \in [K]$.

\end{proof}

Finally, we introduce one final lemma that enables our analysis of the stopping time. This provides the relationship between the joint and product probabilities of events $\mathbf{1}[\tau_a \geq t]$ and $\mathbf{1}[A_t \neq a]$. 

\begin{lemma}[Joint Probabilities are less than Product of Probabilities.]\label{lem:product_greater_joint}
For all $t \in \NN$, 
$$\PP(\tau_a \geq t, A_t \neq a) \leq \PP(\tau_a \geq t)\PP(A_t \neq a).$$ 
\end{lemma}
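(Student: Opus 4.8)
The plan is to exploit a single structural feature of Algorithm~\ref{alg:i-GAI}: once arm $a$ is labeled it is removed from the candidate set $\Ical$ and is never sampled again. Concretely, because $A_t$ is always selected from the unlabeled set $\Ical_{t-1}$, the event $\{A_t = a\}$ forces $a \in \Ical_{t-1}$, i.e. arm $a$ has not yet been labeled by the end of round $t-1$. Since $\{\tau_a \geq t\}$ is exactly the event $\{a \in \Ical_{t-1}\}$, this yields the deterministic containment
\[
\{A_t = a\} \subseteq \{\tau_a \geq t\}.
\]
This containment is the entire engine of the proof; everything else is elementary, and no independence or martingale property is needed.

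First I would record the probabilistic consequence of the containment: since $\{A_t = a\} \subseteq \{\tau_a \geq t\}$, we have $\PP(A_t = a, \tau_a \geq t) = \PP(A_t = a)$. Writing $p = \PP(\tau_a \geq t)$ and $q = \PP(A_t \neq a)$, so that $\PP(A_t = a) = 1 - q$, I then decompose the event $\{\tau_a \geq t\}$ according to whether arm $a$ is pulled at time $t$:
\[
p = \PP(\tau_a \geq t, A_t = a) + \PP(\tau_a \geq t, A_t \neq a) = (1-q) + \PP(\tau_a \geq t, A_t \neq a).
\]
Rearranging gives the exact identity $\PP(\tau_a \geq t, A_t \neq a) = p - 1 + q$.

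Finally I would verify the target inequality $\PP(\tau_a \geq t, A_t \neq a) \leq \PP(\tau_a \geq t)\,\PP(A_t \neq a)$ by substituting this identity, reducing the claim to the scalar inequality $p - 1 + q \leq pq$. This is equivalent to $(1-p)(1-q) \geq 0$, which holds since $p,q \in [0,1]$, completing the argument.

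The only real subtlety — and hence the ``main obstacle,'' though it is a mild one — is the bookkeeping needed to justify the containment cleanly: one must confirm that $\{\tau_a \geq t\}$ is precisely the event that arm $a$ survives in $\Ical_{t-1}$, and that the sampling step draws only from $\Ical_{t-1}$, so that a pull of arm $a$ at time $t$ is incompatible with $a$ having been labeled strictly before $t$. Once this set-membership point is pinned down, the result follows from pure containment together with the elementary inequality $(1-p)(1-q) \geq 0$.
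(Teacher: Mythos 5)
Your proof is correct and is essentially the paper's argument: both rest solely on the no-resampling property of Algorithm~\ref{alg:i-GAI}, which the paper invokes as $\PP(\tau_a < t, A_t \neq a) = \PP(\tau_a < t)$ and you state in the equivalent contrapositive form $\{A_t = a\} \subseteq \{\tau_a \geq t\}$. Your exact identity $\PP(\tau_a \geq t, A_t \neq a) = p + q - 1$ followed by $(1-p)(1-q) \geq 0$ is just a cleaner packaging of the paper's longer algebraic chain, whose final line $(1 - \PP(\tau_a \geq t))\,\PP(\tau_a \geq t, A_t = a) \leq 0$ is the same quantity, since $\PP(\tau_a \geq t, A_t = a) = \PP(A_t = a)$ under the containment.
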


\begin{proof}[Proof of Lemma \ref{lem:product_greater_joint}]

The proof simply consists of algebraic manipulation, and the fact that $\PP(\tau_a < t, A_t \neq a) = \PP(\tau_a < t)$ under our sampling scheme (i.e., after an arm $a$ is labeled, it is not sampled again). 
\begin{align}
    &\PP(\tau_a \geq t, A_t \neq a) - \PP(\tau_a \geq t)\PP(A_t \neq a) = \\
    &\PP(\tau_a \geq t, A_t \neq a) - \PP(\tau_a \geq t) \left( \PP(A_t \neq a, \tau_a \geq t) + \PP(A_t \neq a, \tau_a < t) \right) = \\
    &\PP(\tau_a \geq t, A_t \neq a) - \PP(\tau_a \geq t)\left( \PP(A_t \neq a, \tau_a \geq t) + \PP(\tau_a < t) \right) =  \\ 
    &\PP(\tau_a \geq t, A_t \neq a) - \PP(\tau_a \geq t)\left( 1- \PP(\tau_a \geq t) + \PP(A_t \neq a, \tau_a \geq t) \right) =\\ 
    & \PP(\tau_a \geq t, A_t \neq a) - \PP\left(\tau_a \geq t)(1 - \PP(\tau_a \geq t, A_t \neq a) - \PP(\tau_a \geq t, A_t = a) + \PP(A_t \neq a, \tau_a \geq t) \right) = \\
    &\PP(\tau_a \geq t, A_t \neq a) - \PP(\tau_a \geq t)\left( 1 - \PP(\tau_a \geq t, A_t = a) \right) = \\
    & \PP(\tau_a \geq t, A_t \neq a) - \PP(\tau_a \geq t) + \PP(\tau_a \geq t)\PP(\tau_a \geq t, A_t = a) = \\
    & \PP(\tau_a \geq t) \PP(\tau_a \geq t, A_t = a)- \PP(\tau_a \geq t, A_t = a) = \\
    & (1- \PP(\tau_a \geq t))\PP(\tau_a \geq t, A_t =a) \leq 0
\end{align}
where the last line simply follows from the fact that $\PP(\tau_a \geq t) \leq 1$. Note that this proof also holds for $\tau_a^+ = \inf\{t \in \NN: E_t(\Hcal_a^+, b) \geq 2K/\delta\}$ (i.e., rejecting that $\mu_a >\xi$) or  $\tau_a^- = \inf\{t \in \NN: E_t(\Hcal_a^-, b) \geq 2K/\delta\}$ (i.e., rejecting that $\mu_a \leq\xi$).
\end{proof}

\subsubsection{Proof of Minimax Lower Bounds in Theorem \ref{theorem:minimax_lower_bounds_stopping_time}}

The asymptotic minimax lower bounds we provide on the stopping time directly come from \cite{kano2019good} and \cite{degenne2019pure}, who provide asymptotic bounds for the parametric case.  
\begin{theorem}[Asymptotic Lower Bounds for Bernoulli Good Arm Identification, \citealt{kano2019good}] Assume that each arm $a$ has a stationary Bernoulli distribution. Let there exist $G$ good arms, i.e. $\mu_1 > ... \mu_G > \xi$. Then, for any $i \leq G$, the asymptotic expected stopping time $\tau_{\Gcal, i}$ for any $\delta$-correct algorithm is lower bounded as follows:
\begin{align}
    \lim_{\delta \rightarrow 0} \inf_{(\pi, \tau)}\frac{\EE[\tau_{\Gcal, i}]}{\log(1/\delta)} \geq \sum_{j=1}^i \frac{1}{d(\mu_j, \xi)}.
\end{align}
\end{theorem}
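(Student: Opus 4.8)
The plan is to prove this information-theoretic lower bound by the standard change-of-measure technique for fixed-confidence pure exploration, specialized to the good-arm task. Throughout, fix the true Bernoulli instance $P$ with $\mu_1 > \cdots > \mu_G > \xi \geq \mu_{G+1} > \cdots$ and write $d_j := d(\mu_j,\xi)$; since $x \mapsto d(x,\xi)$ is increasing on $(\xi,1)$ we have $d_1 \geq d_2 \geq \cdots \geq d_G$, so the top $i$ arms are precisely the cheapest good arms to certify. The engine is the transportation lemma: for any two instances $P,Q$ differing only in their arm means $(\mu_a)$ versus $(\nu_a)$, any $\delta$-correct algorithm, and any $\Fcal_{\tau}$-measurable event $A$ at a stopping time $\tau$,
\[
\sum_{a=1}^K \EE_P[N_\tau(a)]\, d(\mu_a,\nu_a) \;\geq\; d\big(\PP_P(A),\,\PP_Q(A)\big),
\]
which follows from Wald's identity applied to the log-likelihood ratio together with the data-processing inequality, using here that $d(\cdot,\cdot)$ is the binary relative entropy.

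The key construction is the family of confusing alternatives. For each subset $S$ of the good arms $\{1,\dots,G\}$ with $|S| = G-i+1$, let $Q^{(S)}$ be the instance identical to $P$ except that $\mu_j$ is lowered to exactly $\xi$ for every $j \in S$, so these arms become bad and the per-arm relative entropy against $P$ equals $d(\mu_j,\xi)=d_j$. Under $Q^{(S)}$ there remain only $i-1$ good arms, so reaching $|\Gcal_t| = i$ necessarily declares some non-good arm good. Taking $A = \{\tau_{\Gcal,i} < \infty\}$, $\delta$-correctness forces $\PP_{Q^{(S)}}(A) \leq \delta$ (this event is contained in the mislabeling event of Definition~\ref{defn:error_control} for the instance $Q^{(S)}$), while on the true instance a complete $\delta$-correct algorithm satisfies $\PP_P(A) \geq 1-\delta$. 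Since $Q^{(S)}$ differs from $P$ only on $S$, the transportation lemma at $\tau = \tau_{\Gcal,i}$ collapses to
\[
\sum_{j \in S} \EE_P[N_{\tau_{\Gcal,i}}(j)]\, d_j \;\geq\; d(1-\delta,\delta),
\]
and $d(1-\delta,\delta)/\log(1/\delta) \to 1$ as $\delta \to 0$.

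It remains to combine these $\binom{G}{G-i+1}$ inequalities into a bound on $\EE_P[\tau_{\Gcal,i}] \geq \sum_{j=1}^G \EE_P[N_{\tau_{\Gcal,i}}(j)]$. Writing $n_j := \EE_P[N_{\tau_{\Gcal,i}}(j)]$ and $L := d(1-\delta,\delta)$, the constraints say that for every $(G-i+1)$-subset $S$ one has $\sum_{j\in S} n_j d_j \geq L$, i.e. the sum of the $G-i+1$ smallest values among $\{n_j d_j\}$ is at least $L$; minimizing $\sum_j n_j$ subject to this is a finite linear program whose optimum is $L \sum_{j=1}^i 1/d_j$, attained by setting $n_j d_j = L$ on the top $i$ arms and $0$ elsewhere. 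Dividing by $\log(1/\delta)$ and letting $\delta \to 0$ yields the claim, and repeating the argument verbatim for each $i' \leq i$ gives the full family of bounds. I expect the main obstacle to be this combinatorial step: one must show that the cheapest feasible allocation concentrates on the top $i$ arms rather than spreading mass across all $G$ good arms, which is exactly where the ordering $d_1 \geq \cdots \geq d_G$ interacts with the size $G-i+1$ of the confusing subsets (feasibility is immediate, and optimality follows from LP duality or an exchange argument using $1/d_1 \leq \cdots \leq 1/d_G$). A secondary technical point is justifying $\PP_P(\tau_{\Gcal,i}<\infty) \geq 1-\delta$ and the validity of the transportation lemma at a stopping time that is finite $P$-almost surely but may be infinite under $Q^{(S)}$, both of which are handled by the standard stopping-time version of the lemma.
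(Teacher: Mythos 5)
Your skeleton (transportation lemma plus an aggregation step) is the right genre --- note that the paper itself never proves this statement, importing it verbatim as Theorem 1 of Kano et al.\ (2019) and merely taking $\delta \to 0$, and Kano et al.'s proof is indeed a change-of-measure argument --- but the aggregation step you flagged as the main obstacle genuinely fails. The constraint family you derive, $\sum_{j \in S} n_j d_j \geq L$ for every $(G-i+1)$-subset $S$ of the good arms, does \emph{not} have LP value $L\sum_{j=1}^i 1/d_j$. Counterexample: take $G=3$, $i=2$, and $d_1 \approx d_2 \approx d_3 = d$ (the theorem requires distinct means, but they may be arbitrarily close, so continuity applies). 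Writing $v_j = n_j d_j$, your constraints read $v_1+v_2 \geq L$, $v_1+v_3 \geq L$, $v_2+v_3 \geq L$, and the uniform point $v_j = L/2$ is feasible with cost $\sum_j n_j = \tfrac{3L}{2d}$, strictly below your claimed optimum $2L/d = L(1/d_1 + 1/d_2)$. In general, in the symmetric case the LP value is exactly $\frac{G}{G-i+1}\cdot\frac{L}{d}$ (the $m$-th smallest $v_j$ must be at least $L/m$ with $m = G-i+1$, forcing total $\geq GL/m$), and $\frac{G}{G-i+1} < i$ precisely when $(i-1)(G-i) > 0$. So no exchange argument or duality can rescue the step: for every $1 < i < G$ the constraint family itself is too weak to imply the theorem (your route does work at the endpoints $i=1$ and $i=G$, which is why the small cases look fine).

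The conceptual reason is that your event $A = \{\tau_{\Gcal,i} < \infty\}$ records only that \emph{some} $i$ arms were declared good, not \emph{which}, so the LP admits ``fractional certification'' spread thinly across all $G$ good arms --- something no single run can do, since each run must fully certify $i$ specific arms. The repair, which is essentially Kano et al.'s argument, is a per-arm change of measure: for each good arm $j$, let $Q^{(j)}$ lower only $\mu_j$ to $\xi$ (or $\xi - \epsilon$ with $\epsilon \to 0$) and take $A_j = \{j \in \Gcal_{\tau_{\Gcal,i}}\}$. Then $\PP_{Q^{(j)}}(A_j) \leq \delta$, and the transportation lemma gives $\EE_P[N_{\tau_{\Gcal,i}}(j)]\, d_j \geq d(p_j, \delta) \geq p_j \log(1/\delta) - \log 2$ with $p_j := \PP_P(A_j)$. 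Since exactly $i$ arms lie in $\Gcal_{\tau_{\Gcal,i}}$ and with probability at least $1-\delta$ all of them are truly good, one gets $\sum_{j=1}^G p_j \geq i(1-\delta)$ together with the caps $p_j \leq 1$. Minimizing $\sum_j p_j/d_j$ under these constraints concentrates $p_j = 1$ on the $i$ arms with largest $d_j$, yielding $\sum_{j=1}^i 1/d_j$ as $\delta \to 0$; the per-arm caps $p_j \leq 1$ are exactly the information your subset construction discards. Your secondary points are fine as stated: means set exactly to $\xi$ are bad arms because ``good'' is strict, the stopping-time version of the transportation lemma handles $\tau$ infinite under $Q^{(S)}$, and one must restrict to algorithms with $\PP_P(\tau_{\Gcal,i} < \infty)$ near one, failing which $\EE[\tau_{\Gcal,i}] = \infty$ and the bound is vacuous.
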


We obtain this result by taking the limit with respect to $\delta$ using Theorem 1 of \cite{kano2019good}. For $\tau_{\text{stop}}$, i.e., the stopping time for labeling all arms, we directly cite Theorem 1 of \cite{degenne2019pure}.
\begin{theorem}[Asymptotic Lower Bounds for $\tau_{\text{stop}}$, \citealt{degenne2019pure}]
    Assume that each arm $a$ has a stationary Bernoulli distribution, and $\mu_a \neq \xi$ for all $a \in [K]$. Then, for any $\delta$-correct algorithm $(\pi, \tau)$, 
    \begin{align}
        \lim_{\delta \rightarrow 0} \inf_{(\pi, \tau)}\frac{\EE[\tau_{\text{stop}}]}{\log(1/\delta)} \geq \sum_{i=1}^K \frac{1}{d(\mu_i, \xi)}.
    \end{align}
\end{theorem}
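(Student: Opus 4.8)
The plan is to establish this lower bound by the standard change-of-measure (transportation) argument for pure-exploration bandits, specialized to the threshold-labeling answer. Because the stationary Bernoulli instance with mean vector $\bm\mu$ lies in $\Pcal(\bm\mu)$, a pointwise lower bound on $\EE_{\bm\mu}[\tau_{\text{stop}}]$ for this instance already lower-bounds the worst-case quantity. The central tool is the Wald-type KL identity: for two stationary Bernoulli instances $\bm\mu$ and $\bm\mu'$ run under the same $\delta$-correct policy $(\pi, \tau)$, the KL divergence between the laws of the observed trajectory stopped at $\tau$ decomposes as $\sum_{a=1}^K \EE_{\bm\mu}[N_\tau(a)]\,\mathrm{kl}(\mu_a, \mu_a')$, where $N_\tau(a)$ is the number of pulls of arm $a$ and $\mathrm{kl}(x,y) = x\log\frac{x}{y} + (1-x)\log\frac{1-x}{1-y}$ is the binary KL divergence; note $d(\mu_a, \xi) = \mathrm{kl}(\mu_a, \xi)$.

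First I would fix an arbitrary arm $a$ and construct a single-arm alternative $\bm\mu^{(a)}$ that agrees with $\bm\mu$ in every coordinate except the $a$-th, where I set $\mu_a^{(a)} = \xi$. This perturbation flips the correct label of arm $a$: if $\mu_a > \xi$ then $a$ is good under $\bm\mu$ but bad under $\bm\mu^{(a)}$, and symmetrically when $\mu_a < \xi$. Since the two instances differ only in arm $a$, the KL identity collapses to the single term $\EE_{\bm\mu}[N_\tau(a)]\, d(\mu_a, \xi)$. I would then apply the data-processing inequality to the binary event $\Ecal_a$ that the algorithm labels arm $a$ as good at $\tau_{\text{stop}}$. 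Under $\delta$-level error control (Definition \ref{defn:error_control}) we have $\PP_{\bm\mu}(\Ecal_a) \geq 1-\delta$ while $\PP_{\bm\mu^{(a)}}(\Ecal_a) \leq \delta$, so
\begin{equation}
    \EE_{\bm\mu}[N_\tau(a)]\, d(\mu_a, \xi) \;\geq\; \mathrm{kl}(1-\delta, \delta).
\end{equation}

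Summing this inequality over all $a \in [K]$ and using $\tau_{\text{stop}} = \sum_{a=1}^K N_\tau(a)$ yields
\begin{equation}
    \EE_{\bm\mu}[\tau_{\text{stop}}] \;\geq\; \mathrm{kl}(1-\delta, \delta) \sum_{a=1}^K \frac{1}{d(\mu_a, \xi)}.
\end{equation}
Dividing by $\log(1/\delta)$ and letting $\delta \to 0$, the prefactor obeys $\mathrm{kl}(1-\delta, \delta)/\log(1/\delta) \to 1$, which delivers the claimed bound. I emphasize that for $\tau_{\text{stop}}$ the per-arm perturbations suffice and \emph{add}, because every arm must be correctly labeled; this avoids the max-min-over-weights optimization needed in best-arm-identification lower bounds.

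The main obstacle is justifying the Wald-type KL identity and the data-processing step rigorously at the data-adaptive stopping time $\tau$: one must verify that $\tau$ is almost surely finite (so the stopped trajectory law is well defined), that it is a stopping time for the canonical filtration $\Fcal_t$, and that the chain-rule decomposition of the trajectory KL holds for an adaptively chosen, policy-dependent pull sequence. These are precisely the conditions under which the cited result of \cite{degenne2019pure} operates — stationary Bernoulli arms with $\mu_a \neq \xi$ — so the remaining effort is bookkeeping rather than new ideas. A secondary subtlety is that the cheapest label-flipping perturbation for arm $a$ lies exactly at the boundary $\xi$; this is handled either by taking $\mu_a^{(a)} = \xi$ directly (labeling an arm good requires strict inequality) or by a limiting argument with $\mu_a^{(a)} = \xi \mp \epsilon$, $\epsilon \downarrow 0$, and continuity of $d(\mu_a, \cdot)$.
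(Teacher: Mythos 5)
Your proposal is correct in substance, but it takes a genuinely different route from the paper: the paper does not prove this statement at all --- it imports it verbatim as Theorem 1 of \cite{degenne2019pure} (just as it imports the $\tau_{\Gcal,i}$ bounds from Theorem 1 of \cite{kano2019good}), and its only original step in that subsection is the inf-sup $\geq$ sup-inf reduction that transfers the stationary-Bernoulli bound to the nonparametric minimax statement over $\Pcal(\bm\mu)$. What you have written is a self-contained reconstruction of the argument \emph{behind} the citation: the Kaufmann--Capp\'e--Garivier transportation identity $\sum_a \EE_{\bm\mu}[N_\tau(a)]\,\mathrm{kl}(\mu_a,\mu_a')\geq \mathrm{kl}(\PP_{\bm\mu}(\Ecal),\PP_{\bm\mu'}(\Ecal))$ applied with $K$ single-coordinate alternatives that flip one label each, followed by summing over arms and using $\mathrm{kl}(1-\delta,\delta)/\log(1/\delta)\to 1$. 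Your key structural observation --- that for $\tau_{\text{stop}}$ the per-arm perturbations are disjoint, so the $K$ constraints on $\EE[N_\tau(a)]$ simply add via $\tau_{\text{stop}}=\sum_a N_\tau(a)$, with no max-min allocation problem as in best-arm identification --- is exactly why the bound has the clean form $\sum_a 1/d(\mu_a,\xi)$, and your identification $d(\mu_a,\xi)=\mathrm{kl}(\mu_a,\xi)$ checks out against the paper's definition. Two points deserve the care you partially flag. First, the boundary asymmetry is one-sided: for a good arm ($\mu_a>\xi$), setting $\mu_a^{(a)}=\xi$ does flip the label (goodness requires strict inequality), but for a bad arm ($\mu_a<\xi$) it does not, since $\mu_a^{(a)}=\xi$ is still bad; there the $\epsilon$-perturbation $\mu_a^{(a)}=\xi+\epsilon$ with $\epsilon\downarrow 0$ and continuity of $\mathrm{kl}(\mu_a,\cdot)$ is genuinely required, not optional. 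Second, a $\delta$-correct algorithm in the sense of Definition \ref{defn:error_control} need not stop almost surely under the \emph{alternative} instance (especially the near-boundary one), so applying the transportation lemma at $\tau_{\text{stop}}$ directly is not licensed; the standard repair is to apply it at truncated times $\tau\wedge T$ with events of the form $\{\tau\leq T,\ a\in\Gcal_\tau\}$ and let $T\to\infty$, or to note that $\EE_{\bm\mu}[\tau_{\text{stop}}]=\infty$ makes the bound vacuous for that algorithm. Both are routine, as you say, but they are the actual content of the cited result rather than pure bookkeeping. In exchange for this extra work, your route buys transparency the paper's citation does not provide: it exhibits where each $1/d(\mu_a,\xi)$ term originates and makes clear the bound already holds pointwise at the Bernoulli instance, which is all the paper's minimax reduction needs.
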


Note that because stationary Bernoulli arms $P$ are an instance in $\Pcal(\bm\mu)$, we immediately get our minimax-lower bounds. Let $P'$ denote the stationary Bernoulli instance with mean vector $\bm\mu$. Because the infima of supremums is greater than the suprema of infimums,
\begin{align}
    \inf_{(\pi, \tau)}\sup_{P \in \Pcal(\bm\mu)}\frac{\EE[\tau_{\Gcal, i}]}{\log(1/\delta)} \geq \sup_{P \in \Pcal(\bm\mu)}\inf_{(\pi, \tau)}\frac{\EE[\tau_{\Gcal, i}]} {\log(1/\delta)} \geq \inf_{(\pi, \tau)}\frac{\EE_{P'}[\tau_{\Gcal, i}]}{\log(1/\delta)} \label{line:repeat}
\end{align}
By taking limits, we get the lower bound in Theorem \ref{theorem:minimax_lower_bounds_stopping_time}. The same argument in line \ref{line:repeat} applies for $\tau_{\text{stop}}$ to get the asymptotic lower bound presented in Theorem \ref{theorem:minimax_lower_bounds_stopping_time}.

\subsubsection{Proof of Achieving Minimax Lower Bounds in Theorem \ref{theorem:minimax_lower_bounds_stopping_time}.} We now provide our proof for achieving the minimax lower bounds shown above. To conduct our analysis, we partition the total number of samples into the number of samples between the time of identifying successive (good) arms to obtain our results. Our proof focuses on the case of identifying \emph{one} good arm first. We then generalize our results to successively identifying good arms (or labeling arms as good or bad), under a slight modification of Algorithm \ref{alg:i-GAI} that discards samples and resets the $e$-processes after an arm has been identified.

\paragraph{Proof Sketch.} Our proof proceeds in four main steps. The first three steps are focused on the proof of optimality for the first stopping time $\tau_{\Gcal, 1}$, under the assumption that at least one good arm exists. In the first step, we leverage that expectation of one of our $e$-processes, $E_t(\Hcal_a^-, b), E_t(\Hcal_a^+, b)$, are close to the threshold value of $\frac{2K}{\delta}$. We then apply the results of Theorem \ref{thm:minmax_opt_gen_bern} to obtain a bound in terms of the expected number of arm pulls, $N_t(a)$. In the second step, we provide a lower bound on our term with the expectation of $N_t(a)$ in terms of the expected stopping time by using lemmas \ref{lem:tau_lower_bound}, \ref{lem:test_power_1}, and \ref{lem:product_greater_joint}. In Step 3, we conduct an asymptotic analysis to show that we achieve the desired result for $\tau_{\Gcal, 1}$. In our final step, we consider the number of samples required between arm pulls, and show how our results for $\tau_{\Gcal, 1}$ generalize to obtain our desired results for $\tau_{\Gcal, i}$ and $\tau_{\text{stop}}$. 



\paragraph{Step 1: Constructing Bounds in Terms of $N_{\tau_1^-}(1)$.} We consider the stopping time of identifying one arm above the threshold, $\tau_{\Gcal, 1}$. By definition, $\tau_{\Gcal, 1} \leq \tau^-_1$, where $\tau_1^- = \inf\{t \in \NN: E_t^{\text{PrPl}}(\Hcal_1^-, b) \geq \log(2K/\delta) \}$, i.e. the stopping time for rejecting null hypothesis set $\Hcal_1^-: \{\Pcal(\bm\mu):\mu_1 \leq \xi\}$. At time $\tau_1^-$, the following must hold:
\begin{align}
    &\EE\left[\sum_{t \leq \tau_1^-: A_t = 1}\log( 1+\lambda_{t,1}^-(X_t - \xi) )\right] \geq \log(2K/\delta), \\
    &\EE\left[\sum_{t \leq \tau_1^-: A_t = 1}\log( 1+\lambda_{t,1}^-(X_t - \xi) )\right] \leq \log(2K/\delta) + \log\left(1 + b\max\left(\frac{\xi}{1-\xi}, \frac{1-\xi}{\xi}\right)\right) \label{line:upper_bound}
\end{align}

The first statement is true by definition of $\tau_1^-$. The second follows from the fact that $\log(1 + \lambda_{t,1}^-(X_t - \xi))$ for $t=\tau_1^-$ must be less than $\log\left(1 + b\max\left(\frac{\xi}{1-\xi}, \frac{1-\xi}{\xi}\right)\right)$, giving us an upper bound on the value of the $e$-process $E_t^{\text{PrPl}}(\Hcal_a^-, b)$  at stopping time $t = \tau_1^-$. We now provide a lower bound on the value of $\EE\left[\sum_{t \leq \tau_1^-: A_t = 1}\log( 1+\lambda_{t,1}^-(X_t - \xi) )\right]$ in terms of $N_{\tau_1^-}(1)$:

\begin{align}
    \EE\left[\sum_{t \leq \tau_1^-: A_t = 1}\log( 1+\lambda_{t,1}^-(X_t - \xi) )\right] & = \EE\left[ N_{\tau_1^-}(1) \frac{\sum_{t: A_t = 1}^{\tau_1^-
} \log(1+ \lambda_{t,1}^-(X_t - \xi )}{N_{\tau_1^-}(1)} \right] \\
    &=\EE \left[ N_{\tau_1^-}(1) \ \EE \left[\frac{1}{n} \sum_{t: A_t = 1}^{\tau_1^-}  \log(1+ \lambda_{t,1}^-(X_t - \xi )) \ \big{|} \ N_{\tau_1^-}(1) = n \right ] \right] \\
    &\geq \EE \left[ N_{\tau_1^-}(1) \ \EE \left[ \log\frac{1-\mu_1}{1-\xi} + \mu_1\log\frac{\mu_1(1-\xi)}{\xi(1-\mu_1)} - \eta'\sqrt{\frac{\log n}{n}} \ \Bigg{|} \ N_{\tau_1^-}(1) = n \right ] \right] \label{line:lower_bdd}  \\
    &= \left(\log\frac{1-\mu_1}{1-\xi} + \mu_1\log\frac{\mu_1(1-\xi)}{\xi(1-\mu_1)}\right)\EE\left[N_{\tau_1^-}(1)\right] \\
    &\quad - \eta'\EE\left[N_{\tau_1^-}(1) \EE\left[\sqrt{\frac{\log n}{n }} \ \Bigg{|} \  N_{\tau_1^-}(1) = n\right]\right] \label{line:tousejensen}
\end{align}
where $\eta' \in \RR^+$ is a constant that does not depend on $N_{\tau_1^-}(1)$, and line \ref{line:lower_bdd} is a direct application of Theorem \ref{thm:minmax_opt_gen_bern}. We then note that the function $f(n) = \sqrt{\frac{\log n}{n}}$ is a concave function for all $n \geq 54$, which can be checked with a simple second-derivative test:
$$\frac{\partial^2}{\partial^2 n} f(n) = \frac{\left(\frac{\log n}{n}\right)^{3/2}(3- \log^2(n)) }{4\log^4(n)},$$
By Lemma \ref{lem:tau_lower_bound}, $N_{\tau_1^-}(1) \geq \frac{\log(2K/\delta)}{\log\left(1+ b\max\left(\frac{\xi}{1-\xi}, \frac{1-\xi}{\xi}\right)\right)}$ almost surely, and therefore for  small enough $\delta$, $\sqrt{\frac{\log N_{\tau_1^-}(1)}{N_{\tau_1^-}(1)}}$ is concave almost surely. Using concavity, we apply Jensen's inequality to the term on line \ref{line:tousejensen} to obtain our desired lower bound:

\begin{align}
    \EE&\left[\sum_{t \leq \tau_1^-: A_t = 1}\log( 1+\lambda_{t,1}^-(X_t - \xi) )\right] = \left(\log\frac{1-\mu_1}{1-\xi} + \mu_1\log\frac{\mu_1(1-\xi)}{\xi(1-\mu_1)}\right)\EE\left[N_{\tau_1^-}(1)\right] \\
    &\qquad - \eta'\EE\left[N_{\tau_1^-}(1) \EE\left[\sqrt{\frac{\log n}{n }} \ \Bigg{|} \  N_{\tau_1^-(1)} = n\right]\right] \\ 
    &\quad \geq \left(\log\frac{1-\mu_1}{1-\xi} + \mu_1\log\frac{\mu_1(1-\xi)}{\xi(1-\mu_1)}\right)\EE\left[N_{\tau_1^-}(1)\right]  \\
     &\qquad - \eta'\EE\left[N_{\tau_1^-}(1) \sqrt{\frac{\log \EE\left[N_{\tau_1^-}(1)\right]}{\EE\left[N_{\tau_1^-}(1)\right]}}\right] \\
     &\quad=  \left(\log\frac{1-\mu_1}{1-\xi} + \mu_1\log\frac{\mu_1(1-\xi)}{\xi(1-\mu_1)}\right)\EE\left[N_{\tau_1^-}(1)\right] - \eta'\sqrt{\EE\left[N_{\tau_1^-}(1)\right]\log \EE\left[N_{\tau_1^-}(1)\right]}.
\end{align}

By combining our lower bound with the upper bound shown in line \ref{line:upper_bound}, we obtain the inequality
\begin{align}
    &\EE\left[N_{\tau_1^-}(1)\right]\left(\left(\log\frac{1-\mu_1}{1-\xi} + \mu_1\log\frac{\mu_1(1-\xi)}{\xi(1-\mu_1)}\right) - \eta'\sqrt{\frac{\log \EE\left[N_{\tau_1^-}(1)\right]}{\EE\left[N_{\tau_1^-}(1)\right]}}\right)  \leq \\
    &\quad \log(2K/\delta) + \log\left(1 + b\max\left(\frac{\xi}{1-\xi}, \frac{1-\xi}{\xi}\right)\right). \label{lines:final_steps}
\end{align}

\paragraph{Step 2: Bounding Arm Pulls as a Function of Stopping Time.} We now focus on the two main terms on the left side of our inequality, terms $(a)$ and $(b)$:

$$\underbrace{\EE\left[N_{\tau_1^-}(1)\right]}_{(a)}\underbrace{\left(\left(\log\frac{1-\mu_1}{1-\xi} + \mu_1\log\frac{\mu_1(1-\xi)}{\xi(1-\mu_1)}\right) - \eta'\sqrt{\frac{\log \EE\left[N_{\tau_1^-}(1)\right]}{\EE\left[N_{\tau_1^-}(1)]\right]}}\right)}_{(b)} $$

We now seek to lower bound terms $(a)$ and $(b)$, which in turn lower bounds our total expression. We begin by providing an lower bound for term $(b)$.

The only term we have to bound in $(b)$ is the term $\sqrt{\frac{\log \EE\left[N_{\tau_1^-}(1)\right]}{\EE\left[N_{\tau_1^-}(1)\right]}}$. Note that the function $f(n) = \sqrt{\frac{\log n}{n}}$ is a monotonically decreasing function for $n > 10$. By Lemma \ref{lem:tau_lower_bound}, $N_{\tau_1^-}(1) \geq \frac{\log(2K/\delta)}{\log\left(1+ b\max\left(\frac{\xi}{1-\xi}, \frac{1-\xi}{\xi}\right)\right)}$ almost surely and therefore $\EE\left[N_{\tau_1^-}(1)\right]$ is lower bounded by $\frac{\log(2K/\delta)}{\log\left(1+ b\max\left(\frac{\xi}{1-\xi}, \frac{1-\xi}{\xi}\right)\right)}$ as well. Therefore, for small enough $\delta$, $\sqrt{\frac{\log \EE\left[N_{\tau_1^-}(1)\right]}{\EE \left[N_{\tau_1^-}(1)\right]}}$ is monotonically decreasing with respect to $\EE\left[N_{\tau_1^-}(1)\right]$. Thus, we seek to provide a \emph{lower bound} on $\EE\left[N_{\tau_1^-}(1)\right]$ to obtain a lower bound for expression $(b)$. 

We provide this bound by using Lemma \ref{lem:tau_lower_bound}, which provides an almost sure lower bound as a function of $\log(1/\delta)$. Because $m(\delta) = \frac{\log(2K/\delta)}{\log\left(1 + b\max\left(\frac{\xi}{1-\xi}, \frac{1-\xi}{\xi}\right)\right)}$ is an almost sure lower bound on $N_{\tau_1^-}(1)$, it is also an almost sure lower bound on $\tau_1^- \geq N_{\tau_1^-}(1)$. We first lower bound $\EE\left[N_{\tau_1^-}(1)\right]$ by $\EE\left[N_{m(\delta)}(1)\right]$:

\begin{align}
    \EE[N_{\tau_1^-}(1)] &= \EE\left[\sum_{t=1}^{\tau_1^-} \mathbf{1}[A_t = 1]\right]\\
    &= \EE\left[\sum_{t=1}^{m(\delta)} \mathbf{1}[A_t = 1]\right] + \EE\left[\sum_{t=m(\delta)+1}^{\tau_1^-} \mathbf{1}[A_t = 1]\right]\\
    & \geq \EE\left[\sum_{t=1}^{m(\delta)} \mathbf{1}[A_t = 1]\right] = \EE[N_{m(\delta)}(1)].
\end{align}

To proceed further with our lower bounds, we now turn to the conditions presented in Definition \ref{defn:regret_minimizing_scheme}, which states that there exists a $c \in \RR^+$ not dependent on $t$ such that $\PP(A_t \neq 1) \leq c/t$. Then, number of sub-optimal arm pulls, i.e. $\EE[N_t(a)] = \EE[\sum_{i=1}^t\mathbf{1}[A_t \neq 1]] \leq c(\log(t)+1)$ for all $a \neq 1.$ Because $m(\delta)$ is a fixed sample size (does not depend on data realization),  
\begin{align}
    \EE[N_{m(\delta)}(1)] &\geq m(\delta) - c\log\left(m(\delta)\right) - c\\
    &= \frac{\log(2K/\delta)}{\log(1 + b\max\left(\frac{\xi}{1-\xi}, \frac{(1-\xi)}{\xi})\right)} - c\log\left(\frac{\log(2K/\delta)}{\log(1 + b\max\left(\frac{\xi}{1-\xi}, \frac{(1-\xi)}{\xi})\right)}\right) - c.
\end{align}

Plugging this expression back into term $(b)$, we obtain the following, where $d(\mu_1, \xi) = \log\frac{1-\mu_1}{1-\xi} + \mu_1 \log\frac{\mu_1(1-\xi)}{\xi(1-\mu_1)}$, 
\begin{align}
    (b) &\geq d(\mu_1, \xi) - \eta'\sqrt{\frac{\log \EE[N_{\tau_1^-}(1)]}{\EE[N_{\tau_1^-}(1)]}} \\
    &\geq d(\mu_1, \xi) - \eta'\sqrt{\frac{\log(f(\delta))}{f(\delta)}}.
\end{align}
where $f(\delta) = \frac{\log(2K/\delta)}{\log\left(1 + b\max\left(\frac{\xi}{1-\xi}, \frac{1-\xi}{\xi}\right)\right)} - c\log\left(\frac{\log(2K/\delta)}{\log\left(1 + b\max\left(\frac{\xi}{1-\xi}, \frac{1-\xi}{\xi}\right)\right)}\right) - c$. Note that as $\delta \rightarrow 0$, $f(\delta) \rightarrow \infty$ due to $c, \eta' \in \RR^+$ being a constant value that does not depend on $\delta$ (equivalently $t$). 

We now turn to bounding term $(a)$. To construct our bounds in terms of $\EE[\tau_1^-]$, we leverage the lower bound on $\tau_1^- \geq N_{\tau_1^-}(1)$ given in Lemma \ref{lem:tau_lower_bound} to obtain a range of $t$ where the probability of stopping is zero. We then consider $t$ where we have nonzero probability of stopping, and apply Lemma \ref{lem:product_greater_joint} to obtain our results. We first start by re-expressing $(a)$ in terms of $\EE[\tau_1^-]$. 

\begin{align}
    (a) &= \EE\left[\tau_1^{-} - \sum_{a\neq 1} N_{\tau_1^-}(a)\right] \\
    &= \EE[\tau_1^-] - \EE\left[\sum_{t=1}^{\tau_1^-} \mathbf{1}[A_t \neq 1]\right] \\
    &= \EE[\tau_1^-] - \EE\left[\sum_{t=1}^{m(\delta)} \mathbf{1}[A_t \neq 1] + \sum_{t=1 + m(\delta)}^{\tau_1^-} \mathbf{1}[A_t \neq 1]\right]\\
    &\geq \EE[\tau_1^-] - \left( c \sum_{t=1}^{m(\delta)} \frac{1}{t} \right) - \EE\left[\sum_{t=1+m(\delta)}^{\tau_1^-}\mathbf{1}[A_t \neq 1]\right]\\
    &\geq \EE[\tau_1^-] - c\left(\log(m(\delta)) + 1 \right) - \underbrace{\EE\left[\sum_{t=1+m(\delta)}^{\tau_1^-}\mathbf{1}[A_t \neq 1]\right]}_{(c)}.
\end{align}

We provide an upper bound for the term $(c)$. First, we can rewrite term $(c)$, which is finite due to $(c) \leq \EE[\tau_1^-] < \infty$ (proven in Lemma \ref{lem:test_power_1}). This allows us to rearrange sums within the expectation, resulting in the following:
\begin{align}
    (c) &= 
    \EE\left[\sum_{T = 1 + m(\delta)}^\infty \mathbf{1}[\tau_1^- = T] \sum_{t = 1+ m(\delta)}^T \mathbf{1}[A_t \neq 1]\right]  \\
    &=\EE\left[\sum_{T = 1 + m(\delta)}^\infty  \sum_{t = 1+ m(\delta)}^T \mathbf{1}[\tau_1^- = T] \mathbf{1}[A_t \neq 1]\right]  \\
    &= \EE\left[ \sum_{t=1+m(\delta)}^\infty \mathbf{1}[\tau_1^- \geq t, A_t \neq 1] \right] \\ 
    &= \sum_{t=1+m(\delta)}^\infty \PP(\tau_1^- \geq t, A_t \neq 1) 
\end{align}

By Lemma \ref{lem:product_greater_joint}, $\PP(\tau_1^- \geq t,A_t \neq 1) \leq \PP(\tau_1^- \geq t) \PP(A_t \neq 1)$, and therefore we obtain the following bound for $(c)$:
\begin{align}
    (c) = \sum_{t=1+m(\delta)}^\infty \PP(\tau_1^- \geq t, A_t \neq 1)  \leq \frac{c}{m(\delta)} \sum_{t=1+m(\delta)}^\infty \PP(\tau_1^- \geq t) = \frac{c}{m(\delta)} \EE[\tau_1^-]. 
\end{align}




By plugging in $m(\delta)$ for expression $(a)$, we obtain the following lower bound:
\begin{align}
    (a) &\geq \EE[\tau_1^-]\left(1 - \frac{c}{m(\delta)}\right) - c\left(\log(m(\delta)) + 1 \right) \\
    &= \EE[\tau_1^-]\left(1 - c\frac{\log\left(1 + b\max\left(\frac{\xi}{1-\xi}, \frac{1-\xi}{\xi}\right)\right)}{\log(2K/\delta)}\right) - c\log\left(\frac{\log(2K/\delta)}{\log\left(1 + b\max\left(\frac{\xi}{1-\xi}, \frac{1-\xi}{\xi}\right)\right)}\right) -c\\ 
    &= \EE[\tau_1^-]\left(1 - O\left(\frac{1}{\log(1/\delta)}\right)\right) - O(\log\log(1/\delta)).
\end{align}

\paragraph{Step 3: Asymptotic Analysis.} Having bounds on terms $(a)$ and $(b)$ in Step 2, we now construct the desired result by taking the limit $\alpha \rightarrow 0$. Combining our bounds, we obtain:
\begin{align}
    (a)\times (b) \geq  \left(\EE[\tau_1^-]\left(1 - O\left(\frac{1}{\log(1/\delta)}\right)\right) - O(\log\log(1/\delta)) \right) \times \left( d(\mu_1, \xi) - O\left(\sqrt{\frac{\log(f(\delta))}{f(\delta)}}\right) \right).
\end{align}
By lines \ref{lines:final_steps}, we have an upper bound for the LHS:
\begin{align}
    \log(2K/\delta) + \log(1 + b\max(\frac{\xi}{1-\xi}, \frac{1-\xi}{\xi})) = \log(2K) + \log(1/\delta) +  \log(1 + b\max(\frac{\xi}{1-\xi}, \frac{1-\xi}{\xi})) \geq (a) \times (b).
\end{align}
By rearranging terms, we obtain the following inequality:
\begin{align}
    &\frac{\left(\EE[\tau_1^-](1 - O(\frac{1}{\log(1/\delta)})) - O(\log\log(1/\delta)) \right) \times \left( 1 - \frac{1}{d(\mu_1, \xi)}O\left(\sqrt{\frac{\log(f(\delta))}{f(\delta)}}  \right) \right)}{\log(1/\delta)} \leq \\
    &\quad \frac{\log(2K) + \log(1 + b\max(\frac{\xi}{1-\xi}, \frac{1-\xi}{\xi})) }{d(\mu_1, \xi) \log(1/\delta)} + \frac{1}{d(\mu_1, \xi)}
\end{align}
By taking the $\limsup$ as $\delta \rightarrow 0$ on both sides, and by the fact that $f(\delta) \rightarrow \infty$ when $\delta \rightarrow 0$, 
\begin{align}
    &\lim_{\delta\rightarrow 0}\sup_{P \in \Pcal(\bm\mu)}\frac{\left(\EE[\tau_1^-](1 - O(\frac{1}{\log(1/\delta)})) - O(\log\log(1/\delta)) \right) \times \left( 1 - \frac{1}{d(\mu_1, \xi)}O\left( \sqrt{\frac{\log(f(\delta))}{f(\delta)}} \right) \right)}{\log(1/\delta)} \\
    &= \lim_{\delta\rightarrow 0}\sup_{P \in \Pcal(\bm\mu)} \frac{\EE[\tau_1^-]}{\log(1/\delta)} \leq \frac{1}{d(\mu_1, \xi)} = \frac{1}{\log\frac{1-\mu_1}{1-\xi} + \mu_1 \log\frac{\mu_1(1-\xi)}{\xi(1-\mu_1)}} \label{line:final_result}.
\end{align}

Finally, note that $\EE[\tau_{\Gcal, 1}] \leq \EE[\tau_{1}^-]$ by definition of $\tau_{\Gcal, 1}$, so we obtain our desired result for the asymptotic stopping time of identifying one good arm. 

\paragraph{Step 4: Beyond $\tau_{\Gcal , 1}$.} We can reiterate this argument as follows, under the following modifications for Algorithm \ref{alg:i-GAI}. In lines 8-9 of Algorithm \ref{alg:i-GAI}, we additionally reset $E_t(\Hcal_a^-, b) = E_t(\Hcal_a^+, b) = 1$,  $\hat\mu_{t-1}(a) = \xi$, $N_{t}(a) = 0$ for all $a \in \Ical_t$. This effectively restarts the sampling scheme $\pi$ and testing procedures as if we had collected no information up to time $\tau_{\Gcal, 1}$. We emphasize that this is for analytical convenience for analyzing the limiting stopping times $\tau_{\Gcal, i}$ and $\tau_{\text{stop}}$. In practice, discarding such information is likely to cause far worse performance than the empirical results using Algorithm \ref{alg:i-GAI} in the main body of the paper. When space permits, we will add these comments to the main body of the paper. 

Given these modifications to Algorithm \ref{alg:i-GAI}, we immediately get our desired results. We now examine the case where there exists at least two good arms. First, note that $\tau_{\Gcal, 2}$ can be rewritten as follows:
\begin{align}
    \tau_{\Gcal, 2}  = (\tau_{\Gcal, 2} - \tau_{\Gcal, 1}) + \tau_{\Gcal, 1}
\end{align}
By resetting our algorithm after $\tau_{\Gcal, 1}$, the analysis for the term $(\tau_{\Gcal, 2} - \tau_{\Gcal, 1})$ is identical to the analysis for $\tau_{\Gcal, 1}$, where the summations are now taken from time indices $\tau_1^-< t < \tau_{\Gcal,2}$. The only difference is for time indices $\tau_1^-< t < \tau_{\Gcal,2}$, there exists only $K-1$ arms. When all $K$ arms were available (i.e. before any arms were labeled as good/bad), we could upper bound $\tau_{\Gcal,1}$ with $\tau_1^-$. With only $K-1$ arms, to attain the supremum (i.e. worst-case stopping time with $K-1$ arms), w.l.o.g., we assume that arm 1 (i.e. arm with the largest mean) was eliminated first, leaving us with the next best choice of $\tau_2^-$. Assuming that arm 1 was eliminated first results in a larger asymptotic lower bound in line \ref{line:final_result} for $\tau_{\Gcal, 2} - \tau_{\Gcal, 1}$ by the fact that $\frac{1}{d(\mu_2, \xi)} > \frac{1}{d(\mu_1, \xi)}$. We repeat this argument for all $i \in [G]$, where $G = |\Gcal_{\bm\mu}|$ is the number of good arms, to obtain the desired asymptotic result for $\tau_{\Gcal, i}$:
\begin{align}
    \lim_{\delta\rightarrow 0}\sup_{P \in \Pcal(\bm\mu)} \frac{\EE[\tau_{\Gcal, i}]}{\log(1/\delta)} &= \lim_{\delta\rightarrow 0}\sup_{P \in \Pcal(\bm\mu)} \left(\sum_{j=1}^{i-1}\frac{\EE[\tau_{\Gcal, j+1}- \tau_{\Gcal, j}] }{\log(1/\delta)} + \frac{\EE[\tau_{\Gcal, 1}]} {\log(1/\delta)}\right)  \\
    &\leq \lim_{\delta \rightarrow 0} \left(\sum_{j=1}^{i-1}\sup_{P \in \Pcal(\bm\mu)}\frac{\EE[\tau_{\Gcal, j+1}- \tau_{\Gcal, j}] }{\log(1/\delta)} + \sup_{P \in \Pcal(\bm\mu)}\frac{\EE[\tau_{\Gcal, 1}]}{\log(1/\delta)}\right) \label{line:sup_line} \\
    &\leq \sum_{j=1}^i \frac{1}{d(\mu_j, \xi)}.
\end{align}
where the inequality in line \ref{line:sup_line} follows from the fact that $\sup\sum_i{x_i} \leq \sum_i \sup(x_i)$. The proof for $\tau_{\text{stop}}$ follows analogously, where once we reach $K-G$ arms, we pick $\tau_{K-G}^+$, $\tau_{K-G + 1}^+$, ... $\tau_{K}^+$ to obtain the desired result.

\end{document}